\newtheorem{theorem}{Theorem}
\newtheorem{proposition}{Proposition}
\newtheorem{remark}{Remark}
\newcommand{\tableCellHeight}{1}
\newcommand{\tabstyle}[1]{
  \setlength{\tabcolsep}{#1}
  \renewcommand{\arraystretch}{\tableCellHeight}
  \centering
  \small
}
\title{When Kernels Multiply, Clusters Unify:\\ Fusing Embeddings with the Kronecker Product}
\newcommand*\samethanks[1][\value{footnote}]{\footnotemark[#1]}
\author{%
  Youqi Wu\thanks{Department of Computer Science \& Engineering, The Chinese University of Hong Kong}\\  \texttt{yqwu24@cse.cuhk.edu.hk} \\
  % examples of more authors
  \And
  Jingwei Zhang\samethanks[1] \\  \texttt{jwzhang22@cse.cuhk.edu.hk} \\
  \And
  Farzan Farnia\samethanks[1] \\  \texttt{farnia@cse.cuhk.edu.hk} \\
}
\begin{document}

\maketitle

\begin{abstract}
State-of-the-art embeddings often capture distinct yet complementary discriminative features: For instance, one image embedding model may excel at distinguishing fine-grained textures, while another focuses on object-level structure. Motivated by this observation, we propose a principled approach to fuse such complementary representations through \emph{kernel multiplication}. Multiplying the kernel similarity functions of two embeddings allows their discriminative structures to interact, producing a fused representation whose kernel encodes the union of the clusters identified by each parent embedding. This formulation also provides a natural way to construct \emph{joint kernels} for paired multi-modal data (e.g., image–text tuples), where the product of modality-specific kernels inherits structure from both domains. We highlight that this kernel product is mathematically realized via the \emph{Kronecker product} of the embedding feature maps, yielding our proposed \emph{KrossFuse} framework for embedding fusion. To address the computational cost of the resulting high-dimensional Kronecker space, we further develop \emph{RP\textminus KrossFuse}, a scalable variant that leverages random projections for efficient approximation. As a key application, we use this framework to bridge the performance gap between cross-modal embeddings (e.g., CLIP, BLIP) and unimodal experts (e.g., DINOv2, E5). Experiments show that RP\textminus KrossFuse effectively integrates these models, enhancing modality-specific performance while preserving cross-modal alignment. The project code is available at \url{https://github.com/yokiwuuu/KrossFuse}.

\end{abstract}

\section{Introduction}
The representation learning literature features a wide range of embedding models, each excelling at distinct and often complementary discriminative features. For example, one image encoder may specialize in distinguishing fine-grained categories such as dog breeds, while another captures broader semantic distinctions such as traffic signs. This contrast is illustrated in Figure~\ref{fig:key figure}: the DINOv2 image embedding~\cite{oquab2023dinov2} tends to form clearer clusters for dog breeds but shows less separation among traffic signs, whereas the CLIP image embedding~\cite{radford2021learning} exhibits the opposite trend, achieving better separation for traffic signs while mixing the two dog breeds in the kernel heatmaps. Such observations motivate the following question: \emph{how can we systematically fuse multiple embeddings to obtain a single representation that combines the discriminative strengths of all its parent embeddings?}

In this work, we view each embedding as inducing a \emph{kernel similarity function} that assigns a similarity score to every pair of samples. Evaluating this function over a reference dataset produces a \emph{kernel similarity structure}—a matrix that reflects how the embedding perceives relationships among samples and which samples it tends to cluster together. This kernel-based view allows us to analyze and combine embeddings directly in the kernel space, where our goal is to construct a fused representation whose similarity structure reflects a \emph{strict union} of the parent cluster structures: two samples should appear similar only if \emph{all} parent embeddings agree that they are similar. 

A natural and principled way to capture this ``all must agree'' logic is through \emph{kernel multiplication}, defining the fused kernel $k_{\psi_{\text{fuse}}}$ as the product of the individual kernel similarity functions $k_{\psi_1}$ and $k_{\psi_2}$ of embeddings $\psi_1$ and $\psi_2$:
\begin{equation}\label{Eq: Kernel function product}
k_{\psi_{\text{fuse}}}(x,y) \,=\, k_{\psi_1}(x,y) \cdot k_{\psi_2}(x,y).
\end{equation}
This formulation provides a simple yet effective mechanism for combining the discriminative patterns of multiple embeddings: assuming normalized kernel similarity scores bounded by~1, the fused similarity becomes small whenever any parent embedding separates the two samples\footnote{For non-negative kernels such as the Gaussian RBF or normalized even-degree polynomial kernels, the near-zero kernel similarity score directly implies separation, whereas for kernels that can take negative values (e.g., linear or cosine similarity kernels), the fusion operates through orthogonality—if two samples are nearly orthogonal in any parent space, their fused similarity remains near zero, thus preserving separation.}. The empirical effect of this operation is illustrated in Figure~\ref{fig:key figure} (right), where multiplying the CLIP and DINOv2 kernels clearly separates all four classes that each model alone fails to isolate.

A feature representation whose kernel similarity function equals the product of the individual kernels is obtained by taking the \emph{Kronecker product} of the individual embedding feature maps, which is well known in the kernel methods literature~\cite{scholkopf2002learning,berlinet2004reproducing}. Building on this insight, we introduce \emph{KrossFuse}, a general framework for embedding fusion based on Kronecker-product embeddings. Unlike simple concatenation of embedding feature vectors, which corresponds to the summation of individual kernels and lacks the discussed cluster unification property, KrossFuse yields a representation whose kernel unifies sample clusters across parent embeddings' similarities as displayed in Figures~\ref{fig:key figure},\ref{fig: heatmap image text figure}. 
 
\begin{figure}[t]
\vskip -0.2in
\begin{center}
\centerline{\includegraphics[width=14cm]{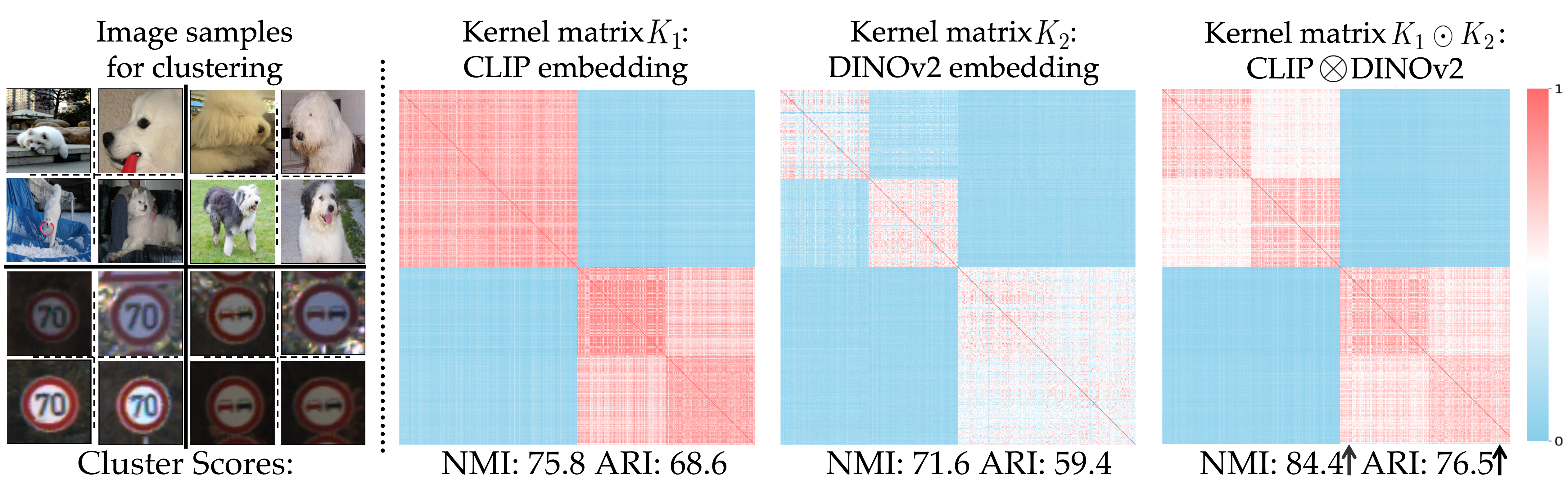}}
\caption{Heatmaps of RBF kernel similarity matrices for an image dataset with four groundtruth clusters (two dog classes in ImageNet and two traffic sign classes in GTSRB) (left) $K_1$ for CLIP, (middle) $K_2$ for DINOv2, (right) $K_1\odot K_2$ elementwise product for CLIP and DINOv2's Kronecker product. Unlike CLIP and DINOv2, their Kronecker product could cluster the four image classes.}
\label{fig:key figure}
\end{center}
\vskip -0.23in
\end{figure}
\begin{figure}
\vskip -0.2in
\begin{center}
\centerline{\includegraphics[width=14cm]{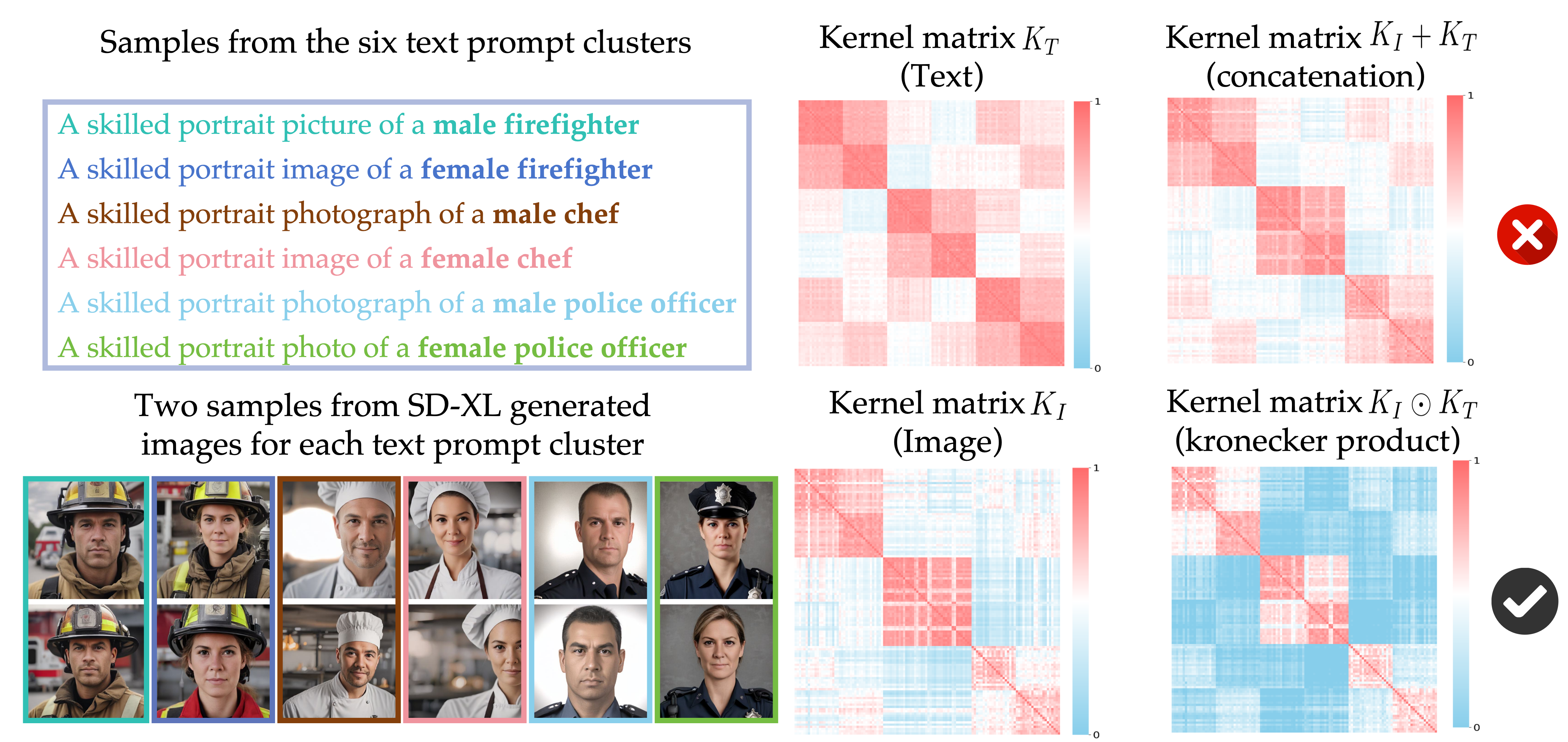}}
\caption{Kernel similarity heatmaps for (text,image) data with 6 underlying clusters. While the kernel matrix of the concatenated CLIP text and DINOv2 image embeddings blur cluster boundaries, the kernel matrices' Hadamard product (for Kronecker-fused embedding) separates all the 6 groups.}
\label{fig: heatmap image text figure}
\end{center}
\vskip -0.2in
\end{figure}

Beyond fusing representations within a single modality, KrossFuse also offers a principled way to construct \emph{joint kernels for multi-modal data} (e.g., image--text tuples). Such joint kernels are applicable to generative and retrieval systems (e.g., text-to-image models) that rely on paired data for training and evaluation, where the product of modality-specific kernels captures the joint structure across domains. Figure~\ref{fig: heatmap image text figure} illustrates this effect in a text-to-image generation setting with SD-XL Turbo model \cite{sauer2024adversarial}: while the kernel structures of the DINOv2 image, CLIP text, and their concatenated feature vectors fail to clearly separate the six clusters defined by three professions (firefighter, chef, police officer) and two  genders (male, female), their \emph{Hadamard product}---using the kernel of the Kronecker-fused embeddings---can distinctly separate all six groups.

While conceptually elegant, the Kronecker formulation incurs a significant computational cost: the dimensionality of the fused embedding equals the product of the input dimensions (for instance, fusing 512-dimensional CLIP and 768-dimensional DINOv2 embeddings yields a 393{,}216-dimensional vector). To address this scalability barrier, we propose \emph{RP-KrossFuse}, a random-projection-based extension that efficiently approximates the Kronecker feature space while preserving kernel similarities. This approach retains the theoretical properties of KrossFuse while making it practical for large-scale, high-dimensional embeddings.

We further apply the KrossFuse framework  to address the performance gap between cross-modal and uni-modal embeddings. Cross-modal models such as CLIP, ALIGN~\cite{jia2021scaling}, and BLIP~\cite{li2022blip} achieve alignment across modalities but often lag behind modality-specific experts such as DINOv2 and E5~\cite{wang2023text} on domain-specific benchmarks. The embedding fusion in this setting poses a significant challenge: the uni-modal expert lacks an embedding map for the other modality (e.g., DINOv2 provides no text encoder). We demonstrate that KrossFuse naturally extends to this case by defining a \emph{symmetrized embedding} for the shared modality and an \emph{imputed constant map} for the missing modality, ensuring that the Kronecker product remains well-defined and balanced across domains.

\begin{figure}[t]
\vskip -0.1in
\begin{center}
\centerline{\includegraphics[width=14cm]{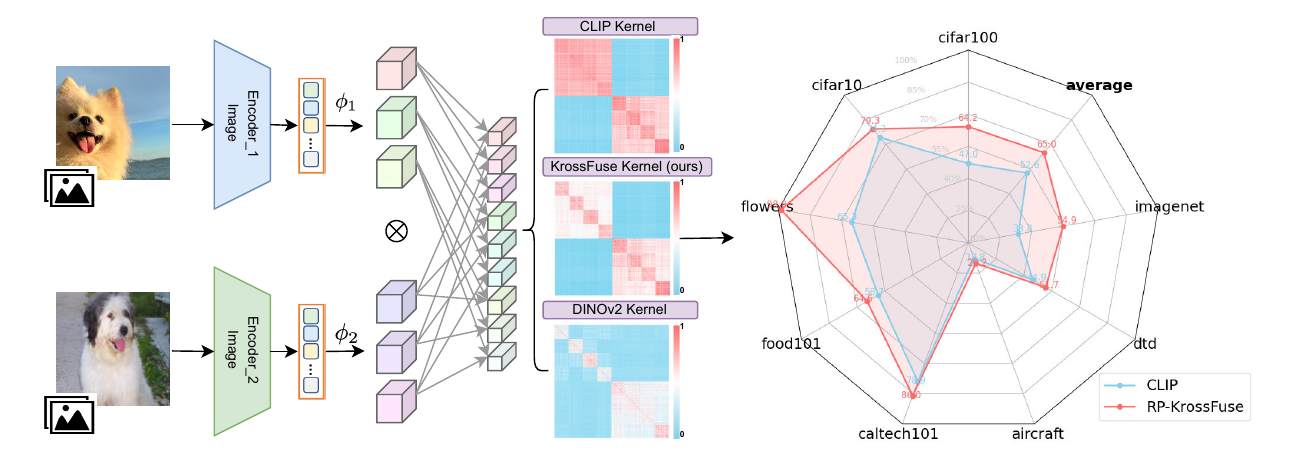}}
\caption{The Kronecker product fusion of embeddings in our proposed KrossFuse: The RP-KrossFuse fusion (implemented with Random Projection)  of CLIP and DINOv2 could improve the averaged few-shot classification accuracy over CLIP on 9 benchmark image datasets.}
\label{fig:pipeline figure}
\end{center}
\vskip -0.3in
\end{figure}

We perform several experiments to demonstrate that RP-KrossFuse effectively fuses CLIP with uni-modal image (DINOv2) and text (Sentence-RoBERTa) embeddings, achieving competitive modality-specific accuracy while preserving strong cross-modal alignment. For example, Figure~\ref{fig:pipeline figure} shows the improved results of the KrossFuse fusion of DINOv2 and CLIP in few-shot classification over the CLIP model. Our results suggest the RP-KrossFuse framework  provides a scalable, training-free mechanism to unify embedding structures. In summary, our contributions are:

\begin{itemize}[leftmargin=*]
\item We propose \emph{KrossFuse}, an embedding fusion framework that applies the principle of \emph{kernel multiplication} to unify the cluster structures of embeddings.
\item We develop \emph{RP-KrossFuse}, a scalable, random-projection-based extension that efficiently approximates the Kronecker feature space.
\item We apply KrossFuse to fuse cross-modal and uni-modal embeddings, demonstrating that it can enhance modality-specific performance while preserving cross-modal alignment.
\end{itemize}

\section{Related Work}

\textbf{Cross-modal Embeddings.} Recent advances in cross-modal embeddings have bridged the gap between visual and textual modalities. CLIP~\cite{radford2021learning} is a pioneer model in this field using a contrastive learning approach, enabling remarkable zero-shot capabilities. Subsequent models, such as ALIGN~\cite{jia2021scaling} and FLORENCE~\cite{yuan2021florence}, scaled datasets and refined learning objectives for improved performance. Variants like CoCa~\cite{yu2022coca} introduced captioning objectives, while BLIP and BLIP-2~\cite{li2022blip,li2023blip2bootstrappinglanguageimagepretraining} employed bootstrapping techniques to generate synthetic image-text pairs. Recent works, including OpenFlamingo~\cite{awadalla2023openflamingo}, PaLI~\cite{chen2022pali} and ImageBind~\cite{girdhar2023imagebind}, extended capabilities to few-shot scenarios and multilingual, multi-modal scaling.
However, a trade-off could exist: strong cross-modal alignment often comes at the price of relatively lowering the performance in single modalities. Our work focuses on this trade-off and explores fusing cross-modal and uni-modal strengths into a unified embedding.

\textbf{Combining Representation of Embeddings.} Unifying the strengths of different embedding models has been studied in the representation learning literature. The EVA~\cite{fang2023eva} and X-CLIP~\cite{ma2022x} methods enhance modality-specific performance while preserving cross-modal capabilities. Bertalign~\cite{10.1093/llc/fqac089} aligns multilingual embeddings via parallel corpus supervision, and ImageBind~\cite{girdhar2023imagebind} unifies six modalities using image-paired data. VLMo~\cite{bao2022vlmo} balances modality-specific traits with cross-modal interaction, while FLAVA~\cite{singh2022flava} and UniCLIP~\cite{lee2022uniclip} propose unified frameworks for joint representation learning.
Different from these training-based strategies, our method offers a training-free approach by applying the Kronecker product of the involved embeddings. By leveraging the symmetrized Kronecker product, our proposed KrossFuse aims to fuse cross-modal and uni-modal embeddings, preserving alignment while improving single-modality performance. To our knowledge, the task of fusing cross-modal and uni-modal embeddings has not been explored exclusively in the literature.

\textbf{Random Projections and Kronecker Products.} Random projection is a well-established method for dimensionality reduction. The Johnson-Lindenstrauss (JL) lemma~\cite{johnson1986extensions} guarantees distance preservation in random projection with high probability. Sparse random projections~\cite{achlioptas2003database} and fast JL transforms~\cite{ailon2009fast} have been shown to further improve the computational efficiency.  
The Kronecker product combined with random projection has not been utilized in embedding fusion in the context of cross-modal embeddings.
In the existing literature, random projections are applied separately before computing the Kronecker product. We unify the two operations in the RP-KrossFuse approach by using the Hadamard product of random projected embeddings and demonstrate the approximation of the kernel matrix of the Kronecker product output. %This method reduces computational complexity while preserving key embedding properties, enabling scalable solutions for large-scale cross-modal tasks.

\textbf{Kernel Embedding Methods for Generative AI.} 
Kernel embeddings are used for evaluation and guidance of generative models. The Kernel Inception Distance (KID)~\cite{binkowski2018kid} introduced kernel-based evaluation, followed by entropy-based measures including diversity metrics~\cite{friedman2022vendi,ospanov2024fkea, friedman2022vendi,jalali2024conditionalvendi,zhang2024interpretable,ospanov2025truncatedvendi,ospanov2025scendi,zhang2025unveiling}. Kernel formulations are extended to distributed~\cite{wang2023distributedKID} and online evaluation settings~\cite{hu2025pakucb,hu2025onlineeval,rezaei2025mixtureucb}, as well as explainable~\cite{jalali2025spec,gong2025boosting,gong2025kernelbasedunsupervisedembeddingalignment} and diversity-guided~\cite{jalali2025sparke} embedding. It is relevant to explore how \emph{KrossFuse} could be extended to these frameworks through its product-kernel formulation.

\section{Preliminaries}
\subsection{Kernel Functions and Feature Maps}
Consider a data vector $x\in\mathcal{X}$. A function $k:\mathcal{X}\times \mathcal{X}\rightarrow \mathbb{R}$ is called a kernel function if for every $n\in\mathbb{N}$ and set of points $x_1,\ldots ,x_n \in\mathcal{X}$, the following kernel matrix $K_X\in\mathbb{R}^{n\times n}$ will be positive semi-definite (PSD):
\begin{equation*}
    K_X := \begin{bmatrix} k(x_1,x_1) & \cdots & k(x_1,x_n) \\ \vdots & \ddots & \vdots \\ k(x_n,x_1) & \cdots & k(x_n,x_n) \end{bmatrix}
\end{equation*}
The Moore-Aronszajn Theorem implies that $k$ is a kernel function if there is a feature map $\phi:\mathcal{X}\rightarrow \mathbb{R}^s$ such that $k(x,y)=\langle \phi(x), \phi(y)\rangle $ is the inner product (denoted by $\langle \cdot ,\cdot\rangle $) of the representations $\phi(x)$ and $\phi(y)$. An example is the linear kernel $k_{\mathrm{lin}}(x,y)= x^\top y$ provided by the standard inner product. Another example is the Gaussian (RBF) kernel function with bandwidth parameter $B>0$:
\begin{equation*}
    k(x,y) = \exp\Bigl(-\frac{\bigl\Vert x-y\bigr\Vert^2_2}{B}\Bigr)
\end{equation*}
The Schur product theorem shows that for every two kernel functions $k_1,\, k_2$, their product $k(x,y) := k_1(x,y)k_2(x,y)$ will also be a kernel function. It can be seen that the feature map $\phi$ of the product kernel $k$ is the Kronecker product of the feature maps $\phi_1,\, \phi_2$ of the kernels $k_1,k_2$, i.e.:
\begin{equation*}
    \phi (x) = \phi_1 (x) \otimes \phi_2 (x)
\end{equation*}
In the above $\otimes$ denotes the Kronecker product which for matrix $A \in\mathbb{R}^{m\times d}$ with entries $a_{i,j}$'s and matrix $B\in\mathbb{R}^{s\times l}$ is defined as:
\begin{equation*}
    A \otimes B := \begin{bmatrix} a_{1,1}B & \cdots & a_{1,d}B \\ \vdots & \ddots & \vdots \\ a_{m,1}B & \cdots & a_{m,d}B \end{bmatrix} \in\mathbb{R}^{ms \times dl}
\end{equation*}

\subsection{Cross-Modal Embedding Maps and Kernel Spaces}
Consider two data domains $\mathcal{X}$ (e.g. for image modality) and $\mathcal{T}$ (e.g. for text modality). We call an embedding map $\psi=(\psi_X,\psi_T)$ cross-modal for these two domains if it offers modality-specific maps $\phi_X:\mathcal{X}\rightarrow \mathcal{Z}$ and $\phi_T:\mathcal{T}\rightarrow \mathcal{Z}$ that respectively map a vector $x\in\mathcal{X}$ in the first domain and a vector $t\in\mathcal{T}$ in the second domain to a shared embedding space $\mathcal{Z}$, i.e. we have $\psi_X(x),\psi_T(t)\in \mathcal{Z}$.The standard cross-modal embeddings are trained such that their outputs for relevant paired data point $(x,t)$ is properly aligned. This property can be mathematically formulated as for a proper kernel function $k:\mathcal{Z}\times\mathcal{Z} \rightarrow \mathbb{R}$ the kernel function $k\bigl(\psi_X(x),\psi_T(t)\bigr)$ is supposed to attain high values for relevant paired sample $(x,t)\sim P_{X,T}$ drawn from a ground-truth joint distribution $P_{X,T}$. %Note that the shared embedding space of the cross-modal embeddings allows defining such a kernel function that applies to both modalities $\mathcal{X},\mathcal{T}$.

\section{KrossFuse: Kronecker Fusion of Embeddings}
\subsection{Fusing Uni-modal Embeddings 
using their Kronecker Product}
Consider two uni-modal embedding maps $\gamma_1:\mathcal{X} \rightarrow \mathcal{Z}_1$ and $\gamma_2:\mathcal{X} \rightarrow \mathcal{Z}_2$. To fuse $\gamma_1$ and $\gamma_2$, we analyze the kernel similarity functions of the two embeddings. Considering kernel functions $k_1:\mathcal{Z}_1\times \mathcal{Z}_1\rightarrow \mathbb{R}$ and $k_2:\mathcal{Z}_2\times \mathcal{Z}_2\rightarrow \mathbb{R}$ operating in the embedding spaces, each of the embeddings $\gamma_1$ and $\gamma_2$ provide a kernel function for inputs $x,y\in\mathcal{X}$:
\begin{align}\label{Eq: Kernel Definition Two Embeddings}
    k_{\gamma_1}(x,y) \, =\, k_1\bigl(\gamma_1(x),\gamma_1(y)\bigr),\quad k_{\gamma_2}(x,y) \, =\, k_2\bigl(\gamma_2(x),\gamma_2(y)\bigr).
\end{align}
Note that if $\phi_1 ,\phi_2$ denote the feature maps of kernels $k_1,k_2$, i.e. $k_1(x,y)=\langle \phi_1(x) ,\phi_1(y)\rangle$ and $k_2(x,y)=\langle \phi_2(x) ,\phi_2(y)\rangle$, then the feature map of $k_{\gamma_1}$ and $ k_{\gamma_2}$ will be $\phi_1\circ \gamma_1$ and $\phi_2 \circ \gamma_2$, respectively.

In fusing the embeddings, we set the fused kernel function to be the product of the marginal kernel functions $k_{\gamma_1}\cdot k_{\gamma_2}$. As shown in Figure~\ref{fig:key figure}, this implies that the similarity score between inputs $x,y$ will be low if either of the kernel functions assign a low similarity score, i.e, that embedding distinguishes the input types. As a result, the inputs $x,y$ will be clustered differently in the kernel-based method if their kernel similarity score is minor according to at least one of the embeddings. In the following proposition, we show that the Kronecker product of the embeddings' feature map $\phi_1\circ \gamma_1$ and $\phi_2\circ \gamma_2$ possesses the mentioned kernel product property. We defer the proof of the theoretical statements to the Appendix.
\begin{proposition}\label{Proposition: kernel product and Kronecker product}
Consider feature maps $\phi_1:\mathcal{Z}_1\rightarrow \mathbb{R}^{d_1},\, \phi_2:\mathcal{Z}_2\rightarrow \mathbb{R}^{d_2}$ and their corresponding kernel functions $k_1,\,k_2$. Then, given the kernel functions defined in \eqref{Eq: Kernel Definition Two Embeddings}, the product kernel function $k_{\gamma_1}(x,y)\cdot k_{\gamma_2}(x,y)$ has the feature map $\phi_{\gamma_1,\gamma_2}:\mathcal{X}\rightarrow \mathbb{R}^{d_1d_2}$ defined using the Kronecker product:
\begin{equation*}
  \phi_{\gamma_1,\gamma_2}(x) \, =\, \phi_1\bigl(\gamma_1(x)\bigr) \otimes \phi_2\bigl(\gamma_2(x)\bigr)  
\end{equation*}
\end{proposition}
Therefore, the feature map to combine the two embeddings follows from the Kronecker multiplication of $\phi_1 \circ \gamma_1$ and $\phi_2 \circ \gamma_2$, which maps an input $x\in\mathcal{X}$ to a space of dimension $d_1d_2$, i.e. the product of the dimensions of maps $\phi_1$ and $\phi_2$.
\begin{remark}
    The discussed Kronecker product combination of two embeddings can be further extended to multiple $m$ embeddings $\gamma_1,\ldots , \gamma_m$. Assuming kernel functions $k_1,k_2,\ldots , k_m$ (with feature maps $\phi_1,\ldots , \phi_m$) to operate on the $m$ embedded data, the feature map corresponding to the unified product kernel $\prod_{i=1}^m k_i(\gamma_i(x),\gamma_i(y))$ will be
    \begin{equation*}
        \phi_{\gamma_1,\ldots,\gamma_m}(x) = \bigotimes_{i=1}^k \phi_i\bigl(\gamma_i(x)\bigr)\, \in \, \mathbb{R}^{d_1d_2\cdots d_m}
    \end{equation*}
    The above implies that, using the above feature map, the resulting kernel function could distinguish the dissimilarity of inputs $x,y$, if one of the embeddings can differentiate the two data points. 
\end{remark}

\subsection{Extending KrossFuse for Kronecker Fusion of Uni-modal and Cross-Modal Embeddings}
We earlier discussed how to fuse unimodal representations $\gamma_1$ and $\gamma_2$ via their Kronecker product, such that the kernel similarity function of the fused embedding is the product of their kernels. However, the challenge in combining a cross-modal embedding $\psi = (\psi_X ,\psi_T)$ operating on two modalities in $\mathcal{X},\, \mathcal{T}$ and a uni-modal embedding $\gamma =(\gamma_X)$ of only the modality $\mathcal{X}$ is the missing operator of $\gamma$ to apply to the nons-shared modality $\mathcal{T}$. For example, if we suppose $\psi$ represents the CLIP cross-modal model applying to image $\mathcal{X}$ and text $\mathcal{T}$ domains and $\gamma$ denotes the DINOv2 embedding applying to single image $\mathcal{X}$, then we do not have the text part $\gamma_T$ to multiply to the text part of CLIP model.

To apply the Kronecker product-based fusion of the embeddings, we propose the \emph{KrossFuse} method, where we define the following symmetrized cross-modal embedding  $\widetilde{\gamma}=(\widetilde{\phi}_{\gamma, X},\widetilde{\phi}_{\gamma, T})$ to play the role of the uni-modal embedding $\gamma=(\gamma_X)$ in the Kronecker fusion process:
\begin{align}
  \widetilde{\phi}_{\gamma, X}(x) \, &:=\, \frac{1}{\sqrt{2}}\Bigl[ \sqrt{\frac{C}{d}} + \phi\bigl(\gamma_X(x)\bigr)\, ,\, \sqrt{\frac{C}{d}} -\phi\bigl(\gamma_X(x)\bigr) \Bigr]^\top \nonumber\\
  \widetilde{\phi}_{\gamma, T}(t) \, &:=\, \sqrt{\frac{C}{2d}}\cdot\Bigl[ \underbrace{1,\ldots , 1}_{2d \text{ times}} \Bigr]^\top
\end{align}
Here, $C>0$ is defined as a hyperparameter constant determining the constant similarity score of every two data points in the missing modality. Note that $\phi$ denotes the feature map of the given kernel function for the single-modality embedding, and $d$ denotes the dimension of $\phi$'s output vector.  

Given the symmetrized cross-modal embedding $\widetilde{\gamma}$ which also applies to the non-shared modality $\mathcal{T}$, KrossFuse combines the cross-modal embedding $\psi=(\psi_X,\psi_T)$ (e.g. CLIP) and the uni-modal embedding $\gamma$ (e.g. DINOv2) by taking the Kronecker product of $\psi$ and $\widetilde{\gamma}$ in each modality as:
\begin{align}\label{Eq: KrossFuse Embedding Definition}
    E_X(x)  :=\,& \phi\bigl(\Psi_X(x)\bigr) \otimes \widetilde{\phi}_{\gamma, X}(x), \\ 
    E_T(t)  :=\,& \phi\bigl(\Psi_T(t)\bigr) \otimes \widetilde{\phi}_{\gamma, T}(t)
\end{align}
\begin{proposition}\label{Proposition: merging cross-modal and uni-modal}
    Given the combined cross-modal embedding in \eqref{Eq: KrossFuse Embedding Definition} and kernel function $k(x,y)=\langle \phi(x) , \phi(y)\rangle$ for feature map $\phi$, the following inner product will hold for every inputs $x,x'\in\mathcal{X}$ from the shared modality and  inputs $t,t'\in\mathcal{T}$ from the non-shared modality:
    \begin{align*}
     \bigl\langle E_X(x), E_X(x') \bigr\rangle &= k\bigl(\psi_X(x) ,\psi_X(x')\bigr)\Bigl(C+ k\bigl(\gamma_X(x) ,\gamma_X(x')\bigr)\Bigr)\, , \\
     \bigl\langle E_T(t), E_T(t') \bigr\rangle &= C\cdot k\bigl(\psi_T(t) ,\psi_T(t')\bigr) \, , \\
     \bigl\langle E_X(x), E_T(t) \bigr\rangle &= C\cdot k\bigl(\psi_X(x) ,\psi_T(t)\bigr).
    \end{align*}
\end{proposition}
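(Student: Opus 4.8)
The plan is to reduce all three identities to the mixed-product property of the Kronecker product: for vectors $a,c$ of equal length and $b,d$ of equal length one has $\langle a\otimes b,\, c\otimes d\rangle = \langle a,c\rangle\,\langle b,d\rangle$. This follows immediately from the block structure of $a\otimes b$ (its $i$-th block equals $a_i\,b$), and it is the only structural fact required; the rest is elementary algebra with the constants appearing in the definition of the symmetrized embedding $\widetilde{\gamma}$.

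First I would note that, since $\psi_X$ and $\psi_T$ both map into the common shared space $\mathcal{Z}$ on which $\phi$ acts, the vectors $\phi(\psi_X(x))$ and $\phi(\psi_T(t))$ lie in a common $\mathbb{R}^{d'}$, while $\widetilde{\phi}_{\gamma,X}(x)$ and $\widetilde{\phi}_{\gamma,T}(t)$ both lie in $\mathbb{R}^{2d}$; hence every Kronecker product in \eqref{Eq: KrossFuse Embedding Definition} lies in $\mathbb{R}^{2dd'}$ and the three inner products are well defined. Applying the mixed-product property to each of them peels off a factor $\langle\phi(\psi_\bullet(\cdot)),\phi(\psi_\bullet(\cdot))\rangle = k(\psi_\bullet(\cdot),\psi_\bullet(\cdot))$, so it only remains to evaluate the three inner products between the $\widetilde{\phi}_{\gamma,X}$ and $\widetilde{\phi}_{\gamma,T}$ components.

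Next I would carry out these three short computations, reading $\sqrt{C/d}+\phi(\gamma_X(x))$ as the vector $\sqrt{C/d}\,\mathbf{1}_d+\phi(\gamma_X(x))$ and writing $u=\phi(\gamma_X(x))$, $v=\phi(\gamma_X(x'))\in\mathbb{R}^{d}$, $s=\sqrt{C/d}$. (i) For $\langle\widetilde{\phi}_{\gamma,X}(x),\widetilde{\phi}_{\gamma,X}(x')\rangle$, the cross terms in $\langle s\mathbf{1}_d+u,\,s\mathbf{1}_d+v\rangle+\langle s\mathbf{1}_d-u,\,s\mathbf{1}_d-v\rangle$ cancel, leaving $2s^2d+2\langle u,v\rangle$; the prefactor $\tfrac12$ together with $s^2d=C$ gives $C+k(\gamma_X(x),\gamma_X(x'))$. (ii) For $\langle\widetilde{\phi}_{\gamma,T}(t),\widetilde{\phi}_{\gamma,T}(t')\rangle$ one simply gets $\tfrac{C}{2d}\cdot 2d=C$. (iii) For the cross term $\langle\widetilde{\phi}_{\gamma,X}(x),\widetilde{\phi}_{\gamma,T}(t)\rangle$, summing the two $d$-blocks against the constant vector cancels the $\pm u$ contributions, leaving $\tfrac{1}{\sqrt2}\sqrt{\tfrac{C}{2d}}\cdot 2sd$, which collapses to $C$. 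Substituting these three values into the factored expressions yields exactly the three claimed formulas.

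I do not anticipate a genuine obstacle here: the argument is a one-line structural observation followed by bookkeeping. The point that needs the most care is step (iii), where the factors $1/\sqrt{2}$, $\sqrt{C/(2d)}$, $\sqrt{C/d}$ and the dimension $d$ must be tracked so that they combine to exactly $C$ (and not, say, $C/2$ or $C\sqrt{2}$); a secondary point is simply to be explicit that the shorthand $\sqrt{C/d}\pm\phi(\gamma_X(x))$ denotes a scalar multiple of the all-ones vector plus or minus the feature vector, so that the displayed dimensions are consistent.
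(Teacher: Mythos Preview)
Your proposal is correct and follows essentially the same approach as the paper: both factor each inner product via the mixed-product identity $\langle a\otimes b,\,c\otimes d\rangle=\langle a,c\rangle\langle b,d\rangle$, pulling out the $k(\psi_\bullet(\cdot),\psi_\bullet(\cdot))$ factor, and then compute the three inner products among $\widetilde{\phi}_{\gamma,X}$ and $\widetilde{\phi}_{\gamma,T}$ by expanding the $(s\mathbf{1}_d\pm u)$ blocks and tracking the constants. Your bookkeeping in (i)--(iii) matches the paper's line-by-line computations exactly.
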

As a result, for the merged KrossFuse embedding, the inner product (i.e. the resulting kernel function) between the transformation of two inputs from either of the modalities will be the product of $C$ and the kernel function of the cross-modal embedding (e.g. CLIP model), except the case of two inputs from the shared embedding where the value will be added to the product of the kernels from both the cross-modal and uni-modal embeddings.
\begin{remark}
The KrossFuse framework can be similarly applied to fuse a cross-modal embedding with multiple uni-modal embeddings. Such a multi-embedding fusion involves the Kronecker product of the modified embedding for every uni-modal embedding in the unification. Therefore, the KrossFuse algorithm can be applied to fuse each of the modalities of a cross-modal embedding with specialized uni-modal models. For example, each of the text and image embedding of CLIP can be merged with text-specific (e.g. E5) and image-specific (e.g. DINOv2) models. 
\end{remark}

\section{RP-KrossFuse: Scalable Embedding Fusion via Random Projection}

\begin{algorithm}[t]
\caption{ Kernel Feature Fusion of Two Embeddings}
\label{alg:kernel_fusion_uni_modal}
\begin{algorithmic}[1]
\State \textbf{Input:} Image samples $\{x_i\}_{i=1}^{N_x}$, First image encoder $\gamma_1$, Second image encoder $\gamma_2$, Kernel maps $\phi_1, \phi_2$, Projected dim $l$

\State $U_1 \sim \text{Uniform}[-\sqrt{3}, \sqrt{3}]^{d_{\phi_1} \times l}/\sqrt{l}$
\State $U_2 \sim \text{Uniform}[-\sqrt{3}, \sqrt{3}]^{d_{\phi_2} \times l}/\sqrt{l}$
% \State $U_3 \sim \text{Uniform}[-\sqrt{3}, \sqrt{3}]^{d_{\phi_3} \times l}/\sqrt{l}$
% \State $U_i \sim \text{Uniform}[-\sqrt{3}, \sqrt{3}]^{d_{\phi_i} \times l}/\sqrt{l}, \quad \text{for } i=1, 2.$
\State Initialize $Z^{\text{img}} \in \mathbb{R}^{N_x \times l}$

% \State \Comment{Process Image Samples}
\For{batch $\mathcal{B}_x$ in $\{x_i\}$} 
    \State $\psi_{1, \mathcal{B}_x} \gets \phi_1(\gamma_1(\mathcal{B}_x))$, $\psi_{2, \mathcal{B}_x} \gets \phi_2(\gamma_2(\mathcal{B}_x))$
    
    \State $Z^{\text{img}}_{\mathcal{B}_x} \gets (\psi_{1, \mathcal{B}_x}  U_1 ) \odot (\psi_{2, \mathcal{B}_x}U_2 )$
\EndFor

\State \textbf{Return} $Z^{\text{img}}$
\end{algorithmic}
\end{algorithm}

Although the Kronecker product in KrossFuse can combine multiple embeddings, the dimension of the merged feature vector will be the product of the dimension of individual models, which would be computationally challenging in standard applications. To lower the computational costs, we propose a scalable application of random projection that can preserve the inner product (kernel function values) with limited feature size. The proposed extension, which we call RP-KrossFuse, applies random projection to each cross-modality embedding (modified cross-modality embedding for an original uni-modal embedding). To do this, for each of the cross-modality embeddings $\psi_1 = (\psi_{X,1},\psi_{T,1})$ and  $\psi_2 = (\psi_{X,2},\psi_{T,2})$, we generate random matrix a $U_i \in\mathbb{R}^{l\times d_1}$ whose entries are independent random variables with uniform distribution over $[-\sqrt{3},\sqrt{3}]$, that has unit variance. Then, the RP-KrossFuse embedding of each of inputs $x\in\mathcal{X}$ and $t\in\mathcal{T}$ will be
\begin{align}
    \widetilde{\psi}_X(x) &= \frac{1}{\sqrt{l}}\bigl(U_1\psi_{1,X}(x)\bigr)\odot \bigl(U_2\psi_{2,X}(x)\bigr) \\
    \widetilde{\psi}_T(t) &= \frac{1}{\sqrt{l}} \bigl(U_1\psi_{1,T}(t)\bigr)\odot \bigl(U_2\psi_{2,T}(t)\bigr)
\end{align}
In the above, $\odot$ denotes the element-wise Hadamard product. This formulation leads to Algorithm~\ref{alg:kernel_fusion_uni_modal} for uni-modal fusion and Algorithm~\ref{alg:kernel_fusion_rp_concise} for cross-modal fusion. Note that computing the above RP-KrossFuse embeddings requires $\mathcal{O}\bigl(l(d_1+d_2)\bigr)$ operations. Also, the output has $l$ dimensions, which for a properly bounded $l$ will be significantly cheaper than $d_1d_2$ dimensions in the Kronecker product of the two embedded vectors. Theorem~\ref{Thm: RP for KrossFuse} shows the above method can preserve the KrossFuse inner products with high probability.
\begin{theorem}\label{Thm: RP for KrossFuse}
Consider $n$ input pairs $(t_i,x_i)_{i=1}^n$. Suppose $\max\bigl\{\Vert \psi_{X,j}(x_i)\Vert^2, \Vert \psi_{T,j}(t_i)\Vert^2\bigr\} \le B$ is norm-bounded for every $j\in\{1,2\}$ and index $i\in\{1,\ldots , n\}$. Then, for any $\delta>0$, we have with probability at least $1-\delta$:\vspace{-2mm}
\begin{align*}
&\Bigl\vert \widetilde{\psi}_T(t_i)^\top \widetilde{\psi}_T(t_j) - \psi_{1,2,T}(t_i)^\top\psi_{1,2,T}(t_j)  \Bigr\vert
    \le \sqrt{\frac{2B^2\log(2n^2/\delta)}{l}},\\
    &\Bigl\vert \widetilde{\psi}_X(x_i)^\top \widetilde{\psi}_X(x_j) - \psi_{1,2,X}(x_i)^\top\psi_{1,2,X}(x_j)  \Bigr\vert
    \le \sqrt{\frac{2B^2\log(2n^2/\delta)}{l}}
    %\\
    %&\Bigl\vert \widetilde{\psi}_T(t_i)^\top \widetilde{\psi}_T(t_j) - \frac{1}{d_1d_2}\psi_{1,2,T}(t_i)^\top\psi_{1,2,T}(t_j)  \Bigr\vert \\ \le \, & \sqrt{\frac{B\log(4n/\delta)}{l}}\, .
\end{align*}

\end{theorem}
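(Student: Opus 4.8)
The plan is to rewrite each RP-KrossFuse inner product as a normalized sum of i.i.d.\ scalars whose expectation is exactly the target KrossFuse (Kronecker) kernel value, and then to invoke a Bernstein-type concentration inequality together with a union bound over the index pairs. I would carry out the text case in detail; the image case is word-for-word identical.

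First, fix indices $(i,j)$ and abbreviate $a_i=\psi_{1,T}(t_i)$, $b_i=\psi_{2,T}(t_i)$, $a_j=\psi_{1,T}(t_j)$, $b_j=\psi_{2,T}(t_j)$, all with squared norm at most $B$; recall that $\psi_{1,2,T}=\psi_{1,T}\otimes\psi_{2,T}$ is the KrossFuse feature map. The key algebraic observation is that, coordinatewise, the Hadamard product of the two random projections is a single random projection of the Kronecker product: $(U_1 a)_m (U_2 b)_m=\big\langle (U_1)_{m,:}\otimes (U_2)_{m,:}\,,\, a\otimes b\big\rangle$. Therefore
\[
  \widetilde{\psi}_T(t_i)^\top\widetilde{\psi}_T(t_j)=\frac1l\sum_{m=1}^l W_m,\qquad W_m:=(U_1a_i)_m(U_1a_j)_m(U_2b_i)_m(U_2b_j)_m,
\]
with the $W_m$ i.i.d.\ across $m$ (distinct rows of $U_1,U_2$ are independent). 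Using that the entries of $U_1,U_2$ are independent, mean zero and unit variance, and that $U_1\perp U_2$,
\[
  \E[W_m]=\E\big[(U_1a_i)_m(U_1a_j)_m\big]\,\E\big[(U_2b_i)_m(U_2b_j)_m\big]=(a_i^\top a_j)(b_i^\top b_j)=\psi_{1,2,T}(t_i)^\top\psi_{1,2,T}(t_j),
\]
where the last step is the mixed-product property of $\otimes$. So the claim reduces to concentrating $\frac1l\sum_m(W_m-\E W_m)$.

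For the tail bound, note that each linear form $(U_1a)_m=\sum_p (U_1)_{mp}a_p$ is a sum of independent summands bounded in absolute value by $\sqrt3\,|a_p|$, so Hoeffding's lemma makes it sub-Gaussian with variance proxy at most $3\Vert a\Vert^2\le 3B$; the same holds for all four factors of $W_m$. Hence $W_m$ is a product of four sub-Gaussian variables, so all its moments are finite, $\E[W_m^2]=O(B^2)$ (by Cauchy--Schwarz and the fourth-moment bound for sub-Gaussian variables), and its higher moments grow at the sub-Weibull$(1/2)$ rate. Feeding these moment bounds into a Bernstein-type inequality for i.i.d.\ sub-exponential summands gives, for a fixed pair $(i,j)$, a deviation of order $\sqrt{B\log(1/\delta')/l}$ plus a lower-order $\log(1/\delta')/l$ term that is negligible once $l$ is moderately large and the feature maps are normalized (so that $B\le1$ and hence $\E[W_m^2]\le cB^2\le cB$). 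A union bound over the polynomially-many-in-$n$ index pairs then yields the stated bound with probability at least $1-\delta$, and the image inequality follows the same way.

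The hard part will be exactly this tail-control step: because the projected coordinates $(U_ia)_m$ are not bounded but only sub-Gaussian, the products $W_m$ are merely sub-Weibull of order $1/2$, so getting the clean $\sqrt{\log(1/\delta)/l}$ rate (rather than a $\log(1/\delta)/l$ rate) needs either the sub-exponential Bernstein inequality with the $B^2$-type variance proxy above, or a high-probability truncation of the coordinates followed by the bounded-case Hoeffding inequality---the latter at the price of extra logarithmic factors. The structural part---recognizing the Hadamard-of-random-projections as a random projection of the Kronecker product, and identifying its expectation with the exact KrossFuse kernel---is immediate.
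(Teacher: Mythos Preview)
Your high-level structure---write $\widetilde{\psi}_T(t_i)^\top\widetilde{\psi}_T(t_j)$ as $\frac{1}{l}\sum_m W_m$ with i.i.d.\ $W_m$, check $\E[W_m]=\psi_{1,2,T}(t_i)^\top\psi_{1,2,T}(t_j)$ via the mixed-product identity, concentrate, then union-bound over all $n^2$ pairs---is exactly what the paper does.

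The only substantive difference is in the concentration step. The paper asserts the deterministic bound $|W_m|\le B$ and invokes Hoeffding's inequality directly, which immediately yields the clean $\sqrt{B\log(4n/\delta)/l}$ rate with no lower-order term. You instead (correctly) observe that from the stated hypothesis $\|\psi_{\cdot,j}(\cdot)\|^2\le B$ alone, the single-row projections $(U_1a)_m$ are merely sub-Gaussian with proxy $O(B)$, so $W_m$ is a product of four such factors and only sub-Weibull; this forces you through a Bernstein/sub-exponential argument and picks up the extra $\log(1/\delta)/l$ term you flag. Your route is more conservative and technically cleaner given the assumptions as written; the paper's shortcut of bounding $|W_m|$ by $B$ is not justified from the norm hypothesis on the \emph{pre-projection} vectors (each factor $|(U_1a)_m|$ can be as large as $\sqrt{3}\,\|a\|_1$, which is dimension-dependent). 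So your proposal is sound, and the ``hard part'' you anticipate is a genuine subtlety that the paper's own proof glosses over by treating the per-row summands as bounded.
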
\vspace{-2mm}
In the Appendix, we have also proved Theorem~2 on the extension of the random projection approach to infinite-dimensional shift-invariant kernels, e.g., RBF kernels, via the framework of random Fourier features \cite{rahimi2007random}. We defer the discussion of the shift-invariant kernels to the Appendix~\ref{shift invariance kernel}.  \vspace{-2mm}

\section{Numerical Results}
To evaluate the effectiveness of our proposed fusion methods, KrossFuse and RP-KrossFuse, we performed several numerical experiments regarding unimodal and cross-modal embedding tasks (see~\cref{app:implementation} for implementation details). We tested the proposed and baseline fusion methods in application to the following embedding models: cross-modal (image,text) embeddings of CLIP~\cite{radford2021learning}, OpenCLIP~\cite{ilharco_gabriel_2021_5143773} and SigLIP~\cite{zhai2023sigmoid}, image modality embeddings of  DINOv2~\cite{oquab2023dinov2}, Unicom~\cite{an2023unicom}, and text modality embeddings of S-RoBERTa~\cite{reimers2019sentence}, E5~\cite{wang2023text}. Unless otherwise specified, the additional numerical results are provided in~\cref{app: additioanl numerical results}.

\textbf{Visualization of kernel similarity matrix for the Kronecker product of embeddings.} To visualize how the Kronecker product of two embeddings can capture the clusters identified by each of the embeddings, we present the numerical results of performing kernel-based clustering on the following image datasets: CUB-200-2011~\cite{wah2011caltech}, Oxford Flowers~\cite{nilsback2008automated}, DTD~\cite{cimpoi2014describing}, Image-Woof~\cite{Howard_Imagewoof_2019} consisting of ten dog breeds from ImageNet-1K~\cite{deng2009imagenet}, GTSRB~\cite{stallkamp2012man} and typographic attack images by introducing mislabeled red text into 10 ImageNet subclasses following the reference~\cite{materzynska2022disentangling}. The visualizations of the kernel matrices for these datasets highlight how CLIP and DINOv2 could assign differently structured kernel similarity scores in different image types. We present three representative image clustering results, visualized with kernel matrix heatmaps in Figure~\ref{fig:unimodal_clustering_results}. In the Image-Woof dataset, the DINOv2 similarity map could visually capture the existing three clusters, while the CLIP embedding did not completely separate the three clusters. In contrast, on the three traffic sign classes from GTSRB dataset, the CLIP embedding could differentiate the three groundtruth clusters in its kernel matrix, while the DINOv2 assigned kernel matrix did not display the three clusters. On the other hand, the Kronecker product kernel matrix in KrossFuse shows clear block-diagonal structures on both datasets. 

\begin{figure}
\begin{center}
\centerline{\includegraphics[width=15cm]{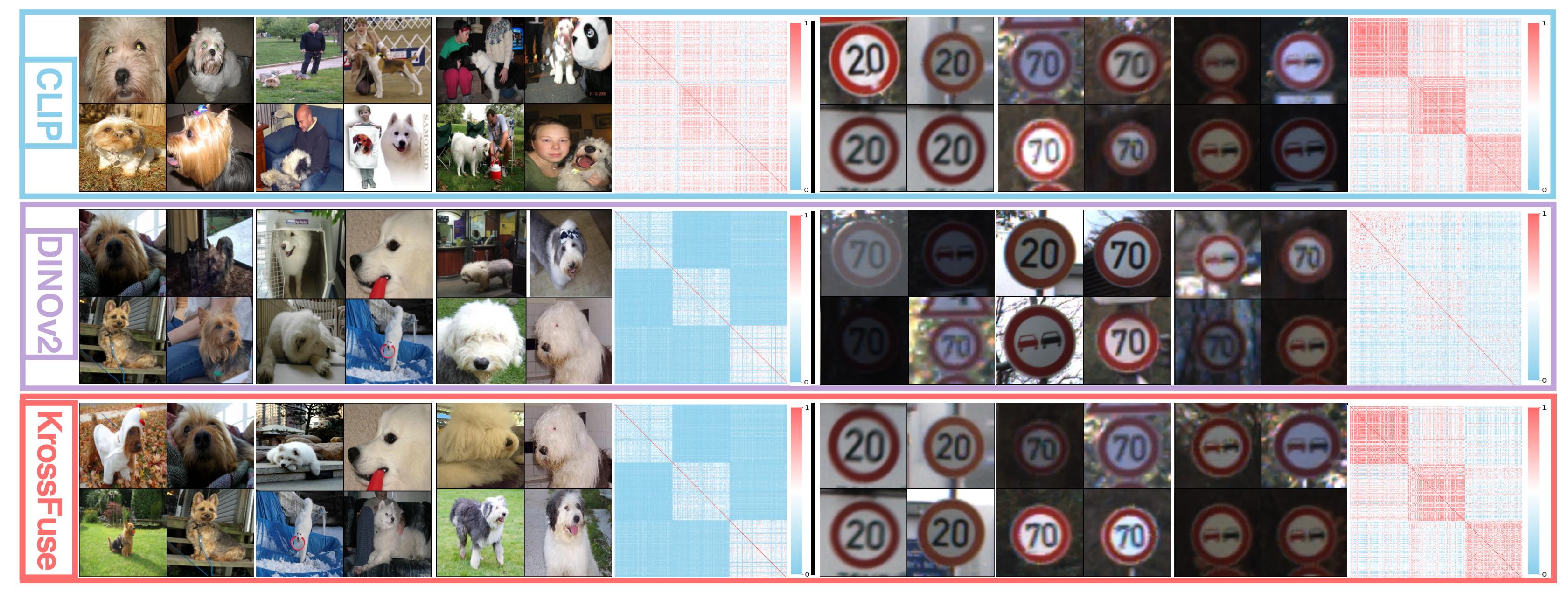}}
\caption{Clustering results and kernel matrix heatmaps for CLIP, DINOv2, and KrossFuse on ImageNet dog breeds and GTSRB dataset. While CLIP could not fully separate all the dog categories and DINOv2 struggled in clustering traffic signs, the KrossFuse fusion captured the clusters.}
\label{fig:unimodal_clustering_results}
\end{center}
\vskip -0.2in
\end{figure}

\textbf{Unimodal Classification Results.}
% In the previous experiments, we visually observed that fusing CLIP with DINOv2 in our proposed method leads to an enhancement in its unimodal performance, while the cross-modal alignment capability of CLIP is preserved.
To assess the performance of the RP-Krossfuse fusion of representations CLIP and DINOv2, we performed linear probe classification on the following standard image datasets: ImageNet~\cite{deng2009imagenet}, GTSRB~\cite{stallkamp2012man}, and SVHN~\cite{netzer2011reading}, as well as on out-of-distribution (OOD) benchmarks: ImageNet-A~\cite{hendrycks2021natural} and ImageNet-R~\cite{hendrycks2021many}. The results are shown in Table~\ref{tab:image_lp}, where KrossFuse is compared with the four baselines (see detailed implementation in~\cref{app: baseline}: (1) KPoMRP, which utilizes the Kronecker product of marginal random projections; (2) GATE, a simplified implementation of the Mixture-of-Experts (MoE)~\cite{shazeer2017outrageously} paradigm; (3) ATTN, a lightweight feature-level attentional fusion method~\cite{zhao2025enhancingsentimentanalysismultimodal}; and (4) COMM,  a MLP projector fusion framework proposed in~\cite{jiang2024clipdinovisualencoders}. Notably, RP-KrossFuse obtained 84.1\% accuracy on ImageNet, which was improving upon the DINOv2, CLIP image embeddings and also the  baselines. While DINOv2 performed competitively on OOD benchmarks, RP-KrossFuse reached better accuracy scores on GTSRB and SVHN. We note that CLIP and DINOv2 would likely excel on certain image categories, and the RP-KrossFuse method seems to consistently fuse the strengths of the two embeddings. 

\begin{table} 
% \vspace{-10pt}
\centering
\caption{Linear probe evaluation of embeddings on various image benchmarks (IN: ImageNet).}
\footnotesize
\footnotesize{$^{\dagger}$Projection dimension of RP-KrossFuse and KPoMRP: typically 3000 (except 5000 for IN).}
\label{tab:image_lp}
\begin{tabular}{lccccc}
    \toprule
    Embedding  & IN & GTSRB & SVHN & IN-A & IN-R \\
    \midrule
    CLIP~\cite{radford2021learning} & 73.2  & 83.1 & 63.6 & 23.2 & 60.0  \\
    DINOv2~\cite{oquab2023dinov2} & 83.3 & 72.5 & 60.5 & 48.5 & 68.8 \\
    GATE~\cite{shazeer2017outrageously} & 81.4  & 82.2 & 66.0 & 38.9 & 59.1 \\
    ATTN~\cite{zhao2025enhancingsentimentanalysismultimodal} & 79.5 & 77.3 & 64.6 & 38.9 & 61.5 \\
    KPoMRP$^{\dagger}$ & 79.4 & 71.8 & 49.4 & 34.8 & 55.2 \\
    COMM~\cite{jiang2024clipdinovisualencoders} & 82.7 &76.7 & 65.5 & 44.7 & 63.3\\
    \rowcolor{gray!20}
    RP-KrossFuse$^{\dagger}$ & 84.1 & 82.7 & 66.9 & 47.6 & 67.4 \\
    \bottomrule
\end{tabular}

\vspace{-10pt} 
\end{table}

For the text experiments, we used the SentEval toolkit~\cite{conneau2018senteval} to evaluate the RP-KrossFuse embeddings compared to CLIP and S-RoBERTa on the following NLP classification benchmarks: MR~\cite{pang2005seeing}, CR~\cite{hu2004mining}, SUBJ~\cite{pang2004sentimental}, MPQA~\cite{wiebe2005annotating}, SST2~\cite{socher2013recursive}, TREC~\cite{voorhees2000building}, and MRPC~\cite{dolan2005automatically}. 
Comprehensive evaluation can be found in~\cref{maintext: text-linear-probe-results}.

\begin{table}[h]
\caption{Linear probe evaluation of frozen features of variants of CLIP, Sroberta, RP-KrossFuse and three baselines using the SentEval toolkit on text benchmarks. Test accuracy (\%) are based on a 5-fold cross-validation. The projection dimension of KPoMRP and RP-KrossFuse is 3000.}
\label{maintext: text-linear-probe-results}
% \vskip 0.15in

\begin{adjustbox}{width=\textwidth}

\begin{tabular}{clcccccccccc}
\toprule
 Embedding & Arch& Fused  & MR & CR  & SUBJ & MPQA & SST2 & TREC & MRPC  & Avg \\ 
\midrule
 CLIP~\cite{radford2021learning} & ViT-B/32 &\usym{2717}  & 75.8 & 83.1 & 92.5 & 86.4 & 82.0 & 83.0 & 70.1 & 81.8 \\
   & ViT-L/14 &\usym{2717} & 78.1 & 85.3 & 93.8 & 87.0 & 83.9 & 86.4 & 67.7 & 83.2  \\

\midrule
KPoMRP & ViT-B/32  & \usym{2714} & 73.8 & 78.5 & 86.6 &82.1 &80.3 & 74.8 & 70.8 & 78.1\\
& ViT-L/14  &\usym{2714} & 72.5 & 80.3 & 86.0 & 83.1 &  74.0 & 78.2 & 68.1  & 77.5 \\

\midrule
GATE~\cite{shazeer2017outrageously} & ViT-B/32  & \usym{2714} &84.8 & 87.2& 94.4& 88.5&91.8 & 89.3& 65.5& 85.9\\
& ViT-L/14  & \usym{2714} & 84.3& 87.8&94.5 & 88.3& 91.4& 89.3&67.6 &86.2 \\
\midrule
ATTN~\cite{zhao2025enhancingsentimentanalysismultimodal} & ViT-B/32  & \usym{2714} & 85.7&86.3 &93.4 &88.8 & 91.9& 88.5 & 66.2& 85.8\\
& ViT-L/14  & \usym{2714} &85.7 &85.9 & 94.3& 87.8& 92.4&86.5 & 65.9& 85.5\\
\midrule
\rowcolor{gray!20}
RP-KrossFuse & ViT-B/32  & \usym{2714} & 85.8 & 88.7 & 94.4 & 89.1 & 89.7 & 95.0 & 73.6  & \textbf{88.0}  \\
\rowcolor{gray!20}
& ViT-L/14  &\usym{2714} & 86.0 & 88.1 & 94.8  & 89.3 & 89.8 & 95.2 & 73.6 & \textbf{88.1}  \\

\cdashline{1-11} \\
SRoBERTa~\cite{reimers2019sentence} & TF-L24  &\usym{2717}  & 85.1 & 86.8 & 93.7 & 87.7 & 89.1 & 93.2 & 68.1 & 86.2  \\
\bottomrule
\end{tabular}
\end{adjustbox}
% \vskip -0.1in
\end{table}

\textbf{Zero-shot Cross-modal Alignment.}
To test whether the improved unimodal performance of our fused embeddings comes without compromising CLIP’s zero-shot cross-modal alignment, we visualized the cosine similarity distributions of positive and negative image-text pairs on the MSCOCO validation dataset in Figure~\ref{fig:cosine_dist}. The overlapping curves suggest that KrossFuse could preserve the geometric distribution of both positive and negative image text pairs. We report zero-shot image-to-text and text-to-image retrieval results on MSCOCO~\cite{lin2014microsoft} and Flickr30k~\cite{young2014image} in~\cref{tab:zero-shot-retrieval} of the appendix. We can observe that the differences between CLIP and RP-KrossFuse are mostly below 1\%, suggesting that RP-KrossFuse
maintains strong zero shot cross-modal alignment, with retrieval performance comparable to CLIP.

\begin{wrapfigure}{l}{0.48\textwidth}

\centering
\includegraphics[width=0.48\textwidth, height =2.5cm ]{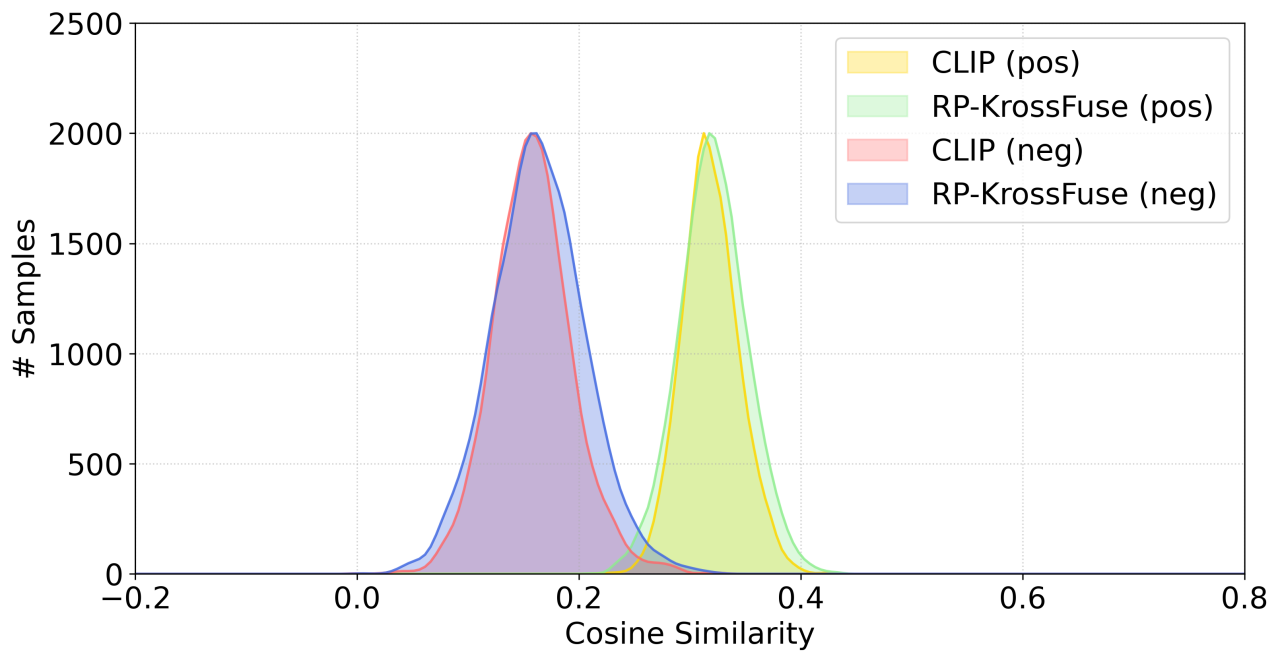}
\caption{The cosine similarity distributions in MSCOCO.}
\label{fig:cosine_dist}

\end{wrapfigure}

\textbf{Cross-modal Few-shot Learning.}
To evaluate how RP-KrossFuse applies to cross-modal few-shot learning scenarios, we used samples from one modality to enhance few shot learning in another modality following the work in ~\cite{lin2023multimodality} which simulates data-limited learning tasks. This cross-modal adaptation task is enabled by multimodal foundation models like CLIP, which map different modalities into a shared representation space, thereby allowing text samples to augment image samples.
For generic object and scene image benchmarks, we adopt the standard prompt "a photo of a [class]" for the text modality. To further show the benefits of enhanced text representations, we extend this evaluation to the MSCOCO dataset, where ground-truth captions can be directly used as text samples. As shown in Table~\ref{tab:restructured-fewshot}, our method of cross modality could perform better than CLIP and the other baselines, particularly in 1 and 2 shot cases, highlighting the gains reached by the fusion method in cross-modal transfer learning.

\begin{table}
\centering
\caption{Cross-modal few-shot classification results across datasets. "Ours" = RP-KrossFuse (proj. dim. 3000); 
"I"/"T" denote image/text domains.}
\label{tab:restructured-fewshot}
\begin{adjustbox}{width=\textwidth}
\begin{tabular}{c|c|ccccccccccc}
\toprule
\textbf{Shots} & \textbf{Method}  & Caltech~\cite{fei2004learning} & Food~\cite{bossard2014food} & DTD~\cite{cimpoi2014describing}  & Aircraft~\cite{maji2013fine} & ImageNet~\cite{deng2009imagenet} & MSCOCO~\cite{lin2014microsoft} & Average \\
\midrule
\multirow{5}{*}{1}
& CLIP~\cite{radford2021learning} (I)      & 70.9 & 37.8 & 35.4  & 14.6 & 24.3 & 8.7 & 32.0 \\
& CLIP~\cite{radford2021learning} (I+T) & 78.9 & 58.7 & 44.9  & 17.8 & 33.8 & 31.6 & 44.3\\
& DINOv2~\cite{oquab2023dinov2} (I)     & 84.3 & 57.9 & 47.2  & 15.4 & 54.0 & 16.4 & 45.8 \\
\rowcolor{gray!20}
& Ours (I)    & 84.6 & 55.7 & 48.3 & 19.4 & 51.8 & 21.5 & 46.9 \\
\rowcolor{gray!20}
& Ours (I+T)  & \textbf{86.0} & \textbf{64.6} & \textbf{51.7} & \textbf{20.3} & \textbf{54.9} & \textbf{43.5} & \textbf{53.5}\\
\midrule
\multirow{5}{*}{2}
& CLIP~\cite{radford2021learning} (I)     & 78.9 & 47.8 & 44.2  & 18.2 & 30.2 & 11.2 & 38.4 \\
& CLIP~\cite{radford2021learning} (I+T)  & 82.7 & 60.7 & 47.3 & 19.8 & 36.0 & 47.2 & 49.0\\
& DINOv2~\cite{oquab2023dinov2} (I)       & 88.3 & 63.4 & 57.3 & 17.3 & 61.9 & 23.1 & 51.9 \\
\rowcolor{gray!20}
& Ours (I)    & 89.2 & 63.6 & 57.3  & 23.6 & 60.9 & 36.8 & 55.2 \\
\rowcolor{gray!20}
& Ours (I+T)  & \textbf{90.1} & \textbf{68.0} & \textbf{59.5}  & \textbf{24.8} & \textbf{62.1} & \textbf{51.5} & \textbf{59.3}\\
\midrule
\multirow{5}{*}{4}
& CLIP~\cite{radford2021learning} (I)    & 83.3 & 57.7 & 51.9  & 20.6 & 36.8 & 23.9 & 45.7 \\
& CLIP~\cite{radford2021learning} (I+T)   & 84.6 & 64.8 & 52.0 & 21.1 & 42.4 & 57.5 & 53.7 \\
& DINOv2~\cite{oquab2023dinov2} (I)     & 90.4 & 69.8 & 64.0 & 20.9 & 67.0 & 38.5 & 58.4  \\
\rowcolor{gray!20}
& Ours (I)   & 90.8 & 71.8 & 64.4 & 28.1 & 66.6 & 52.8 & 62.4 \\
\rowcolor{gray!20}
& Ours (I+T)    & \textbf{91.1} & \textbf{73.8} & \textbf{65.0}  & \textbf{28.2} & \textbf{67.2} & \textbf{58.1} & \textbf{63.9} \\
\midrule
\multirow{5}{*}{8}
& CLIP~\cite{radford2021learning} (I)    & 84.5 & 65.5 & 53.7 & 24.2 & 42.1 & 44.9 & 52.5 \\
& CLIP~\cite{radford2021learning} (I+T)     & 85.8 & 68.7 & 54.6  & 24.6 & 45.2 & 61.2 & 56.7 \\
& DINOv2~\cite{oquab2023dinov2} (I)    & 91.4 & 73.0 & 69.2 & 24.5 & 70.4 & 53.6 & 63.7 \\
\rowcolor{gray!20}
& Ours (I)      & 92.0 & 75.9 & 69.2  & \textbf{31.7} & 70.4 & 55.1 & 65.7\\
\rowcolor{gray!20}
& Ours (I+T)     & \textbf{92.2} & \textbf{76.9} & \textbf{69.4} & \textbf{31.7} & \textbf{70.7} & \textbf{61.9} & \textbf{67.1}\\
\bottomrule
\end{tabular}
\end{adjustbox}
\end{table}

\textbf{Ablation Studies.}
To test the effect of the different RP-KrossFuse components, we evaluated the classification accuracy on ImageNet and the average accuracy across seven NLP benchmarks in SentEval. As shown in Figure~\ref{fig:ablation study}:
(a) Fusing CLIP with image expert embeddings (UniCom, DINOv2) could also improve performance over the individual CLIP embedding.
(b) Fusing text-expert embeddings (S-RoBERTa, E5) could bring considerable gains compared to the CLIP text encoder.
(c) All kernel-based fusions led to performance improvements, with the Cosine and RBF kernels yielding slightly higher gains than the linear kernel.
(d) Increasing the random projection dimension would lead to gradually saturated accuracy, suggesting RP-KrossFuse could converge to KrossFuse as the dimension reaches around 3000.\vspace{-2mm}

\begin{figure}[t]
\vskip -0.1in
\begin{center}
\centerline{\includegraphics[width=15cm]{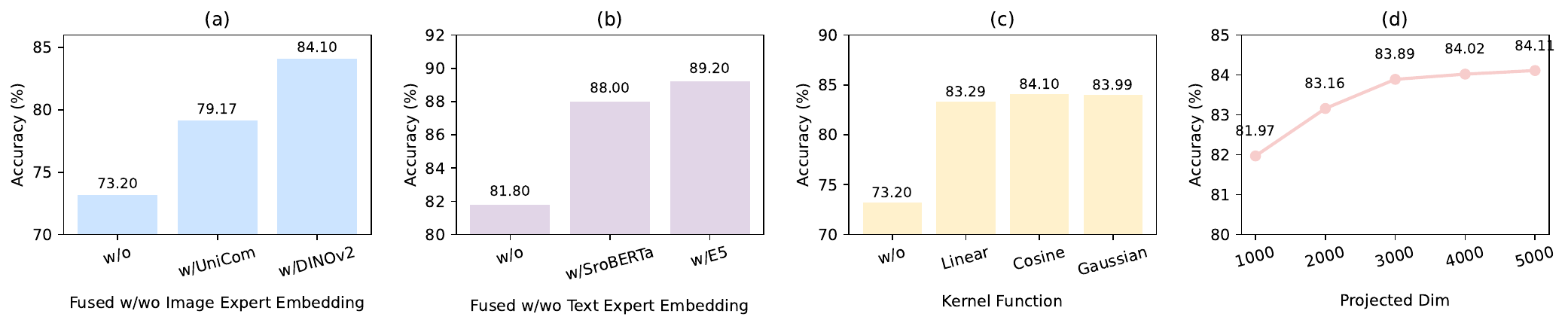}}
\caption{Ablation Studies. (a) (b) Effect of fusing different image and text expert embeddings. (c) Effect of kernel function. (d) Effect of random projected dimension.}
\label{fig:ablation study}
\end{center}
\vskip -0.2in
\end{figure}

%\section{Limitations}
%\input{6-limitations}
\vspace{-1mm}
\section{Conclusion and Limitations}\vspace{-1mm}
The proliferation of powerful embedding models across vision, language, and other modalities underscores the need for principled methods to unify complementary representations. This work introduced KrossFuse, a Kronecker-product framework for embedding fusion grounded in the kernel product principle, and its scalable variant RP–KrossFuse, which employs random projection to efficiently approximate the high-dimensional Kronecker feature space. The proposed formulation provides a simple, training-free mechanism to integrate embeddings from diverse sources—such as cross-modal and domain-specific models—while preserving the discriminative structure of each.

Despite its efficiency and generality, RP–KrossFuse introduces a few additional hyperparameters, primarily the scaling coefficient $C$ and the projection dimension $l$. The fused embedding may require a higher projection dimension (e.g., 3,000) than individual encoders such as CLIP or DINOv2. While this difference may affect direct dimensional comparability, it aligns with standard practice in representation learning, where embeddings of varying sizes are routinely compared. The use of random projection is a deliberate design choice that enables scalability without the prohibitive cost of operating in the full Kronecker feature space (on the order of $8\times10^5$ dimensions). When strict dimensional parity or compactness is desired, post-hoc dimensionality reduction via PCA can be applied to the fused embedding. Future work may extend this framework to non-visual modalities, multi-way fusion scenarios, or attention-based adaptation modules that refine the fused embedding for task-specific objectives.

\clearpage

\section*{Acknowledgments}
The work of Farzan Farnia is partially supported by a grant from the Research Grants Council of the Hong Kong Special Administrative Region, China, Project 14209920, and is partially supported by CUHK Direct Research Grants with CUHK Project No. 4055164 and 4937054. The authors acknowledge the support from the Hong Kong Research Grants Council (RGC) and the Hong Kong PhD Fellowship Scheme (HKPFS) award supporting Youqi Wu's research. Also, the authors sincerely thank the anonymous reviewers for their insightful comments and constructive suggestions.

\bibliographystyle{plainnat}
\bibliography{ref}

%%%%%%%%%%%%%%%%%%%%%%%%%%%%%%%%%%%%%%%%%%%%%%%%%%%%%%%%%%%%
 % \iffalse
\appendix

% \section{Technical Appendices and Supplementary Material}
% Technical appendices with additional results, figures, graphs and proofs may be submitted with the paper submission before the full submission deadline (see above), or as a separate PDF in the ZIP file below before the supplementary material deadline. There is no page limit for the technical appendices.
\newpage
\section{Proofs}
\subsection{Proof of Proposition~\ref{Proposition: kernel product and Kronecker product}}
To show the proposition, we only need to note the following about the inner product of the feature map $\phi_{\gamma_1,\gamma_2}$ for two samples $x,y$:
\begin{align*}
    \phi_{\gamma_1,\gamma_2}(x)^\top \phi_{\gamma_1,\gamma_2}(y) \, &=\, \Bigl(\phi_1\bigl(\gamma_1(x)\bigr) \otimes \phi_2\bigl(\gamma_2(x)\bigr)
    \Bigr)^\top \Bigl(\phi_1\bigl(\gamma_1(y)\bigr) \otimes \phi_2\bigl(\gamma_2(y)\bigr)
    \Bigr) \\
    &=\, \Bigl(\phi_1\bigl(\gamma_1(x)\bigr)^\top \phi_1\bigl(\gamma_1(y)\bigr)\Bigr) \otimes \Bigl(\phi_2\bigl(\gamma_2(x)\bigr)
    ^\top \phi_2\bigl(\gamma_2(y)\bigr)
    \Bigr) \\
    &=\, k_{\gamma_1}(x,y) \otimes k_{\gamma_2}(x,y)
    \\
    &=\, k_{\gamma_1}(x,y)\cdot k_{\gamma_2}(x,y).
\end{align*}

\subsection{Proof of Proposition~\ref{Proposition: merging cross-modal and uni-modal}}

To show the proposition, we validate the claimed identities one by one. For the first equation, note that
\begin{align*}
     \bigl\langle E_X(x), E_X(x') \bigr\rangle \, &=\, \Bigl(\phi\bigl(\Psi_X(x)\bigr) \otimes \widetilde{\phi}_{\gamma, X}(x)\Bigr)^\top \Bigl(\phi\bigl(\Psi_X(x')\bigr) \otimes \widetilde{\phi}_{\gamma, X}(x')\Bigr) \\
    &=\, \Bigl(\phi\bigl(\Psi_X(x)\bigr)^\top \phi\bigl(\Psi_X(x')\bigr) \Bigr) \otimes \Bigl(\widetilde{\phi}_{\gamma, X}(x)^\top \widetilde{\phi}_{\gamma, X}(x')\Bigr) \\ 
    &=\, k\bigl(\Psi_X(x),\Psi_X(x')\bigr) \Bigl(\widetilde{\phi}_{\gamma, X}(x)^\top \widetilde{\phi}_{\gamma, X}(x')\Bigr) \\ 
    &=\, k\bigl(\Psi_X(x),\Psi_X(x')\bigr) \Bigl(\frac{1}{2}\cdot\frac{C}{d}\cdot 2d  + \frac{1}{2}\cdot 2\cdot {\phi}_{\gamma, X}(x)^\top {\phi}_{\gamma, X}(x') \Bigr) \\
    &=\, k\bigl(\Psi_X(x),\Psi_X(x')\bigr) \Bigl(C  +  k\bigl(\gamma_X(x),\gamma_X(x')\bigr) \Bigr)
\end{align*}
For the second identity, note that
\begin{align*}
     \bigl\langle E_T(t), E_T(t') \bigr\rangle\, &= \, \Bigl(\phi\bigl(\Psi_T(t)\bigr) \otimes \widetilde{\phi}_{\gamma, T}(t) \Bigr)^\top \Bigl(\phi\bigl(\Psi_T(t')\bigr) \otimes \widetilde{\phi}_{\gamma, T}(t')\Bigr) \\
     &= \, \Bigl(\phi\bigl(\Psi_T(t)\bigr)^\top \phi\bigl(\Psi_T(t')\bigr)\Bigr) \otimes \Bigl(\widetilde{\phi}_{\gamma, T}(t)^\top 
\widetilde{\phi}_{\gamma, T}(t')\Bigr) \\
    &= \, k\bigl(\Psi_T(t),\Psi_T(t')\bigr) \Bigl(\widetilde{\phi}_{\gamma, T}(t)^\top 
\widetilde{\phi}_{\gamma, T}(t')\Bigr)
\\
    &= \, k\bigl(\Psi_T(t),\Psi_T(t')\bigr) \Bigl(\frac{C}{2d}\cdot 2d\Bigr) \\
     &= \, k\bigl(\Psi_T(t),\Psi_T(t')\bigr) \cdot C
\end{align*}
Finally, for the last identity, we can complete the proof as:
\begin{align*}
     \bigl\langle E_X(x), E_T(t) \bigr\rangle\, &= \, \Bigl(\phi\bigl(\Psi_X(x)\bigr) \otimes \widetilde{\phi}_{\gamma, X}(x) \Bigr)^\top \Bigl(\phi\bigl(\Psi_T(t)\bigr) \otimes \widetilde{\phi}_{\gamma, T}(t)\Bigr) \\
     &= \, \Bigl(\phi\bigl(\Psi_X(x)\bigr)^\top \phi\bigl(\Psi_T(t)\bigr)\Bigr) \otimes \Bigl(\widetilde{\phi}_{\gamma, X}(x)^\top 
\widetilde{\phi}_{\gamma, T}(t)\Bigr) \\
    &= \, k\bigl(\Psi_X(x),\Psi_T(t)\bigr) \Bigl(\widetilde{\phi}_{\gamma, X}(x)^\top 
\widetilde{\phi}_{\gamma, T}(t)\Bigr)
\\
    &= \, k\bigl(\Psi_T(t),\Psi_T(t)\bigr) \Bigl(\frac{1}{2}\cdot\frac{C}{d}\cdot 2d\Bigr) \\
     &= \, k\bigl(\Psi_X(x),\Psi_T(t)\bigr) \cdot C.
\end{align*}

\subsection{Proof of Theorem~\ref{Thm: RP for KrossFuse}}
To show this statement, we observe that the Kronceker product embedding can be written as:
\newpage
\begin{align*}
   \psi_{1,2,X}(x_i)^\top\psi_{1,2,X}(x_j) \, &=
    \,  \Bigl(\psi_{1,X}(x_i)\otimes \psi_{2,X}(x_i)\Bigr)^\top \Bigl(\psi_{1,X}(x_j)\otimes \psi_{2,X}(x_j)\Bigr) \\
     &=\,  \Bigl(\psi_{1,X}(x_i)^\top\psi_{1,X}(x_j)\Bigr) \otimes \Bigl(\psi_{2,X}(x_i)^\top \psi_{2,X}(x_j)\Bigr)
     \\
     &=\, \Bigl(\psi_{1,X}(x_i)^\top\psi_{1,X}(x_j)\Bigr) \cdot\Bigl(\psi_{2,X}(x_i)^\top \psi_{2,X}(x_j)\Bigr) \\
     &=\, \mathbb{E}_{\mathbf{u}_1}\Bigl[\psi_{1,X}(x_i)^\top \mathbf{u}_1^\top \mathbf{u}_1\psi_{1,X}(x_j)\Bigr] \cdot\mathbb{E}_{\mathbf{u}_2}\Bigl[\psi_{2,X}(x_i)^\top \mathbf{u}_2^\top \mathbf{u}_2 \psi_{2,X}(x_j)\Bigr] \\
      &=\, \mathbb{E}_{\mathbf{u}_1,\mathbf{u}_2}\biggl[\Bigl(\psi_{1,X}(x_i)^\top \mathbf{u}_1^\top \mathbf{u}_1\bigl(\psi_{1,X}(x_j)\Bigr) \cdot\Bigl(\psi_{2,X}(x_i)^\top \mathbf{u}_2^\top \mathbf{u}_2 \psi_{2,X}(x_j)\Bigr)\biggr]
\end{align*}
where in the above $\mathbf{u}_1\in\mathbb{R}^{d_1}$ and $\mathbf{u}_2\in\mathbb{R}^{ d_2}$ are independent random vectors with zero mean and covariance matrix $I_{d_1\times d_1}$ and $I_{d_2\times d_2}$. It can be seen that each row of the matrix $U_1$ and $U_2$ with uniformly-drawn entries over $[-\sqrt{3},\sqrt{3}]$ satisfies this property. As a result, for every  row $\mathbf{u}_{1,i}$ of $U_1$ and row $\mathbf{u}_{2,i}$ of $U_2$ (with randomly drawn entries with the specified distribution) we have:
\begin{align*}
\mathbb{E}_{\mathbf{u}_{1,i},\mathbf{u}_{2,i}}\Bigl[\widetilde{\psi}_{X,\mathbf{u}_{1,i},\mathbf{u}_{2,i}}(x_i)^\top \widetilde{\psi}_{X,\mathbf{u}_{1,i},\mathbf{u}_{2,i}}(x_j)\Bigr]
    \: =\: \psi_{1,2,X}(x_i)^\top\psi_{1,2,X}(x_j)
\end{align*}
Note that we have $\widetilde{\psi}_X(x_i)^\top \widetilde{\psi}_X(x_j) = \frac{1}{l}\sum_{i=1}^l \widetilde{\psi}_{X,u_{1,i},u_{2,i}}(x_i)^\top \widetilde{\psi}_{X,u_{1,i},u_{2,i}}(x_j)$. On the other hand, we know that each random variable in this empirical mean is bounded as $\big\vert \widetilde{\psi}_{X,u_{1,2,i}}(x_i)^\top \widetilde{\psi}_{X,u_{1,2,i}}(x_j) \big\vert \le \Vert \widetilde{\psi}_{X,u_{1,2,i}}(x_i)\Vert_2 \Vert \widetilde{\psi}_{X,u_{1,2,i}}(x_i)\Vert_2\le B $. Therefore, we can apply Hoeffding's inequality to show that
\begin{equation*}
    \mathbb{P}\Bigl( \Bigl\vert \widetilde{\psi}_X(x_i)^\top \widetilde{\psi}_X(x_j) - \psi_{1,2,X}(x_i)^\top\psi_{1,2,X}(x_j)  \Bigr\vert \ge \epsilon \Bigr) \le 2\exp\Bigl( \frac{-l\epsilon^2}{2B^2}\Bigr)
\end{equation*}
Setting $\delta = 2{n^2}\exp\Bigl( \frac{-l\epsilon^2}{2B^2}\Bigr)$ which implies that $\epsilon = \sqrt{\frac{2B^2\log(2n^2/\delta)}{l}}$, shows that for every $1\le i,j\le n$ we have that
\begin{equation*}
    \mathbb{P}\Bigl( \Bigl\vert \widetilde{\psi}_X(x_i)^\top \widetilde{\psi}_X(x_j) - \psi_{1,2,X}(x_i)^\top\psi_{1,2,X}(x_j)  \Bigr\vert \ge \sqrt{\frac{2B^2\log(2n^2/\delta)}{l}} \Bigr) \le \frac{\delta}{n^2}
\end{equation*}
Thus, applying the union bound to all $n^2$ pairs $(i,j)\in\{1,\ldots ,n\}^2$ shows that
\begin{equation*}
    \mathbb{P}\Bigl( \forall i,j: \Bigl\vert \widetilde{\psi}_X(x_i)^\top \widetilde{\psi}_X(x_j) - \psi_{1,2,X}(x_i)^\top\psi_{1,2,X}(x_j)  \Bigr\vert \le \sqrt{\frac{2B^2\log(2n^2/\delta)}{l}} \Bigr) \ge 1-\delta.
\end{equation*}
We can similarly prove the above result for the other modality:
\begin{equation*}
    \mathbb{P}\Bigl( \forall i,j: \Bigl\vert \widetilde{\psi}_T(t_i)^\top \widetilde{\psi}_T(t_j) - \psi_{1,2,T}(t_i)^\top\psi_{1,2,T}(t_j)  \Bigr\vert \le \sqrt{\frac{2B^2\log(2n^2/\delta)}{l}} \Bigr) \ge 1-\delta.
\end{equation*}
This completes the proof.

\subsection{Extending KrossFuse to Infinite-dimension Shift-Invariant Kernels}
\label{shift invariance kernel}
As we discussed in the main text, a feasible application of KrossFuse requires a feature map $\phi:\mathcal{X}\rightarrow \mathbb{R}^s$ mapping to a finite-dimensional space with $s<\infty$. However, this assumption does not apply to popular shift-invariant kernels including the Gaussian (RBF) kernel $k_{\text{gaussian}(\sigma)}(x,y)= \exp\bigl(- \Vert x-y\Vert_2^2 / 2\sigma^2\bigr)$ and the Laplace kernel $k_{\text{laplace}(\eta)}(x,y)= \exp\bigl(- \Vert x-y\Vert_1 / \eta\bigr)$.

To extend the KrossFuse application to a general shift-invariant kernel $k(x,y)=\kappa(x-y)$ without any assumption on the finiteness of its feature map, we propose the application of the random Fourier feature (RFF) framework in \cite{rahimi2007random}. Note that if a shift-invariant map $k(x,y)=\kappa(x-y)$ for $\kappa:\mathcal{X} \rightarrow \mathbb{R}$ satisfies the positive semi-definite property of a kernel function, Bochner's theorem shows that the Fourier transform $\widehat{\kappa}:\mathbb{R}^d\rightarrow \mathbb{R}$ of $\kappa$ will take real non-negative values everywhere, $\widehat{\kappa}(\omega)\ge 0,\, \forall \omega$. Note that we define the Fourier transform as follows where $\langle \omega , x \rangle$ denotes the standard inner product in the $\mathcal{X}$ space.
\begin{equation}\label{Eq: Fourier transform}
    \widehat{\kappa}(\omega) = \frac{1}{(2\pi)^d}\int_{\mathcal{X}} \kappa(x)\exp\bigl(-i\langle \omega , x \rangle\bigr) \mathrm{d}x.
\end{equation}
Given the above definition, it can be seen that the synthesis equation implies $\kappa(0) = \int_{\omega}\widehat{\kappa}(\omega)\mathrm{d}\omega $, which means $\widehat{\kappa}$ is a valid probability density function (PDF) for every normalized shift-invariant kernel $k$ satisfying $k(x,x) = \kappa(0) =1$.

Therefore, the synthesis equation shows that
\begin{align*}
    k(x,y) \, =&\, \kappa(x-y) 
\\
=&\, \int_\mathcal{X}\widehat{\kappa}(\omega)\exp\bigl(i\omega^\top(x-y)\bigr)\mathrm{d}\omega \\
=&\, \int_\mathcal{X}\widehat{\kappa}(\omega)\cos\bigl(\omega^\top(x-y)\bigr)\mathrm{d}\omega \\
=&\, \mathbb{E}_{\omega\sim \widehat{\kappa}}\Bigl[ \cos\bigl(\omega^\top(x-y)\bigr) \Bigr] \\
=&\, \mathbb{E}_{\omega\sim \widehat{\kappa}}\Bigl[ \cos\bigl(\omega^\top x\bigr)\cos\bigl(\omega^\top y\bigr) + \sin\bigl(\omega^\top x\bigr)\sin\bigl(\omega^\top y\bigr) \Bigr] \\
=&\, \mathbb{E}_{\omega\sim \widehat{\kappa}}\Bigl[ \bigl[ \cos\bigl(\omega^\top x\bigr) , \sin\bigl(\omega^\top x\bigr)\bigr]^\top \bigl[ \cos\bigl(\omega^\top y\bigr) , \sin\bigl(\omega^\top y\bigr)\bigr] \Bigr]
\end{align*}
Note that given a single embedding $\gamma: \mathcal{X} \rightarrow \mathbb{R}^d$, the above characterization leads to the standard RFF framework, where for a RFF feature size $r\in\mathbb{N}$, we draw IID random samples $\omega_1,\ldots , \omega_r\sim \widehat{\kappa}$ and define the following RFF proxy map $\phi_r:\mathbb{R}^d\rightarrow \mathbb{R}^{2r}$:
\begin{equation}\label{Eq: one-dimensional RFF}
   \phi_r(z)= \frac{1}{\sqrt{r}}\Bigl[\cos\bigl(\omega_1^\top z\bigr), \sin\bigl(\omega_1^\top z\bigr),\ldots,\cos\bigl(\omega_r^\top z\bigr), \sin\bigl(\omega_r^\top z\bigr) \Bigr] 
\end{equation}
\iffalse
Applying Hoeffding's inequality suggests that for every $x,x'\in\mathcal{X}$, the above proxy RFF vector can approximate the kernel function as
\begin{equation}
    \mathbb{P}\Bigl(\, \Bigl\vert k\bigl(\gamma(x),\gamma(x')\bigr) - \bigl\langle \phi_r(\gamma(x)), \phi_r(\gamma(x')) \bigr\rangle 
 \Bigr\vert \ge \epsilon\, \Bigr) \, \le\, 2\exp\bigl(-2r\epsilon^2 \bigr)
\end{equation}
\fi
However, the application of the RFF framework to the Kroncker product of the embeddings $\gamma_1$ and $\gamma_2$ remains unclear. In this work, we propose a joint sampling of RFF features for the Kronceker product of embeddings. More specifically, suppose we consider shift-invariant kernel functions $k_1(x,x') = \kappa_1(x-x')$ for embedding map $\gamma_1$ and $k_2(x,x') = \kappa_2(x-x')$ for embedding map $\gamma_2$. Then, we consider the joint probability density function $M(\omega_1,\omega_2)=\widehat{\kappa}_1(\omega_1)\cdot \widehat{\kappa}_2(\omega_2)$ for independent variables $\omega_1,\omega_2$.

For applying the RFF framework to the Kronecker product of the embeddings under shift-invaraint kernels, we propose IID sampling $(\omega^{(i)}_1, \omega^{(i)}_2)\sim M$, i.e., we draw the $r$ samples jointly, instead of generating them separately for each samples and consider the grid-based pairing of the drawn samples. Given the $r$ drawn samples $\bigl(\omega^{(i)}_1, \omega^{(i)}_2\bigr)_{i=1}^r$, we define the following joint RFF feature map: 
\begin{align}\label{Eq: two-dimensional RFF}
   \widehat{\phi}_r(z_1,z_2)= \frac{1}{\sqrt{r}}\Bigl[&\cos \bigl(z_1^\top {\omega^{(1)}_1} + z_2^\top {\omega^{(1)}_2}\bigr), \sin\bigl(z_1^\top {\omega^{(1)}_1} + z_2^\top {\omega^{(1)}_2}\bigr),\nonumber\\
   &\ldots,\cos\bigl(z_1^\top {\omega^{(r)}_1} + z_2^\top {\omega^{(r)}_2}\bigr), \sin\bigl(z_1^\top {\omega^{(r)}_1} + z_2^\top {\omega^{(r)}_2}\bigr) \Bigr] 
\end{align}
Similar to the standard case in the single embedding application of Fourier features, we can prove the following proposition:
\begin{theorem}\label{Thm: Appendix RFF uni-modal product}
Consider two embedding maps $\gamma_1:\mathcal{X}\rightarrow \mathbb{R}^{d_1} $ and $\gamma_2:\mathcal{X}\rightarrow \mathbb{R}^{d_2} $, applied with shift-invariant kernel similarity functions $k_1 : \mathbb{R}^{d_1}\times \mathbb{R}^{d_1}\rightarrow\mathbb{R} $ and $k_2 : \mathbb{R}^{d_2}\times \mathbb{R}^{d_2}\rightarrow\mathbb{R} $, respectively. Assume $(\omega_1^{(i)},\omega_2^{(i)})_{i=1}^r$ are drawn independently according to $\widehat{\kappa}_1 \times \widehat{\kappa}_2$, and define the proxy feature map $\widehat{\phi}_r : \mathbb{R}^{d_1} \times \mathbb{R}^{d_2} \rightarrow \mathbb{R}^{2r} $ as in \eqref{Eq: two-dimensional RFF}. Then, for every $x,y\in\mathcal{X}$ and every $\delta>0$, the following holds with probability at least $1-\delta$:
\begin{equation*}
    \Bigl\vert  \,k_1\bigl(\gamma_1(x),\gamma_1(y)\bigr)\cdot k_2\bigl(\gamma_2(x),\gamma_2(y)\bigr) - \bigl\langle \phi_r\bigl(\gamma_1(x),\gamma_2(x)\bigr), \phi_r\bigl(\gamma_1(y),\gamma_2(y)\bigr) \bigr\rangle \,
 \Bigr\vert \le \sqrt{\frac{2\log(2/\delta)}{r}}
\end{equation*}
\iffalse
\begin{equation*}
    \mathbb{P}\Bigl(\, \Bigl\vert k_1\bigl(\gamma_1(x),\gamma_1(y)\bigr)\cdot k_2\bigl(\gamma_2(x),\gamma_2(y)\bigr) - \bigl\langle \phi_r\bigl(\gamma_1(x),\gamma_2(x)\bigr), \phi_r\bigl(\gamma_1(y),\gamma_2(y)\bigr) \bigr\rangle 
 \Bigr\vert \ge \epsilon\, \Bigr) \, \le\, 2\exp\bigl(-2r\epsilon^2 \bigr)
\end{equation*}
\fi
\end{theorem}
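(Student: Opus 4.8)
The plan is to recognize the inner product $\bigl\langle \widehat{\phi}_r(\gamma_1(x),\gamma_2(x)),\, \widehat{\phi}_r(\gamma_1(y),\gamma_2(y))\bigr\rangle$ of the joint RFF feature maps defined in \eqref{Eq: two-dimensional RFF} as an empirical average of $r$ i.i.d.\ bounded random variables whose common mean is exactly $k_1\bigl(\gamma_1(x),\gamma_1(y)\bigr)\cdot k_2\bigl(\gamma_2(x),\gamma_2(y)\bigr)$, and then apply Hoeffding's inequality. This mirrors the single-kernel RFF argument of \cite{rahimi2007random}, with the extra ingredient being the factorization of the expectation across the two independent frequency variables.

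First I would unfold the inner product. Writing $u_1 = \gamma_1(x)-\gamma_1(y)$ and $u_2 = \gamma_2(x)-\gamma_2(y)$, and pairing each cosine coordinate of $\widehat{\phi}_r$ with its companion sine coordinate, the identity $\cos\alpha\cos\beta + \sin\alpha\sin\beta = \cos(\alpha-\beta)$ collapses each of the $r$ blocks to a single term, yielding
\begin{equation*}
\bigl\langle \widehat{\phi}_r(\gamma_1(x),\gamma_2(x)),\, \widehat{\phi}_r(\gamma_1(y),\gamma_2(y))\bigr\rangle \,=\, \frac{1}{r}\sum_{i=1}^r \cos\bigl( u_1^\top \omega_1^{(i)} + u_2^\top \omega_2^{(i)} \bigr).
\end{equation*}
Each summand lies in $[-1,1]$, and the summands are i.i.d.\ since the pairs $(\omega_1^{(i)},\omega_2^{(i)})$ are drawn i.i.d.\ from $\widehat{\kappa}_1\times\widehat{\kappa}_2$.

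Next I would compute the expectation of a single summand. Expanding $\cos(A+B) = \cos A\cos B - \sin A\sin B$ with $A = u_1^\top\omega_1$ and $B = u_2^\top\omega_2$, and using the independence of $\omega_1$ and $\omega_2$ under $\widehat{\kappa}_1\times\widehat{\kappa}_2$, the expectation factorizes as $\mathbb{E}[\cos A]\,\mathbb{E}[\cos B] - \mathbb{E}[\sin A]\,\mathbb{E}[\sin B]$. Since each $k_j$ is a real-valued symmetric kernel, $\kappa_j$ is real and even, so by Bochner's theorem (as recalled above) its Fourier transform $\widehat{\kappa}_j$ is a \emph{symmetric} probability density; hence $\mathbb{E}_{\omega_j\sim\widehat{\kappa}_j}\bigl[\sin(u_j^\top\omega_j)\bigr] = 0$ and $\mathbb{E}_{\omega_j\sim\widehat{\kappa}_j}\bigl[\cos(u_j^\top\omega_j)\bigr] = \kappa_j(u_j) = k_j\bigl(\gamma_j(x),\gamma_j(y)\bigr)$. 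Therefore the expectation of each summand equals $k_1\bigl(\gamma_1(x),\gamma_1(y)\bigr)\cdot k_2\bigl(\gamma_2(x),\gamma_2(y)\bigr)$, exactly the target quantity.

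Finally I would apply Hoeffding's inequality to the average of $r$ i.i.d.\ variables valued in an interval of length $2$, which gives $\mathbb{P}\bigl(\,\lvert\cdot\rvert \ge \epsilon\,\bigr) \le 2\exp\bigl(-r\epsilon^2/2\bigr)$; setting the right-hand side equal to $\delta$ yields $\epsilon = \sqrt{2\log(2/\delta)/r}$, which is the claimed bound. I do not expect a serious obstacle: the one point requiring care is the vanishing of the sine terms in expectation, i.e.\ the symmetry of the spectral densities $\widehat{\kappa}_j$, which is precisely where the "real, shift-invariant, PSD" hypotheses on $k_1,k_2$ enter (and which is what makes the cosine-only form of the RFF map, without an explicit imaginary component, legitimate for the product kernel).
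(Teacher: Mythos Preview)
Your proposal is correct and follows essentially the same approach as the paper: rewrite the inner product as an average of $r$ i.i.d.\ terms $\cos\bigl(u_1^\top\omega_1^{(i)}+u_2^\top\omega_2^{(i)}\bigr)\in[-1,1]$, identify their common mean as $k_1\cdot k_2$, and apply Hoeffding. The only cosmetic difference is in the mean computation: the paper passes through complex exponentials (writing each $k_j=\mathbb{E}[\exp(i\omega_j^\top u_j)]$, multiplying, and dropping the imaginary part at the end), whereas you expand $\cos(A+B)$ directly and invoke the symmetry of $\widehat{\kappa}_j$ to kill $\mathbb{E}[\sin A]$ and $\mathbb{E}[\sin B]$; both routes rely on the same underlying evenness of the spectral densities.
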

\begin{proof}
  Given that $k_1(x,y)=\kappa_1(x-y)$ is a shift-invariant kernel that satisfies $\kappa_1(0)=1$, we can deduce that 

  \begin{align*}
    k_1\bigl(\gamma_1(x),\gamma_1(y)\bigr) \, =&\,  \mathbb{E}_{\omega_1\sim \widehat{\kappa}_1}\Bigl[ \cos\bigl(\omega_1^\top(\gamma_1(x)-\gamma_1(y))\bigr) \Bigr].
\end{align*}
Similarly, we can observe that $k_2\bigl(\gamma_2(x),\gamma_2(y)\bigr) \, =\,  \mathbb{E}_{\omega_2\sim \widehat{\kappa}_2}\Bigl[ \cos\bigl(\omega_2^\top(\gamma_2(x)-\gamma_2(y))\bigr) \Bigr]$, which results in 
\begin{align*}
    &k_1\bigl(\gamma_1(x),\gamma_1(y)\bigr)\cdot k_2\bigl(\gamma_2(x),\gamma_2(y)\bigr)
    \\
     =\,  &\mathbb{E}_{\omega_1\sim \widehat{\kappa}_1}\Bigl[ \exp\bigl(i\omega_1^\top(\gamma_1(x)-\gamma_1(y))\bigr) \Bigr]\cdot\mathbb{E}_{\omega_2\sim \widehat{\kappa}_2}\Bigl[ \exp\bigl(i\omega_2^\top(\gamma_2(x)-\gamma_2(y))\bigr) \Bigr] \\
     =\,  &\mathbb{E}_{(\omega_1,\omega_2)\sim \widehat{\kappa}_1\times \widehat{\kappa}_2}\Bigl[ \exp\bigl(i\omega_1^\top(\gamma_1(x)-\gamma_1(y))\bigr)  \cdot\exp\bigl(i\omega_2^\top(\gamma_2(x)-\gamma_2(y))\bigr) \Bigr] \\
     =\,  &\mathbb{E}_{(\omega_1,\omega_2)\sim \widehat{\kappa}_1\times \widehat{\kappa}_2}\Bigl[ \exp\Bigl(i\omega_1^\top(\gamma_1(x)-\gamma_1(y))+i\omega_2^\top(\gamma_2(x)-\gamma_2(y))\Bigr) \Bigr] \\
     =\,  &\mathbb{E}_{(\omega_1,\omega_2)\sim \widehat{\kappa}_1\times \widehat{\kappa}_2}\Bigl[ \exp\Bigl(i\Bigl(\bigl(\omega_1^\top\gamma_1(x)+\omega_2^\top\gamma_2(x)\bigr)-\bigl(\omega_1^\top\gamma_1(y)+\omega_2^\top \gamma_2(y)\Bigr)\Bigr) \Bigr] \\
     =\,  &\mathbb{E}_{(\omega_1,\omega_2)\sim \widehat{\kappa}_1\times \widehat{\kappa}_2}\Bigl[ \cos\Bigl(\bigl(\omega_1^\top\gamma_1(x)+\omega_2^\top\gamma_2(x)\bigr)-\bigl(\omega_1^\top\gamma_1(y)+\omega_2^\top \gamma_2(y)\bigr)\Bigr) \Bigr]\\
     &\;\, + i\cdot \mathbb{E}_{(\omega_1,\omega_2)\sim \widehat{\kappa}_1\times \widehat{\kappa}_2}\Bigl[ \sin\Bigl(\bigl(\omega_1^\top\gamma_1(x)+\omega_2^\top\gamma_2(x)\bigr)-\bigl(\omega_1^\top\gamma_1(y)+\omega_2^\top \gamma_2(y)\bigr)\Bigr) \Bigr] \\
     =\,  &\mathbb{E}_{(\omega_1,\omega_2)\sim \widehat{\kappa}_1\times \widehat{\kappa}_2}\Bigl[ \cos\Bigl(\bigl(\omega_1^\top\gamma_1(x)+\omega_2^\top\gamma_2(x)\bigr)-\bigl(\omega_1^\top\gamma_1(y)+\omega_2^\top \gamma_2(y)\bigr)\Bigr) \Bigr] \\
     =\,  &\mathbb{E}_{(\omega_1,\omega_2)\sim \widehat{\kappa}_1\times \widehat{\kappa}_2}\Bigl[ \cos\bigl(\omega_1^\top\gamma_1(x)+\omega_2^\top\gamma_2(x)\bigr) \cdot \cos\bigl(\omega_1^\top\gamma_1(y)+\omega_2^\top \gamma_2(y)\bigr) \\
     &\qquad\qquad\quad  + \sin\bigl(\omega_1^\top\gamma_1(x)+\omega_2^\top\gamma_2(x)\bigr) \cdot \sin\bigl(\omega_1^\top\gamma_1(y)+\omega_2^\top \gamma_2(y)\bigr) \Bigr].
\end{align*}
Therefore, since $\big\vert\cos(\cdot)\big\vert \le 1$, the application of Hoeffding's inequality implies that:
\begin{align*}
    &\mathbb{P}\biggl(\Bigl\vert  k_1\bigl(\gamma_1(x),\gamma_1(y)\bigr)\cdot k_2\bigl(\gamma_2(x),\gamma_2(y)\bigr) - \bigl\langle \phi_r\bigl(\gamma_1(x),\gamma_2(x)\bigr), \phi_r\bigl(\gamma_1(y),\gamma_2(y)\bigr) \bigr\rangle \Bigr\vert \ge \epsilon
    \biggr)  \\
    =\,  & \mathbb{P}\biggl(\Bigl|\: \mathbb{E}_{(\omega_1,\omega_2)\sim \widehat{\kappa}_1\times \widehat{\kappa}_2}\Bigl[ \cos\Bigl(\bigl(\omega_1^\top\gamma_1(x)+\omega_2^\top\gamma_2(x)\bigr)-\bigl(\omega_1^\top\gamma_1(y)+\omega_2^\top \gamma_2(y)\bigr)\Bigr)\Bigr]  \\
    &\qquad - \frac{1}{r}\sum_{j=1}^r\Bigl[\cos\Bigl(\bigl(\omega_1^{(j)\top}\gamma_1(x)+\omega_2^{(j)\top}\gamma_2(x)\bigr)-\bigl(\omega_1^{(j)\top}\gamma_1(y)+\omega_2^{(j)\top} \gamma_2(y)\bigr)\Bigr)\Bigr]\: \Bigr|  \, \ge \, \epsilon
  \biggr) \\
  \le\, &  2\exp\Bigl(\frac{-2r\epsilon^2}{4}\Bigr) \\
  =\, & 2\exp\Bigl(\frac{-r\epsilon^2}{2}\Bigr)
\end{align*}
If we let $\delta = 2\exp\bigl(\frac{-r\epsilon^2}{2}\bigr)$, i.e., $\epsilon = \sqrt{\frac{2\log(2/\delta)}{r}}$, then we can equivalently write
\begin{align*}
    \mathbb{P}\biggl(\,&\Bigl\vert  k_1\bigl(\gamma_1(x),\gamma_1(y)\bigr)\cdot k_2\bigl(\gamma_2(x),\gamma_2(y)\bigr) - \bigl\langle \phi_r\bigl(\gamma_1(x),\gamma_2(x)\bigr), \phi_r\bigl(\gamma_1(y),\gamma_2(y)\bigr) \bigr\rangle \Bigr\vert\\
    &\: \ge\, \sqrt{\frac{2\log(2/\delta)}{r}}\, 
    \biggr)\,  \le\, \delta
\end{align*}
which completes the proof.
\end{proof}

Following Theorem 2, we can extend the random projection fusion of uni-modal embeddings to the settings with shift-invariant kernels that possess an infinite-dimensional feature map. In this case, the fusion will consider the jointly-drawn RFFs according to \eqref{Eq: two-dimensional RFF} of embeddings $\gamma_1$ with a shift-invariant kernel $k_1$ and $\gamma_2$ with a shift-invariant kernel $k_2$. This $2r$-dimensional vector will be the proxy fusion of the embedding maps (using the shift-invariant kernel similarity functions $k_1,k_2$). Theorem~\ref{Thm: RP for KrossFuse} also proves that the proxy kernel function of this RFF-based fusion is, with high probability, close to the product of marginal kernel functions that is the kernel function of the Kronecker product of the embeddings.

To further extend this discussion to the KrossFuse fusion of the cross-modal $\gamma=(\gamma_X,\gamma_T)$ and uni-modal $\psi_X$, we derive the following formulation. We first generate the RFF features in \eqref{Eq: two-dimensional RFF} to obtain the $2r$-dimensional map $\psi_{r}(x)$ for the shared modality. Extending the RP-KrossFuse definition to shift-invariant kernels, we obtain the following fused embeddings $ \widetilde{\psi}_X:\mathcal{X}\rightarrow \mathbb{R}^{8r}$ and $\widetilde{\psi}_T:\mathcal{T}\rightarrow \mathbb{R}^{8r}$ for the uni-modal embedding $\psi_X$ functioning only only the modality $X$ and cross-modal embedding $\gamma=(\gamma_X,\gamma_T)$:
\begin{align*}
    \widetilde{\psi}_X(x)\, :=&\, \frac{1}{\sqrt{2r}}\biggl[\sqrt{C}\cos \bigl( \psi_X(x)^\top {\omega^{(1)}_2}\bigr) + \cos\Bigl(\gamma_X(x)^\top {\omega^{(1)}_1} + \psi_X(x)^\top {\omega^{(1)}_2}\Bigr), \\
    &\qquad\;\;\;\, \sqrt{C}\cos \bigl( \psi_X(x)^\top {\omega^{(1)}_2}\bigr) - \cos\Bigl(\gamma_X(x)^\top {\omega^{(1)}_1} + \psi_X(x)^\top {\omega^{(1)}_2}\Bigr), \\
    &\qquad\;\;\;\, \sqrt{C}\cos \bigl( \psi_X(x)^\top {\omega^{(1)}_2}\bigr) + \sin\Bigl(\gamma_X(x)^\top {\omega^{(1)}_1} + \psi_X(x)^\top {\omega^{(1)}_2}\Bigr), \\
    &\qquad\;\;\;\, \sqrt{C}\cos \bigl( \psi_X(x)^\top {\omega^{(1)}_2}\bigr) - \sin\Bigl(\gamma_X(x)^\top {\omega^{(1)}_1} + \psi_X(x)^\top {\omega^{(1)}_2}\Bigr), \\
    &\qquad\;\;\;\, \sqrt{C}\sin \bigl( \psi_X(x)^\top {\omega^{(1)}_2}\bigr) + \cos\Bigl(\gamma_X(x)^\top {\omega^{(1)}_1} + \psi_X(x)^\top {\omega^{(1)}_2}\Bigr), \\
    &\qquad\;\;\;\, \sqrt{C}\sin \bigl( \psi_X(x)^\top {\omega^{(1)}_2}\bigr) - \cos\Bigl(\gamma_X(x)^\top {\omega^{(1)}_1} + \psi_X(x)^\top {\omega^{(1)}_2}\Bigr), \\
    &\qquad\;\;\;\, \sqrt{C}\sin \bigl( \psi_X(x)^\top {\omega^{(1)}_2}\bigr) + \sin\Bigl(\gamma_X(x)^\top {\omega^{(1)}_1} + \psi_X(x)^\top {\omega^{(1)}_2}\Bigr), \\
    &\qquad\;\;\;\, \sqrt{C}\sin \bigl( \psi_X(x)^\top {\omega^{(1)}_2}\bigr) - \sin\Bigl(\gamma_X(x)^\top {\omega^{(1)}_1} + \psi_X(x)^\top {\omega^{(1)}_2}\Bigr) \\
     &\qquad\;\;\;\, , \ldots , \\
     &\qquad\;\;\;\,\sqrt{C}\cos \bigl( \psi_X(x)^\top {\omega^{(r)}_2}\bigr) + \cos\Bigl(\gamma_X(x)^\top {\omega^{(r)}_1} + \psi_X(x)^\top {\omega^{(r)}_2}\Bigr), \\
    &\qquad\;\;\;\, \sqrt{C}\cos \bigl( \psi_X(x)^\top {\omega^{(r)}_2}\bigr) - \cos\Bigl(\gamma_X(x)^\top {\omega^{(r)}_1} + \psi_X(x)^\top {\omega^{(r)}_2}\Bigr), \\
    &\qquad\;\;\;\, \sqrt{C}\cos \bigl( \psi_X(x)^\top {\omega^{(r)}_2}\bigr) + \sin\Bigl(\gamma_X(x)^\top {\omega^{(r)}_1} + \psi_X(x)^\top {\omega^{(r)}_2}\Bigr), \\
    &\qquad\;\;\;\, \sqrt{C}\cos \bigl( \psi_X(x)^\top {\omega^{(r)}_2}\bigr) - \sin\Bigl(\gamma_X(x)^\top {\omega^{(r)}_1} + \psi_X(x)^\top {\omega^{(r)}_2}\Bigr), \\
    &\qquad\;\;\;\, \sqrt{C}\sin \bigl( \psi_X(x)^\top {\omega^{(r)}_2}\bigr) + \cos\Bigl(\gamma_X(x)^\top {\omega^{(r)}_1} + \psi_X(x)^\top {\omega^{(r)}_2}\Bigr), \\
    &\qquad\;\;\;\, \sqrt{C}\sin \bigl( \psi_X(x)^\top {\omega^{(r)}_2}\bigr) - \cos\Bigl(\gamma_X(x)^\top {\omega^{(r)}_1} + \psi_X(x)^\top {\omega^{(r)}_2}\Bigr), \\
    &\qquad\;\;\;\, \sqrt{C}\sin \bigl( \psi_X(x)^\top {\omega^{(r)}_2}\bigr) + \sin\Bigl(\gamma_X(x)^\top {\omega^{(r)}_1} + \psi_X(x)^\top {\omega^{(r)}_2}\Bigr), \\
    &\qquad\;\;\;\, \sqrt{C}\sin \bigl( \psi_X(x)^\top {\omega^{(r)}_2}\bigr) - \sin\Bigl(\gamma_X(x)^\top {\omega^{(r)}_1} + \psi_X(x)^\top {\omega^{(r)}_2}\Bigr)\biggr]
\end{align*}
and
\begin{align*}
    \widetilde{\psi}_T(t)\, :=&\, \frac{1}{\sqrt{2r}}\biggl[\sqrt{C} + \cos\Bigl(\gamma_T(t)^\top {\omega^{(1)}_1} \Bigr), \sqrt{C} - \cos\Bigl(\gamma_T(t)^\top {\omega^{(1)}_1} \Bigr), \sqrt{C} + \sin\Bigl(\gamma_T(t)^\top {\omega^{(1)}_1} \Bigr)\\
     &\qquad\;\;\;\, , \sqrt{C} - \sin\Bigl(\gamma_T(t)^\top {\omega^{(1)}_1} \Bigr), \\
    &\qquad\;\;\;\,  \cos\Bigl(\gamma_T(t)^\top {\omega^{(1)}_1} \Bigr),   - \cos\Bigl(\gamma_T(t)^\top {\omega^{(1)}_1} \Bigr),  \sin\Bigl(\gamma_T(t)^\top {\omega^{(1)}_1} \Bigr),   - \sin\Bigl(\gamma_T(t)^\top {\omega^{(1)}_1} \Bigr) \\
     &\qquad\;\;\;\, , \ldots , \\
     &\qquad\;\;\;\,\sqrt{C} + \cos\Bigl(\gamma_T(t)^\top {\omega^{(r)}_1} \Bigr), \sqrt{C} - \cos\Bigl(\gamma_T(t)^\top {\omega^{(r)}_1} \Bigr),\sqrt{C} + \sin\Bigl(\gamma_T(t)^\top {\omega^{(r)}_1} \Bigr)\\
      &\qquad\;\;\;\,  \sqrt{C} - \sin\Bigl(\gamma_T(t)^\top {\omega^{(r)}_1} \Bigr), \\
    &\qquad\;\;\;\,  \cos\Bigl(\gamma_T(t)^\top {\omega^{(r)}_1} \Bigr),  - \cos\Bigl(\gamma_T(t)^\top {\omega^{(r)}_1} \Bigr),   \sin\Bigl(\gamma_T(t)^\top {\omega^{(r)}_1} \Bigr),   - \sin\Bigl(\gamma_T(t)^\top {\omega^{(r)}_1} \Bigr)\biggr]
\end{align*}

\subsection{Algorithm of Fusing Cross-modal Embeddings and Uni-modal Embeddings}
\begin{algorithm}[t]
\caption{ Kernel Feature Fusion of Cross-modal Embeddings and Uni-modal Embeddings}
\label{alg:kernel_fusion_rp_concise}
\begin{algorithmic}[1]
\State \textbf{Input:} Image samples $\{x_i\}_{i=1}^{N_x}$, Text samples $\{y_j\}_{j=1}^{N_y}$, Cross-modal encoder $\gamma_1$, Uni-image encoder $\gamma_2$ and Uni-text encoder $\gamma_3$, Kernel maps $\phi_1, \phi_2, \phi_3 $, Constant $c$, Projected dim $l$

% \State $U_1 \sim \text{Uniform}[-\sqrt{3}, \sqrt{3}]^{d_{\phi_1} \times l}/\sqrt{l}$
% \State $U_2 \sim \text{Uniform}[-\sqrt{3}, \sqrt{3}]^{d_{\phi_2} \times l}/\sqrt{l}$
% \State $U_3 \sim \text{Uniform}[-\sqrt{3}, \sqrt{3}]^{d_{\phi_3} \times l}/\sqrt{l}$
\State $U_i \sim \text{Uniform}[-\sqrt{3}, \sqrt{3}]^{d_{\phi_i} \times l}/\sqrt{l}, \quad \text{for } i=1, 2, 3.$
\State Initialize $Z^{\text{img}} \in \mathbb{R}^{N_x \times l}$, $Z^{\text{text}} \in \mathbb{R}^{N_y \times l}$

% \State \Comment{Process Image Samples}
\For{batch $\mathcal{B}_x$ in $\{x_i\}$} \Comment{Process Image Samples}
    \State $\psi_{1, \mathcal{B}_x} \gets \phi_1(\gamma_1(\mathcal{B}_x))$, $\psi_{2, \mathcal{B}_x} \gets \phi_2(\gamma_2(\mathcal{B}_x))$, $\psi_{3, \mathcal{B}_x} \gets c \cdot \mathbf{1}_{|\mathcal{B}_x| \times2d_{\phi_3}}$
    \State $\psi'_{2, \mathcal{B}_x} \gets \text{concat}(c\mathbf{1} + \psi_{2, \mathcal{B}_x}, c\mathbf{1} - \psi_{2, \mathcal{B}_x})$ 
    \State $Z^{\text{img}}_{\mathcal{B}_x} \gets (\psi_{1, \mathcal{B}_x}  U_1 ) \odot (\psi'_{2, \mathcal{B}_x}U_2 ) \odot (\psi_{3, \mathcal{B}_x}U_3 )$
\EndFor

% \State \Comment{Process Text Samples}
\For{batch $\mathcal{B}_y$ in $\{y_j\}$} \Comment{Process Text Samples}
    \State $\psi_{1, \mathcal{B}_y} \gets \phi_1(\gamma_1(\mathcal{B}_y))$, $\psi_{2, \mathcal{B}_y} \gets c \cdot \mathbf{1}_{|\mathcal{B}_y| \times2d_{\phi_2}}$, $\psi_{3, \mathcal{B}_y} \gets \phi_3(\gamma_3(\mathcal{B}_y))$
    \State $\psi'_{3, \mathcal{B}_y} \gets \text{concat}(c\mathbf{1} + \psi_{3, \mathcal{B}_y}, c\mathbf{1} - \psi_{3, \mathcal{B}_y})$ 
    \State $Z^{\text{text}}_{\mathcal{B}_y} \gets ( \psi_{1, \mathcal{B}_y}U_1 ) \odot (\psi_{2, \mathcal{B}_y} U_2 ) \odot (\psi'_{3, \mathcal{B}_y} U_3 )$
\EndFor
\State \textbf{Return} $Z^{\text{img}}, Z^{\text{text}}$
\end{algorithmic}
\end{algorithm}

\newpage
\section{Implementation Details}
\label{app:implementation}

\subsection{Experiments Setup}
We tested several pre-trained embeddings in our experiments, including multiple variants of CLIP~\cite{radford2021learning}, DINOv2~\cite{oquab2023dinov2}, Unicom~\cite{an2023unicom}, Sroberta~\cite{reimers2019sentence}, E5~\cite{wang2023text}, and Siglip~\cite{zhai2023sigmoid}. Unless otherwise specified, we primarily report results in the main text using standard CLIP(ViT-B/32), DINOv2(ViT-B/14), and Sroberta. The projection matrices $U_i$ we generate once using the uniform distribution over $[-\sqrt{3},\sqrt{3}]$, that has unit variance. The projection dimension $l$ is 3000 for all datasets except 5000 in the case of ImageNet. The parameter $C$ has been determined using cross-valdiation over $\{10^-3,10^-2,\ldots, 10^3\}$. All experiments were run on 2 RTX-4090 GPUs.

\subsection{Baselines Setup}
\label{app: baseline}
Our experiments analyze the fusion of cross-modal and uni-modal representations where we utilize the pretrained expert encoders without additional fine-tuning. %Here we focus on comparing with strong late fusion strategy, which is commonly used in modality fusion scenarios. Although previous works~\cite{wang2020makes,peng2022balanced} often explore multi-layer fusion, 
In our numerical analysis, we consider the last-layer output for every attempted encoder. %It is worth noting that our fusion is not between two unimodal embeddings from different modalities; instead, we fuse a modality-specific embedding into a cross modal embedding, aiming to enhance the unimodal performance and preserve the cross-modal ability. 
To the best of our knowledge, the task of fusing cross-modal and uni-modal embeddings has not been exclusively analyzed in the literature. Therefore, we emphasize that the baseline methods discussed in our analysis are fusion methods proposed for two unimodal embeddings, which cannot be applied to the zero-shot classification tasks. %Obviously, they don't have the zero shot ability in cross modal scenarios while RP-KrossFuse could preserve it. Therefore, the comparisons aim to evaluate the fusion performance in the unimodal embedding settings.

The baseline methods in our analysis are:
(1) \textit{Kronecker Product of Marginal Random Projection (KPoMRP).}  
This baseline applies an independent random projection separately to each embedding output, followed by a Kronecker product to obtain the fused representation. Note that our proposed RP-KrossFuse functions differently and samples the random features jointly for the output of the two embeddings. %It serves as a strong non-parametric fusion baseline that leverages feature interactions without learning parameters.
(2) \textit{Gated Fusion.}  
This baseline represents the application of the Mixture-of-Experts (MoE) method discussed in \cite{shazeer2017outrageously}, where a gating mechanism dynamically controls the contribution of different feature sources. Specifically, each feature is first passed to an MLP layer, then the outputs are passed through a sigmoid gating function to compute a dynamic fusion weight.
(3) \textit{Attentional Fusion.}   
The baseline follows the self-attentional fusion framework proposed in~\cite{zhao2025enhancingsentimentanalysismultimodal}. Projected features from each encoder are concatenated and passed through a self-attention module to dynamically aggregate information. 
% Unlike traditional cross-modal attention, our design focuses on permutation-invariant feature aggregation, following practices inspired by self-attention architectures~\cite{vaswani2017attention} and set-based aggregation methods~\cite{lee2019set}.
(4) \textit{COMM.}
This baseline follows the fusion framework proposed in ~\cite{jiang2024clipdinovisualencoders}. They employ an MLP layer to project the features of DINOv2 and concatenate the output features with that of CLIP.

%\newpage
\section{Additional Experimental Results}
\label{app: additioanl numerical results}

\subsection{Unimodal Clustering}
\label{app: complete clustering results}
We present the complete set of our numerical results of kernel-based clustering on the image datasets: CUB-200-2011~\cite{wah2011caltech}, Oxford Flowers~\cite{nilsback2008automated}, DTD~\cite{cimpoi2014describing}, Image-Woof~\cite{Howard_Imagewoof_2019} consisting of ten dog breeds from ImageNet-1K~\cite{deng2009imagenet}, GTSRB~\cite{stallkamp2012man} and typographic attack images by introducing mislabeled red text into 10 ImageNet subclasses following the reference~\cite{materzynska2022disentangling}. Table~\ref{table:clustering metrics} reports the clustering performance scores of different methods on six image datasets, evaluated by Normalized Mutual Information~\cite{mcdaid2013normalizedmutualinformationevaluate} (NMI), Adjusted Mutual Information~\cite{vinh2009information} (AMI), and Adjusted Rand Index~\cite{hubert1985comparing} (ARI). Across all datasets and metrics, KrossFuse consistently performs better than individual CLIP and DINOv2, reaching the highest scores across the datasets. Notably, KrossFuse shows improvements on known challenging dataset cases such as Typo-Attacked ImageNet, ImageNet-Dogs and DTD, indicating its capability in capturing discriminative features for clustering tasks.
The following figures further illustrate the detailed clustering results on these datasets. On top of it, we visualize the kernel matrices, as well as the distribution of the embeddings using t-SNE and UMAP, providing qualitative insights into the effectiveness of each method in separating different classes.

\begin{table}[h]
\caption{}
\label{table:clustering metrics}
\small
\begin{adjustbox}{width=\textwidth}
\begin{tabular}{l|l|cccccc}
\toprule
\textbf{Metric} & \textbf{Method} & \textbf{CUB200} & \textbf{Flowers102} & \textbf{DTD} & \textbf{ImageNet-Dogs} & \textbf{GTSRB} & \textbf{Typo-Attacked ImageNet} \\
\midrule
\multirow{3}{*}{NMI}
    & CLIP      & 63.2 & 80.0 & 50.9 & 49.7 & 49.7 & 20.1 \\
    & DINOv2    & 85.2 & 98.7 & 60.5 & 86.7 & 40.2 & 81.9 \\
    \rowcolor{gray!20}
    & KrossFuse & 85.6 & 99.1 & 62.9 & 88.3 & 50.0 & 87.4 \\
\midrule
\multirow{3}{*}{AMI}
    & CLIP      & 45.7 & 73.2 & 47.7 & 49.2 & 46.4 & 19.3 \\
    & DINOv2    & 78.3 & 98.2 & 57.8 & 86.6 & 36.2 & 81.7 \\
    \rowcolor{gray!20}
    & KrossFuse & 79.0 & 98.8 & 60.4 & 88.2 & 46.7 & 87.3 \\
\midrule
\multirow{3}{*}{ARI}
    & CLIP      & 21.1 & 55.5 & 27.2 & 37.1 & 18.6 & 10.4 \\
    & DINOv2    & 55.9 & 94.6 & 28.1 & 84.8 & 12.3 & 65.9 \\
    \rowcolor{gray!20}
    & KrossFuse & 56.3 & 97.0 & 36.4 & 86.3 & 19.5 & 79.6 \\
\bottomrule
\end{tabular}
\end{adjustbox}
\end{table}

\begin{figure}
\vskip -0.2in
\begin{center}
\centerline{\includegraphics[width=15cm]{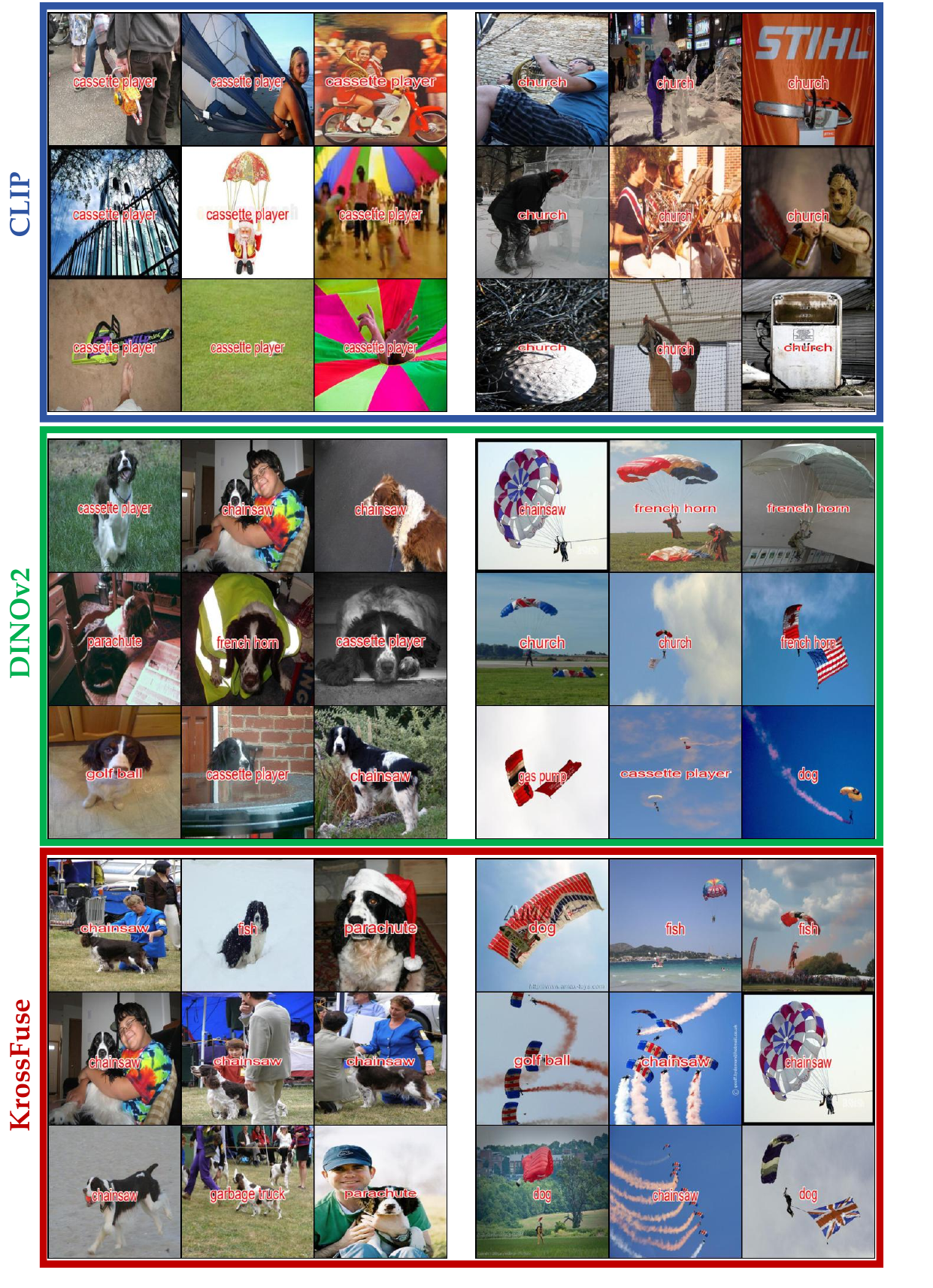 }}
\caption{Clustering results of CLIP, DINOv2 and KrossFuse embeddings for typographic attacked images (red text is the misleading labels that simulate the attack) from 10 ImageNet classes. KrossFuse could cluster the attacked image classes like DINOv2 while CLIP is mislead by text.}
\label{fig:typoattack cluster figure}
\end{center}
\vskip -0.2in
\end{figure}

\begin{figure}
\vskip -0.2in
\begin{center}
\centerline{\includegraphics[width=15cm]{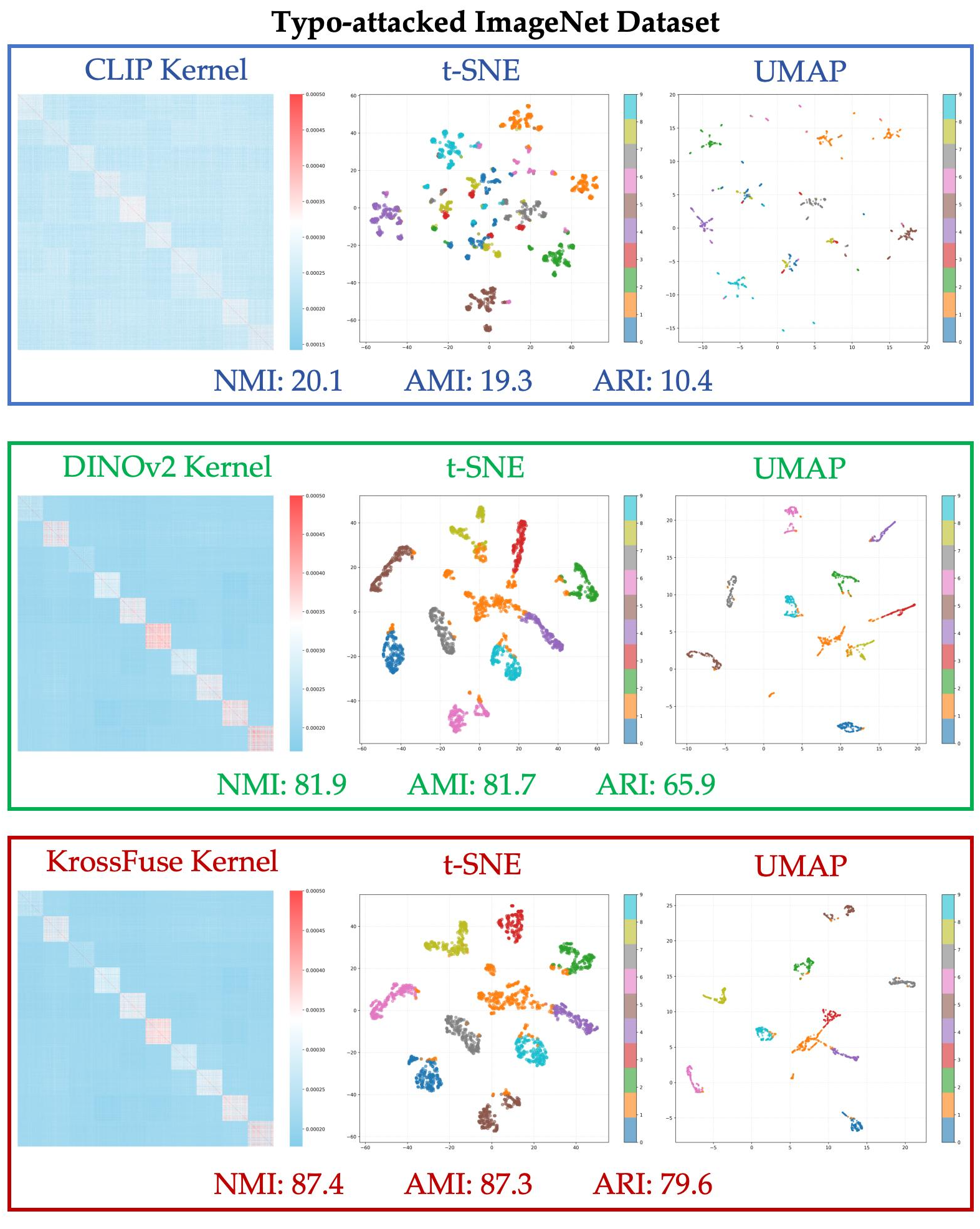}}
\caption{Comparison among CLIP, DINOv2 and KrossFuse embeddings for typographic attacked images from 10 ImageNet classes. (Left) Heatmaps of RBF kernel similarity matrices, (Middle)  t-SNE visualization, (Right) UMAP visualization. KrossFuse could cluster the attacked image classes like DINOv2 while CLIP is mislead by text.}
\label{fig:typoattack kernel figure}
\end{center}
\vskip -0.2in
\end{figure}

\begin{figure}
\vskip -0.2in
\begin{center}
\centerline{\includegraphics[width=15cm]{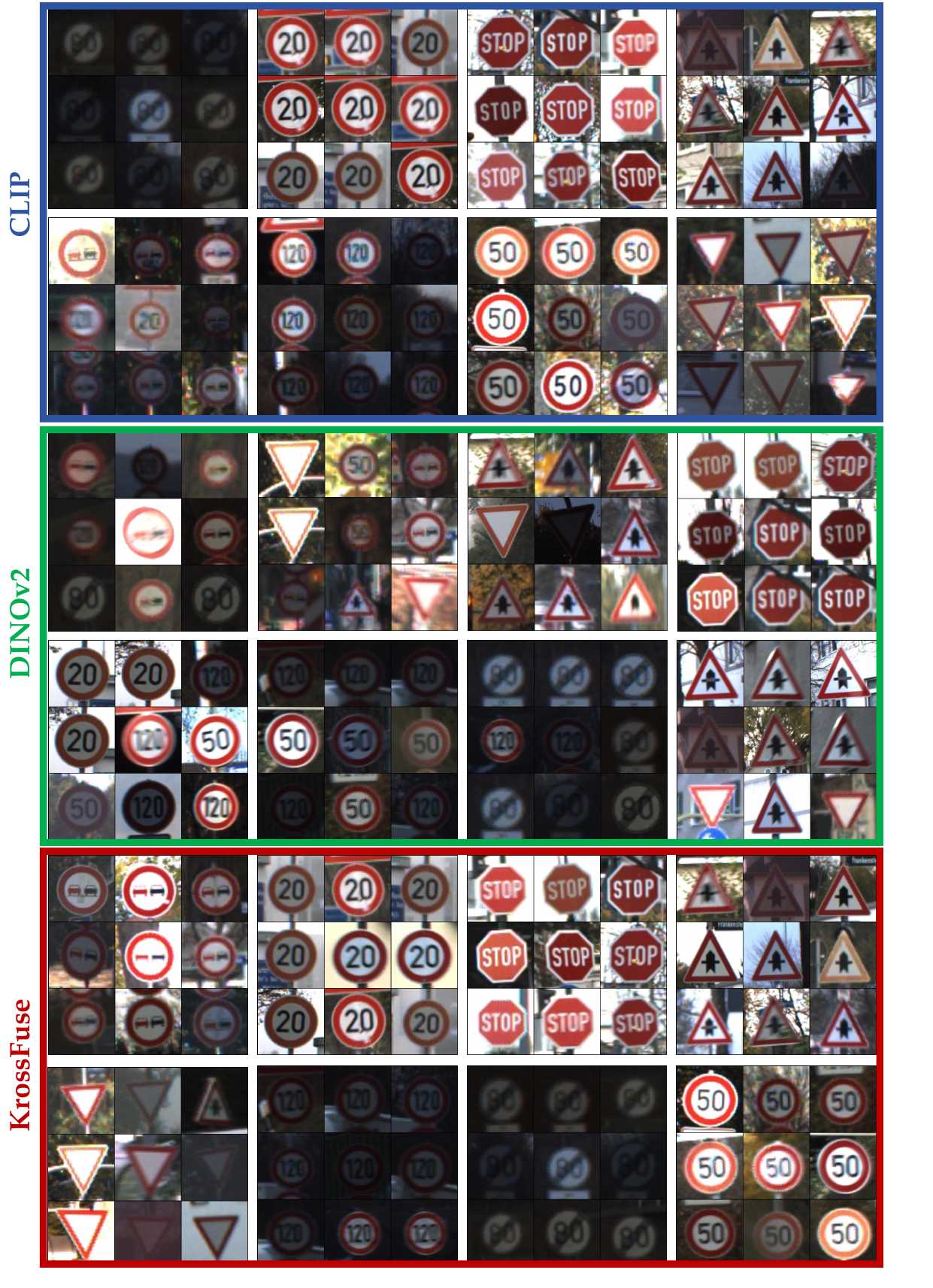}}
\caption{Clustering results of CLIP, DINOv2 and KrossFuse embeddings for GTSRB dataset with eight clusters. KrossFuse could cluster the eight image classes like CLIP while DINOv2 can't distinguish them.}
\label{fig:gtsrb cluster figure}
\end{center}
\vskip -0.2in
\end{figure}

\begin{figure}
\vskip -0.2in
\begin{center}
\centerline{\includegraphics[width=15cm]{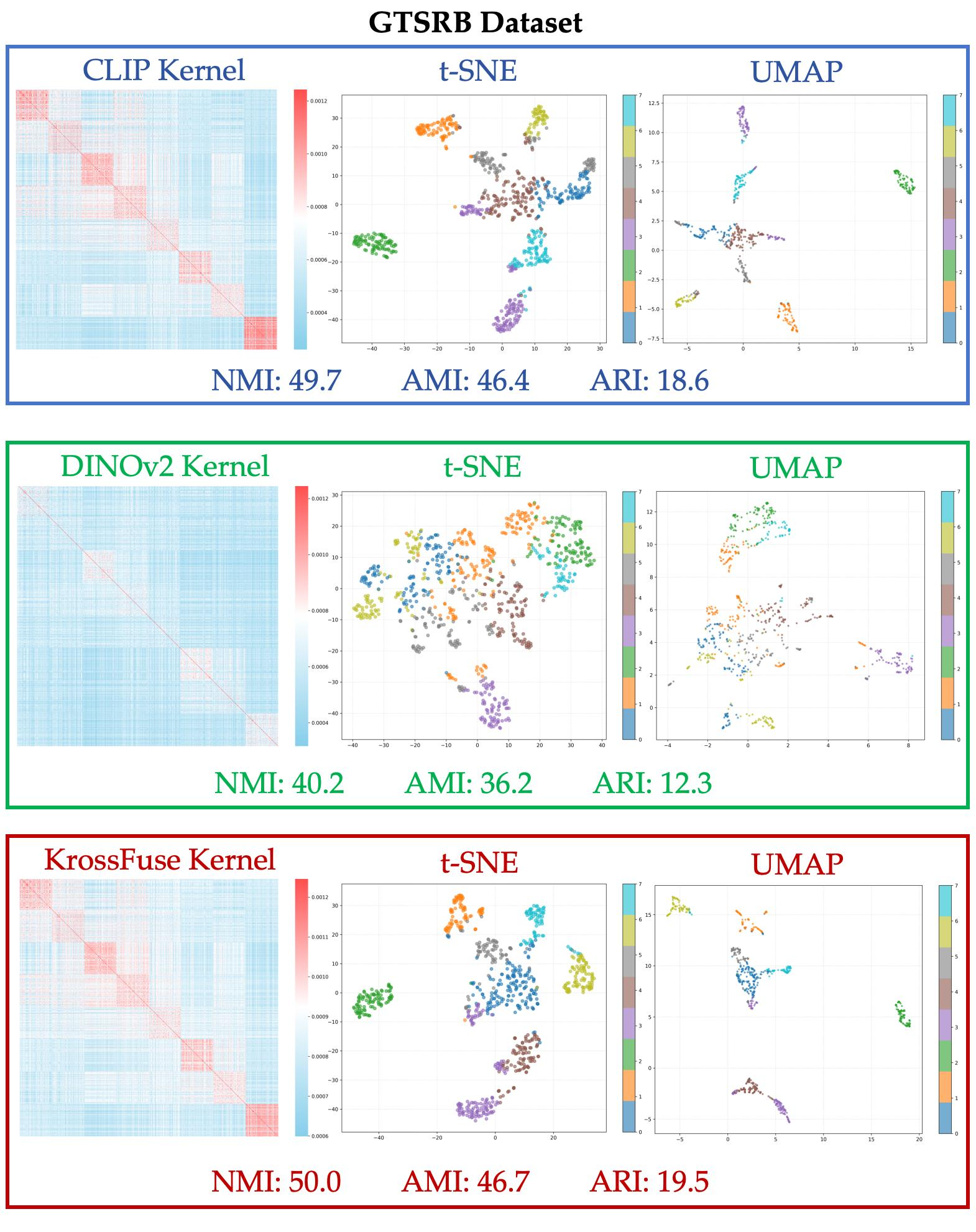}}
\caption{Comparison among CLIP, DINOv2 and KrossFuse embeddings for GTSRB dataset with eight clusters. (Left) Heatmaps of RBF kernel similarity matrices, (Middle)  t-SNE visualization, (Right) UMAP visualization. KrossFuse could cluster the eight image classes like CLIP while DINOv2 can't distinguish all of them.}
\label{fig:gtsrb kernel figure}
\end{center}
\vskip -0.2in
\end{figure}

\begin{figure}
\vskip -0.2in
\begin{center}
\centerline{\includegraphics[width=15cm]{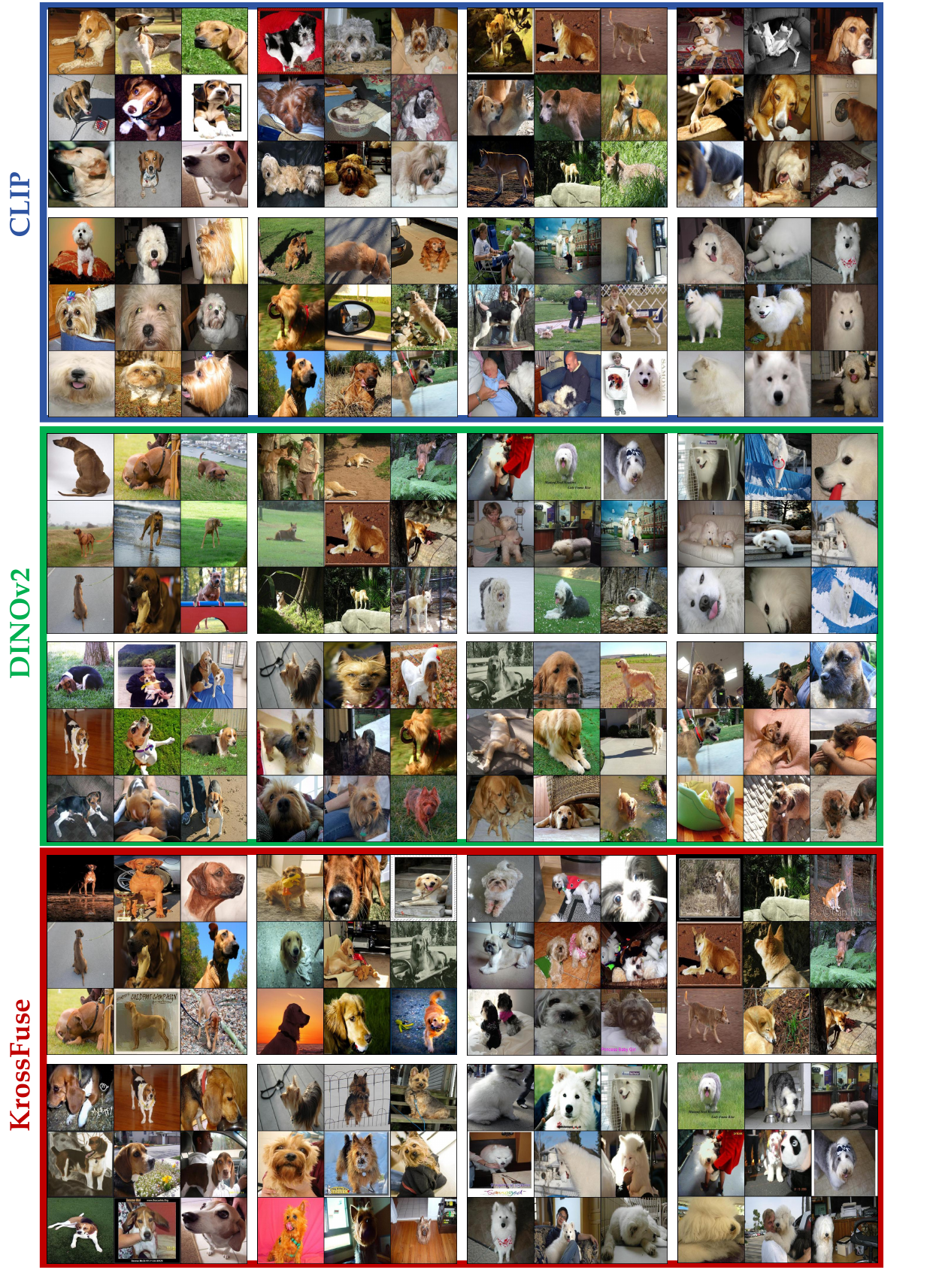}}
\caption{Clustering results of CLIP, DINOv2 and KrossFuse embeddings for ImageNet-dog breeds dataset. KrossFuse could cluster them like DINOv2 while CLIP can't distinguish all of them.}
\label{fig:imagedog cluster figure}
\end{center}
\vskip -0.2in
\end{figure}

\begin{figure}
\vskip -0.2in
\begin{center}
\centerline{\includegraphics[width=15cm]{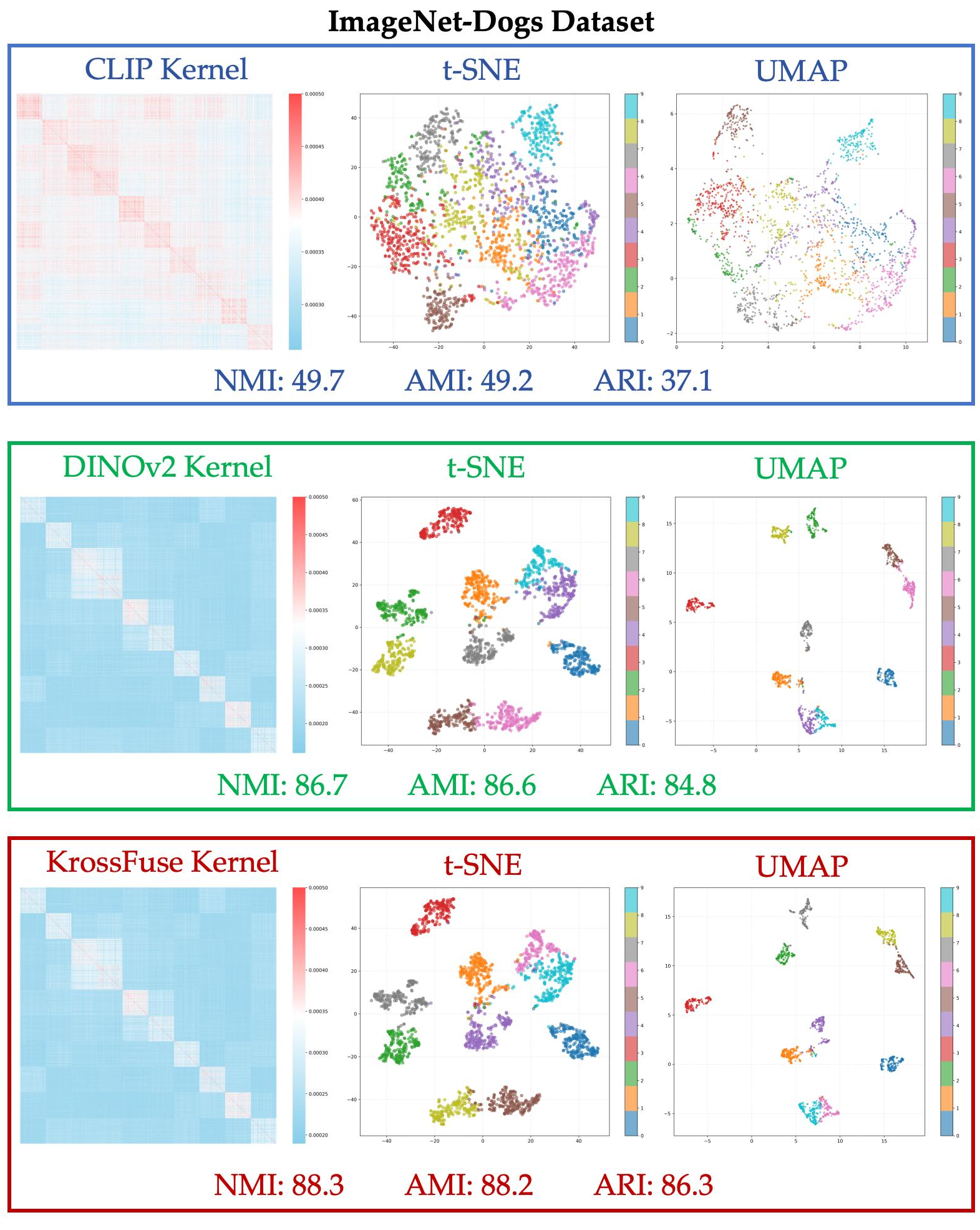}}
\caption{Comparison among CLIP, DINOv2 and KrossFuse embeddings for ImageNet-dog breeds dataset. (Left) Heatmaps of RBF kernel similarity matrices, (Middle)  t-SNE visualization, (Right) UMAP visualization. KrossFuse could cluster the different dog classes like DINOv2 while CLIP can't distinguish all of them.}
\label{fig:imagedog kernel figure}
\end{center}
\vskip -0.2in
\end{figure}

\begin{figure}
\vskip -0.2in
\begin{center}
\centerline{\includegraphics[width=15cm]{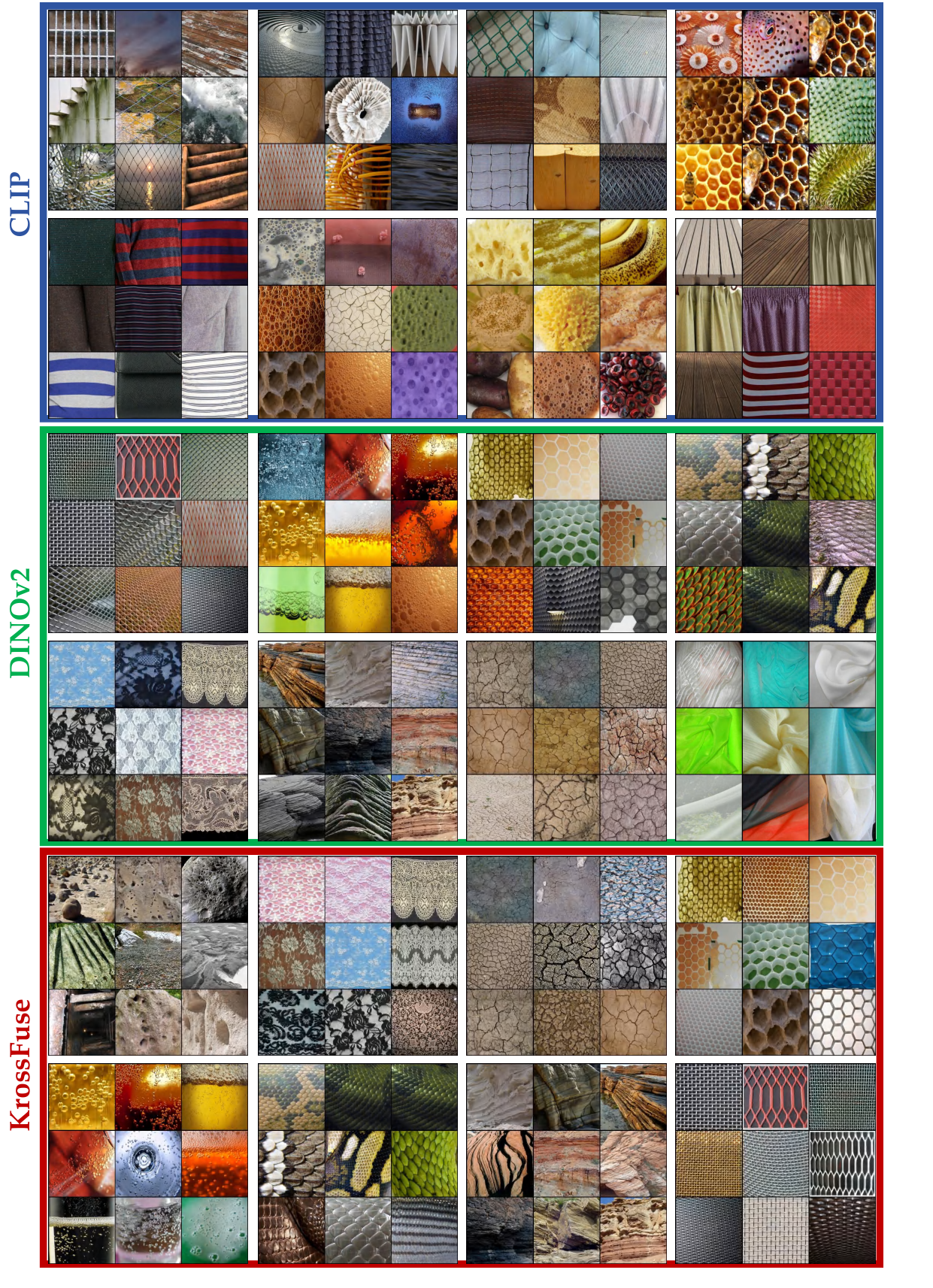}}
\caption{Clustering results of CLIP, DINOv2 and KrossFuse embeddings for DTD dataset. KrossFuse could cluster them like DINOv2 while CLIP can't distinguish all of them.}
\label{fig:dtd cluster figure}
\end{center}
\vskip -0.2in
\end{figure}

\begin{figure}
\vskip -0.2in
\begin{center}
\centerline{\includegraphics[width=15cm]{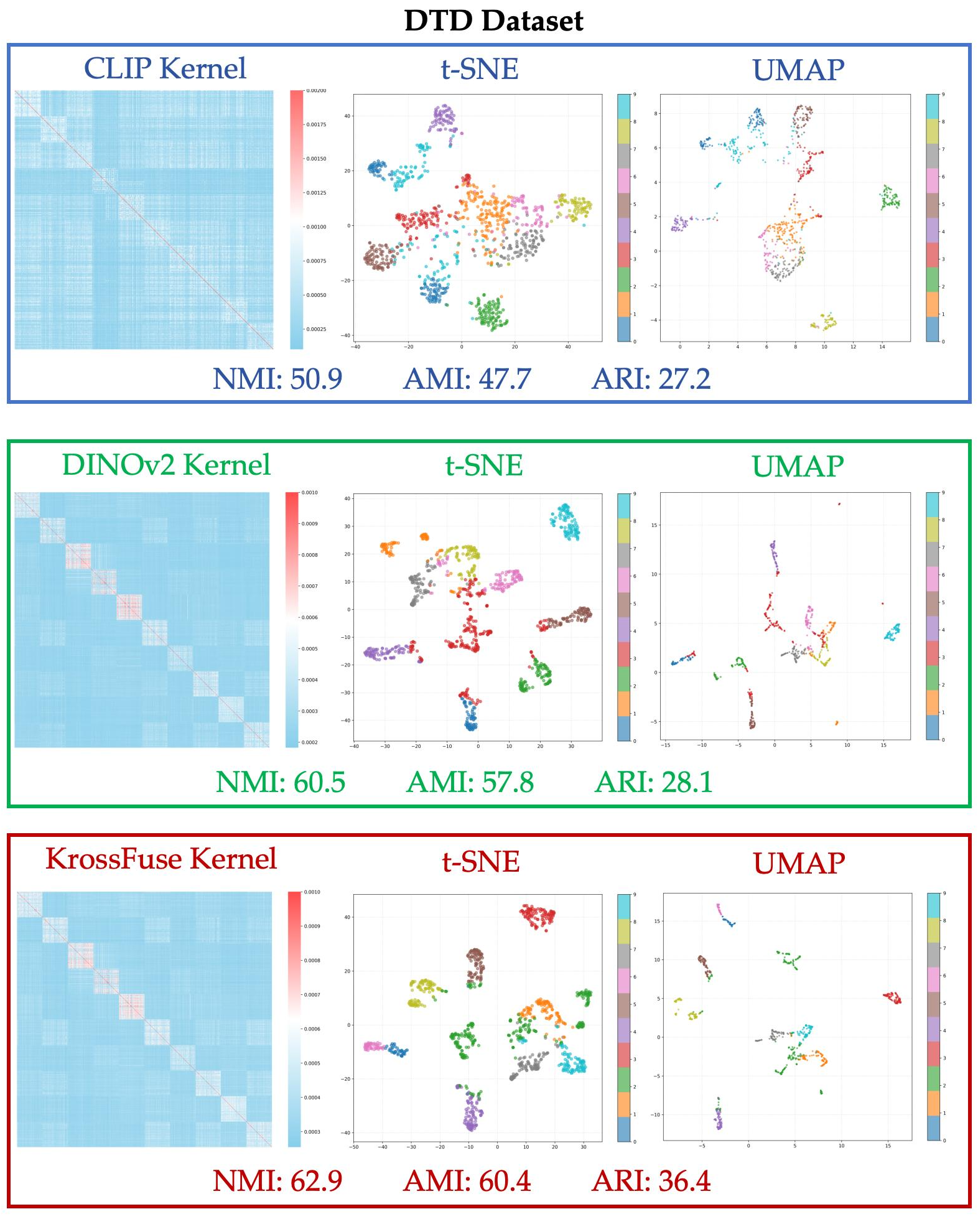}}
\caption{Comparison among CLIP, DINOv2 and KrossFuse embeddings for DTD dataset. (Left) Heatmaps of RBF kernel similarity matrices, (Middle)  t-SNE visualization, (Right) UMAP visualization. KrossFuse could cluster the different texture classes like DINOv2 while CLIP can't distinguish all of them.}
\label{fig:dtd kernel figure}
\end{center}
\vskip -0.2in
\end{figure}

\begin{figure}
\vskip -0.2in
\begin{center}
\centerline{\includegraphics[width=15cm]{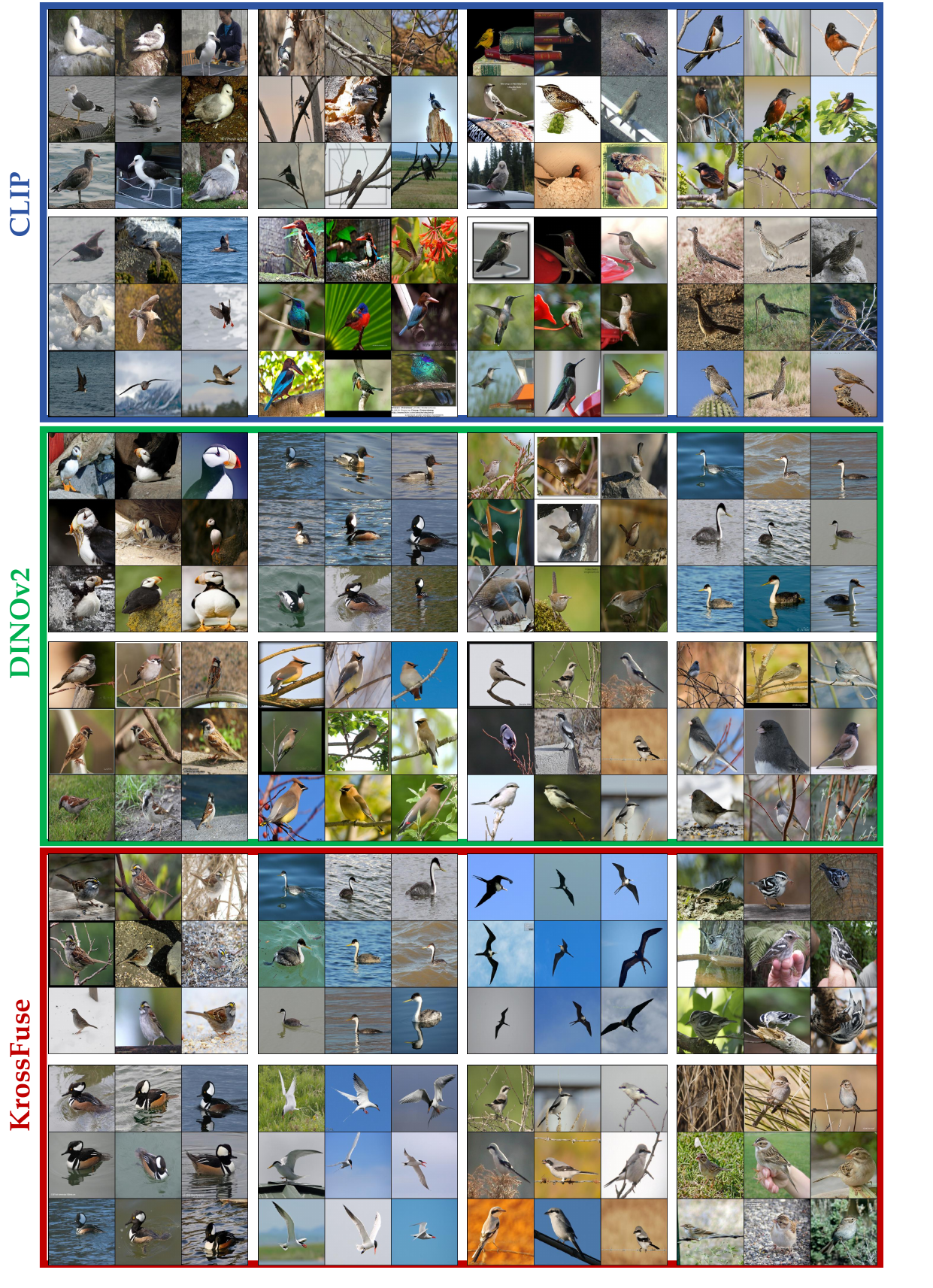}}
\caption{Clustering results of CLIP, DINOv2 and KrossFuse embeddings for CUB200 dataset. KrossFuse could cluster them like DINOv2 while CLIP can't distinguish all of them.}
\label{fig:CUB cluster figure}
\end{center}
\vskip -0.2in
\end{figure}

\begin{figure}
\vskip -0.2in
\begin{center}
\centerline{\includegraphics[width=15cm]{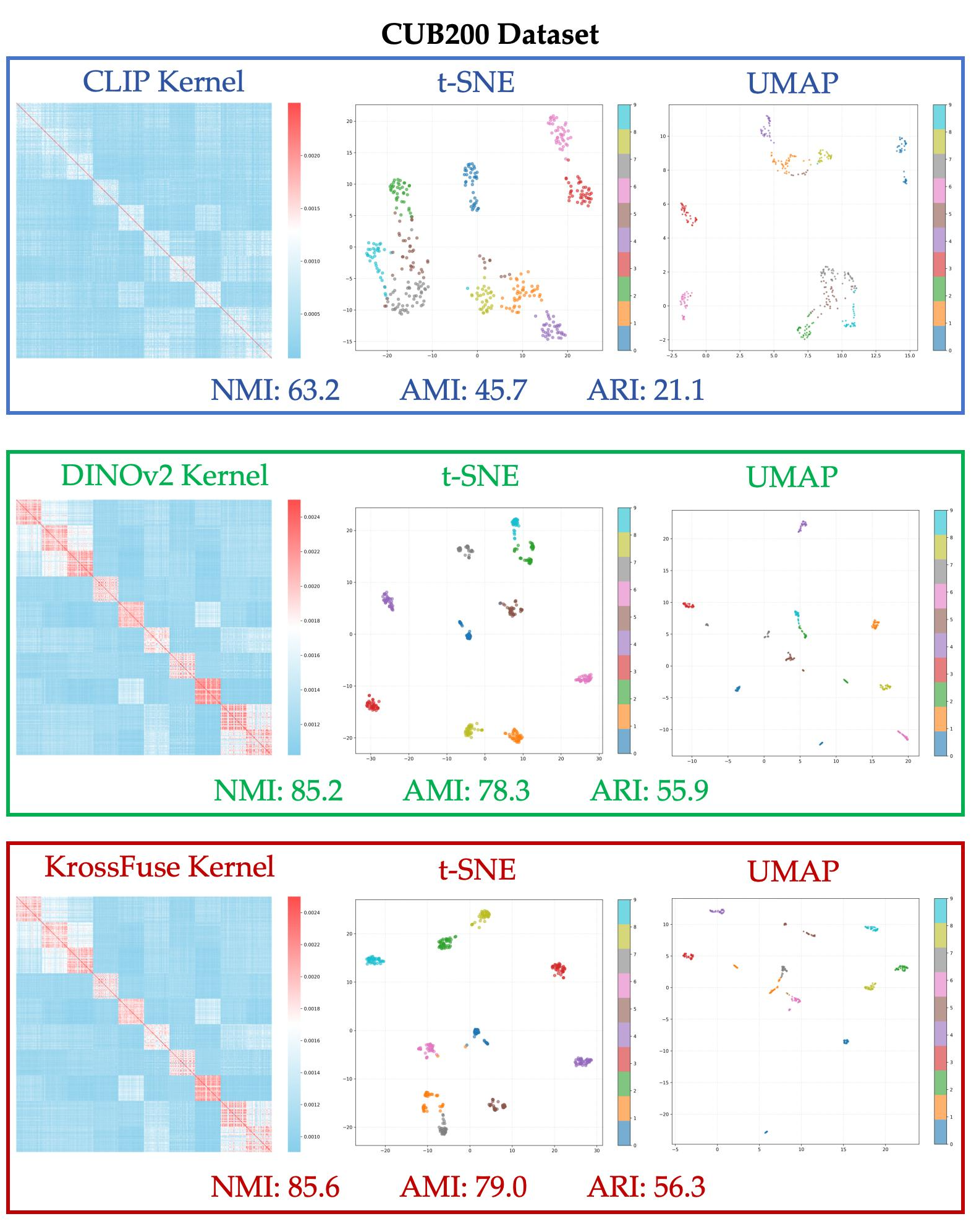}}
\caption{Comparison among CLIP, DINOv2 and KrossFuse embeddings for CUB200 dataset. (Left) Heatmaps of RBF kernel similarity matrices, (Middle)  t-SNE visualization, (Right) UMAP visualization. KrossFuse could cluster the different bird classes like DINOv2 while CLIP can't distinguish all of them.}
\label{fig:CUB kernel figure}
\end{center}
\vskip -0.2in
\end{figure}

\begin{figure}
\vskip -0.2in
\begin{center}
\centerline{\includegraphics[width=15cm]{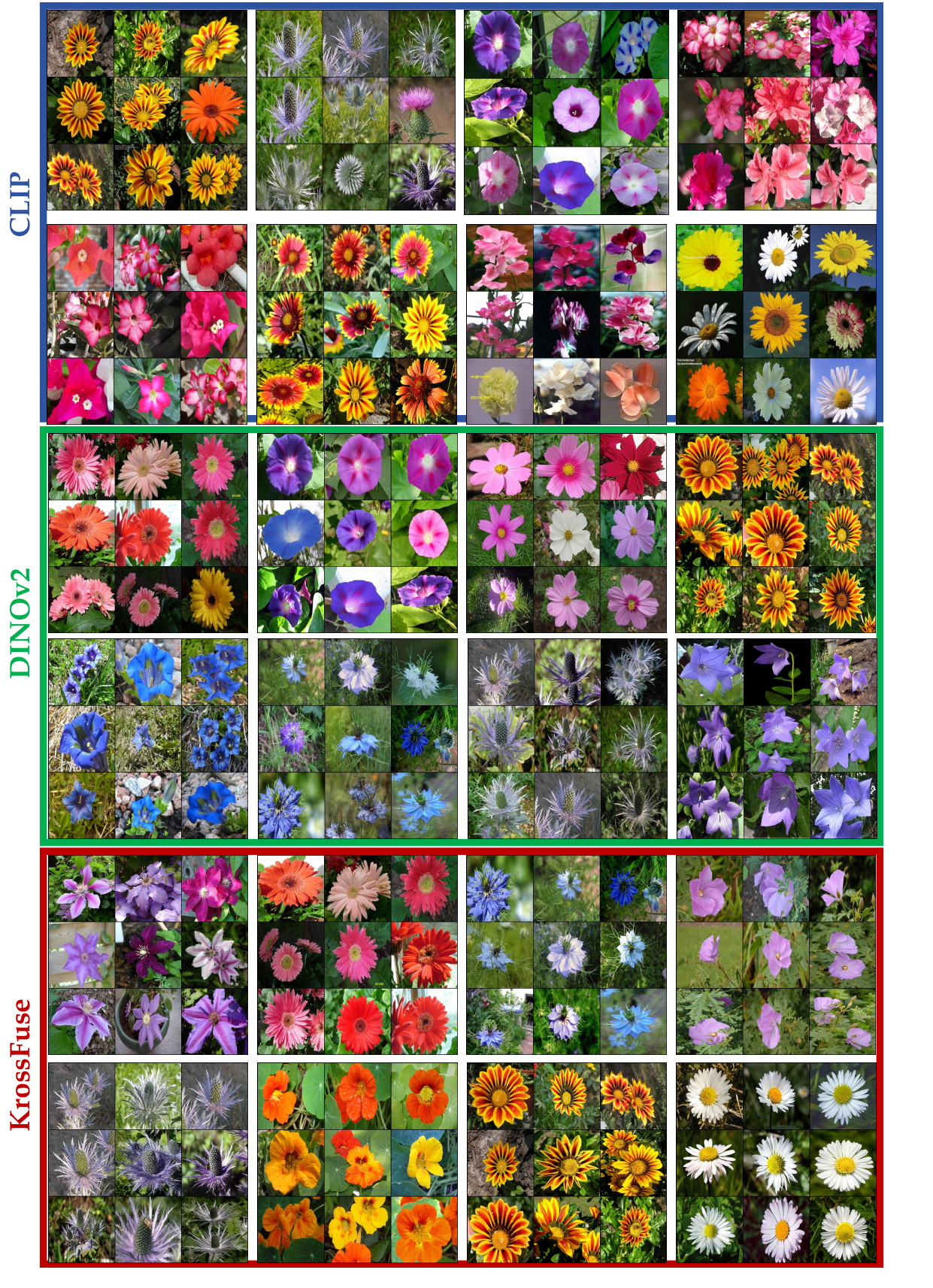}}
\caption{Clustering results of CLIP, DINOv2 and KrossFuse embeddings for Flowers102 dataset. KrossFuse could cluster them like DINOv2 while CLIP can't distinguish all of them.}
\label{fig:flowers cluster figure}
\end{center}
\vskip -0.2in
\end{figure}

\begin{figure}
\vskip -0.2in
\begin{center}
\centerline{\includegraphics[width=15cm]{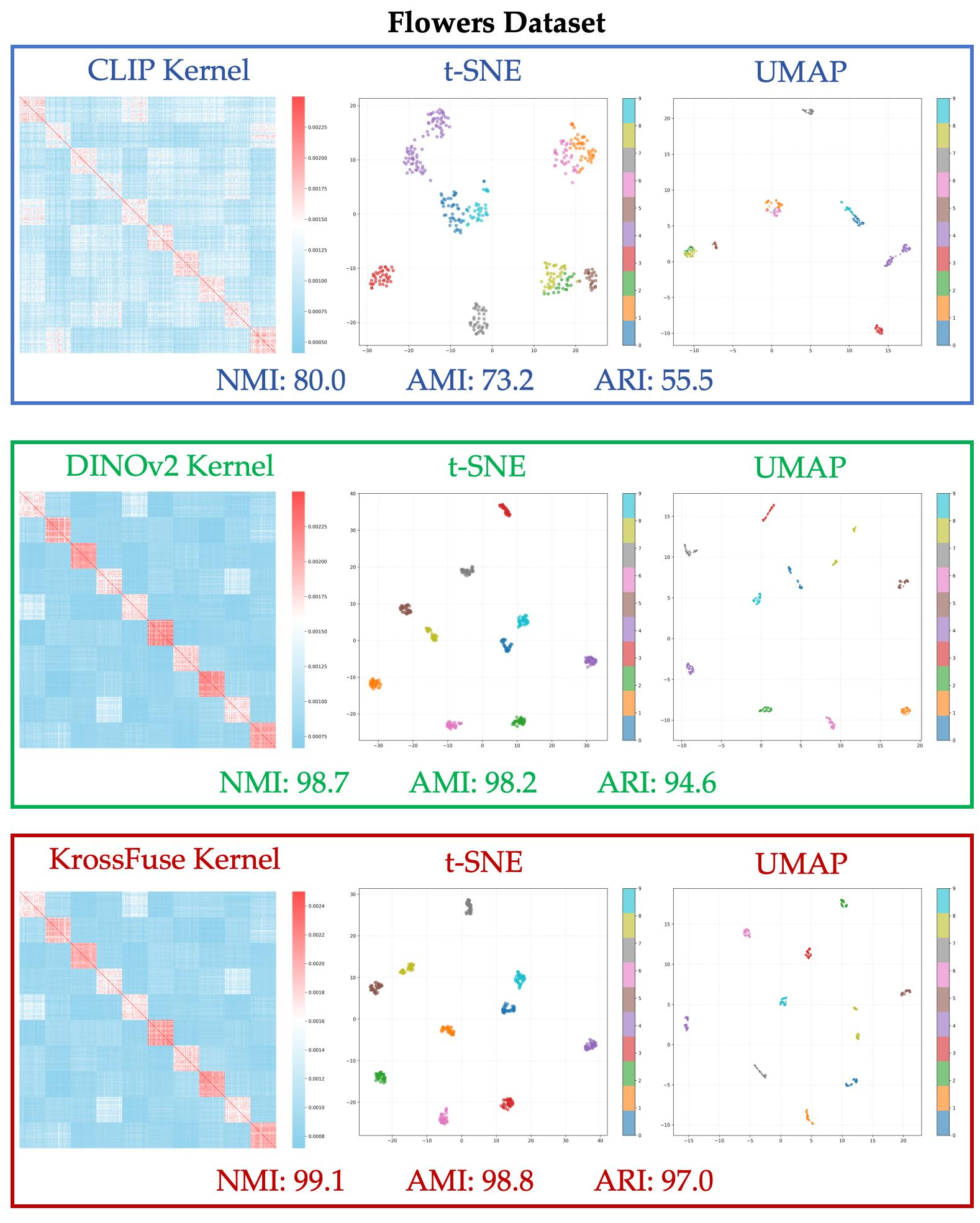}}
\caption{Comparison among CLIP, DINOv2 and KrossFuse embeddings for Flowers102 dataset. (Left) Heatmaps of RBF kernel similarity matrices, (Middle)  t-SNE visualization, (Right) UMAP visualization. KrossFuse could cluster the different flower classes like DINOv2 while CLIP can't distinguish all of them.}
\label{fig:flowers kernel figure}
\end{center}
\vskip -0.2in
\end{figure}

\clearpage
\subsection{Cross-modal Clustering}
While our method achieves comparable performance to CLIP in zero shot image-text retrieval tasks, we observed an interesting phenomenon in cross-modal clustering: despite normalization to the unit hypersphere, image and text embeddings tend to form separate clusters due to the modality gap~\cite{liang2022mind, zhang2024connect}. This separation persists even when both modalities represent semantically identical concepts, indicating a systematic misalignment in the embedding spaces\footnote{We emphasize that this step is limited to the cross-modal clustering setting and was not used in any of the other experiments in the paper.}.
To address this challenge, we propose a learned unitary transformation applied to the normalized text embeddings. This rotation operation preserves the geometric structure within each modality while aligning the text embedding space with the image embedding space. Our approach ensures that the semantic relationships are maintained across modalities, enhancing the model's ability to perform cross-modal tasks effectively.
We validated our method on the MSCOCO dataset, where we applied t-SNE visualization to demonstrate the elimination of the modality gap. As shown in Figure~\ref{tsne_visualization}, after applying our unitary transformation, image and text embeddings of the same semantic concepts are well-aligned in the shared embedding space while maintaining their internal structure.

\begin{figure*}[ht]
\vskip 0.2in
\begin{center}
\centerline{\includegraphics[width=15cm,height=5cm]{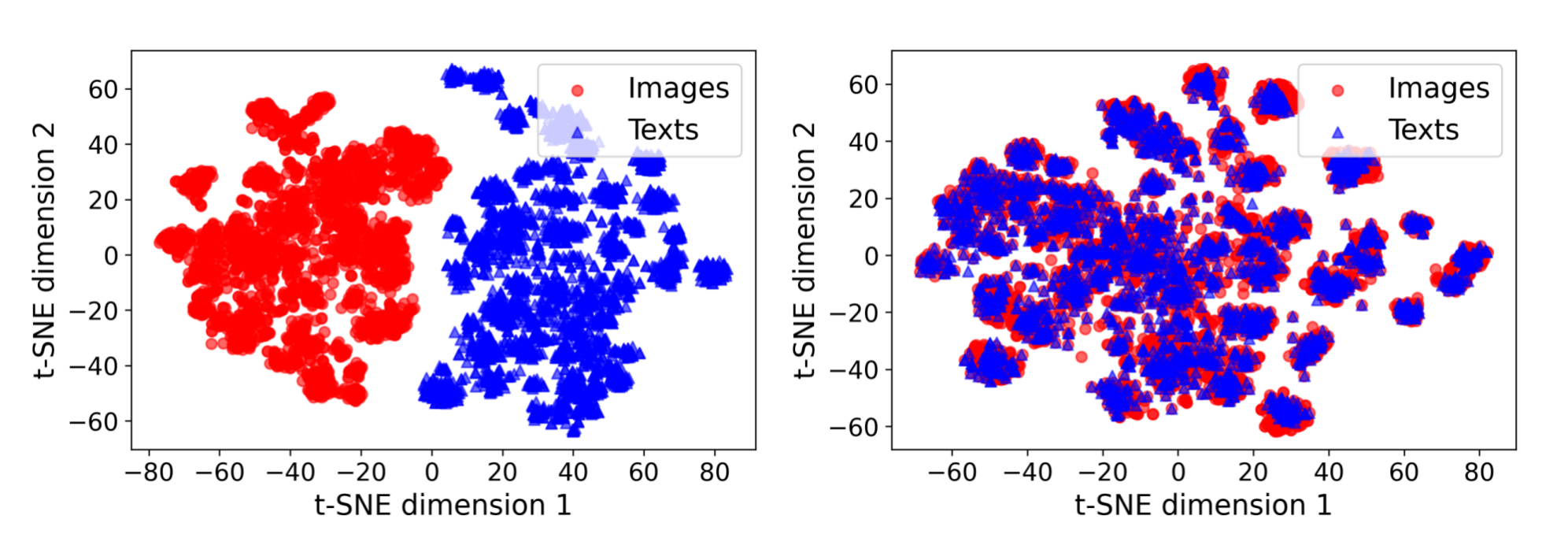}}
\caption{t-SNE visualization of image and text embeddings on MS COCO dataset. \textbf{Left:} image embeddings (red circle) and text embeddings (blue triangular) form separate clusters due to modality gap. \textbf{Right:} After applying our unitary transformation to text embeddings, the two modalities align well in the embedding space while preserving their internal structure, enabling effective cross-modal clustering.}
\label{tsne_visualization}
\end{center}
\vskip -0.2in
\end{figure*}

In the following figures, we showcase the clustering results on the COCO dataset after applying our transformation. This transformation aligns the embedding spaces, enhancing the clustering of semantically similar image-text pairs.

\begin{figure}[ht]
\vskip 0.2in
\begin{center}
\centerline{\includegraphics[width=15cm,height=9cm]{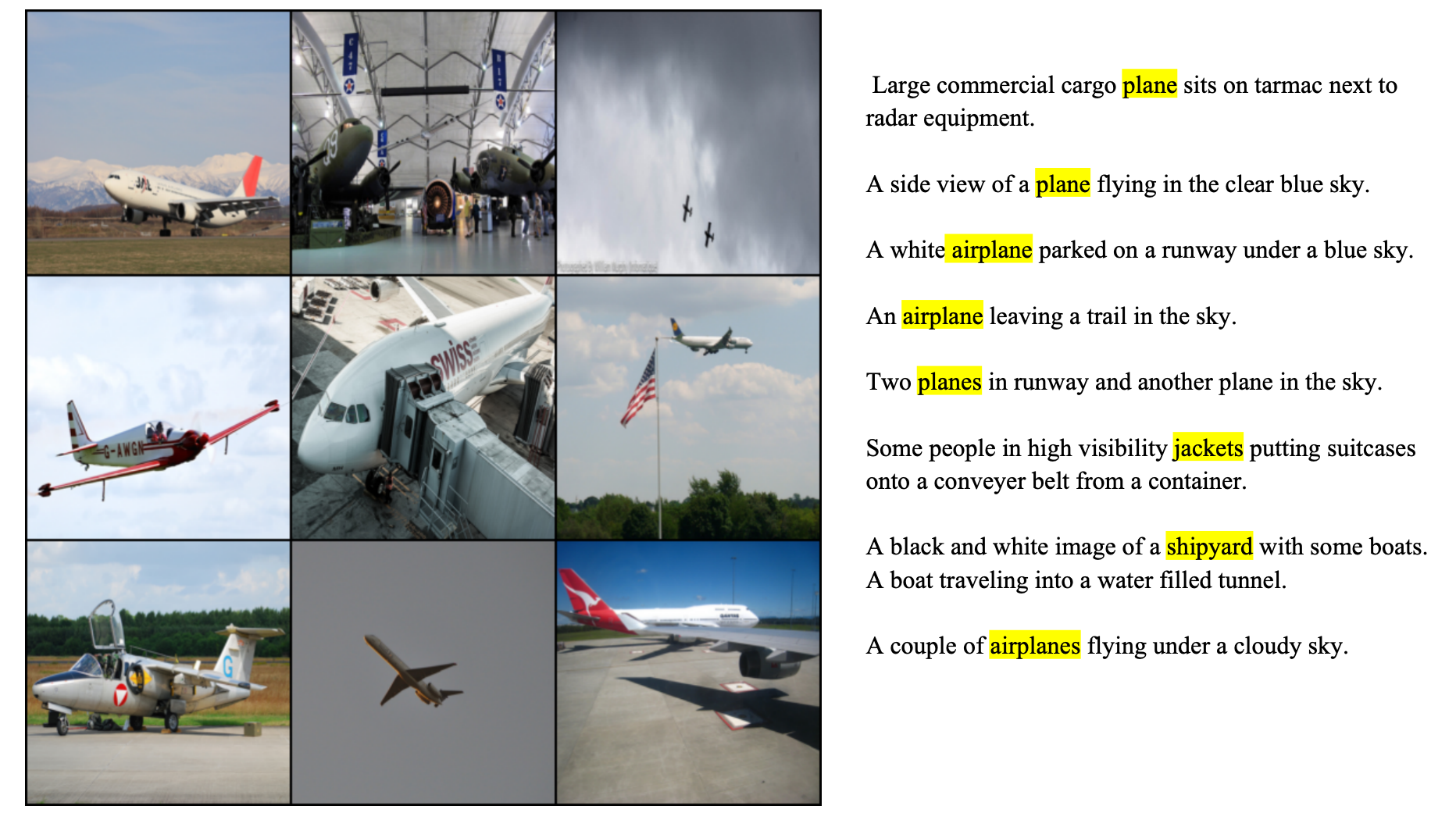}}
\caption{Post-transformation alignment of image and text embeddings on the COCO dataset. The cluster samples shown are all related to planes, demonstrating complete alignment of visual and textual data.}
\label{plane_visualization}
\end{center}
\vskip -0.2in
\end{figure}

\begin{figure}[ht]
\vskip 0.2in
\begin{center}
\centerline{\includegraphics[width=15cm,height=9cm]{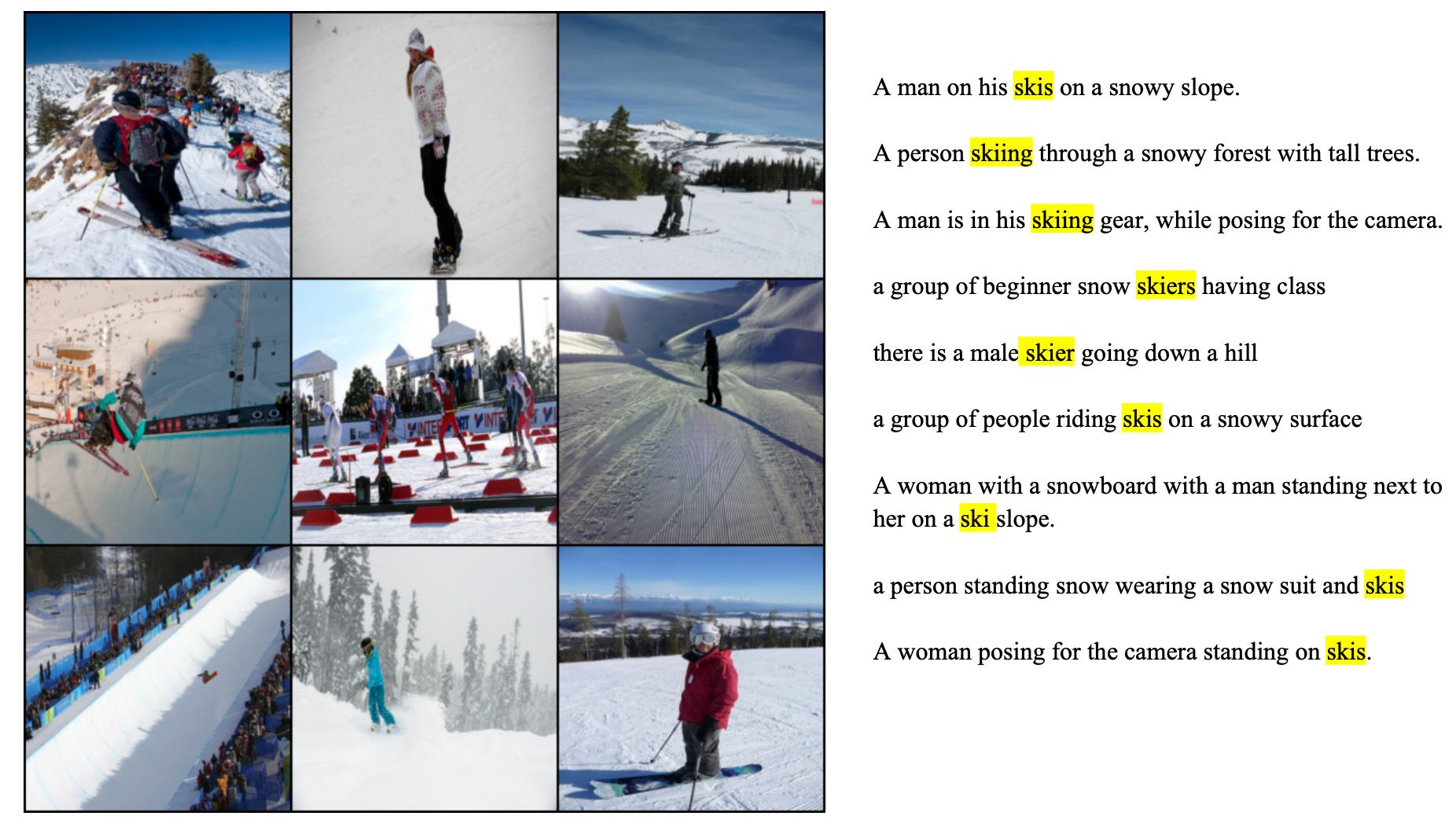}}
\caption{Post-transformation alignment of image and text embeddings on the COCO dataset. The cluster samples shown are all related to ski, demonstrating complete alignment of visual and textual data.}
\label{skii_visualization}
\end{center}
\vskip -0.2in
\end{figure}

\begin{figure}[ht]
\vskip 0.2in
\begin{center}
\centerline{\includegraphics[width=15cm,height=9cm]{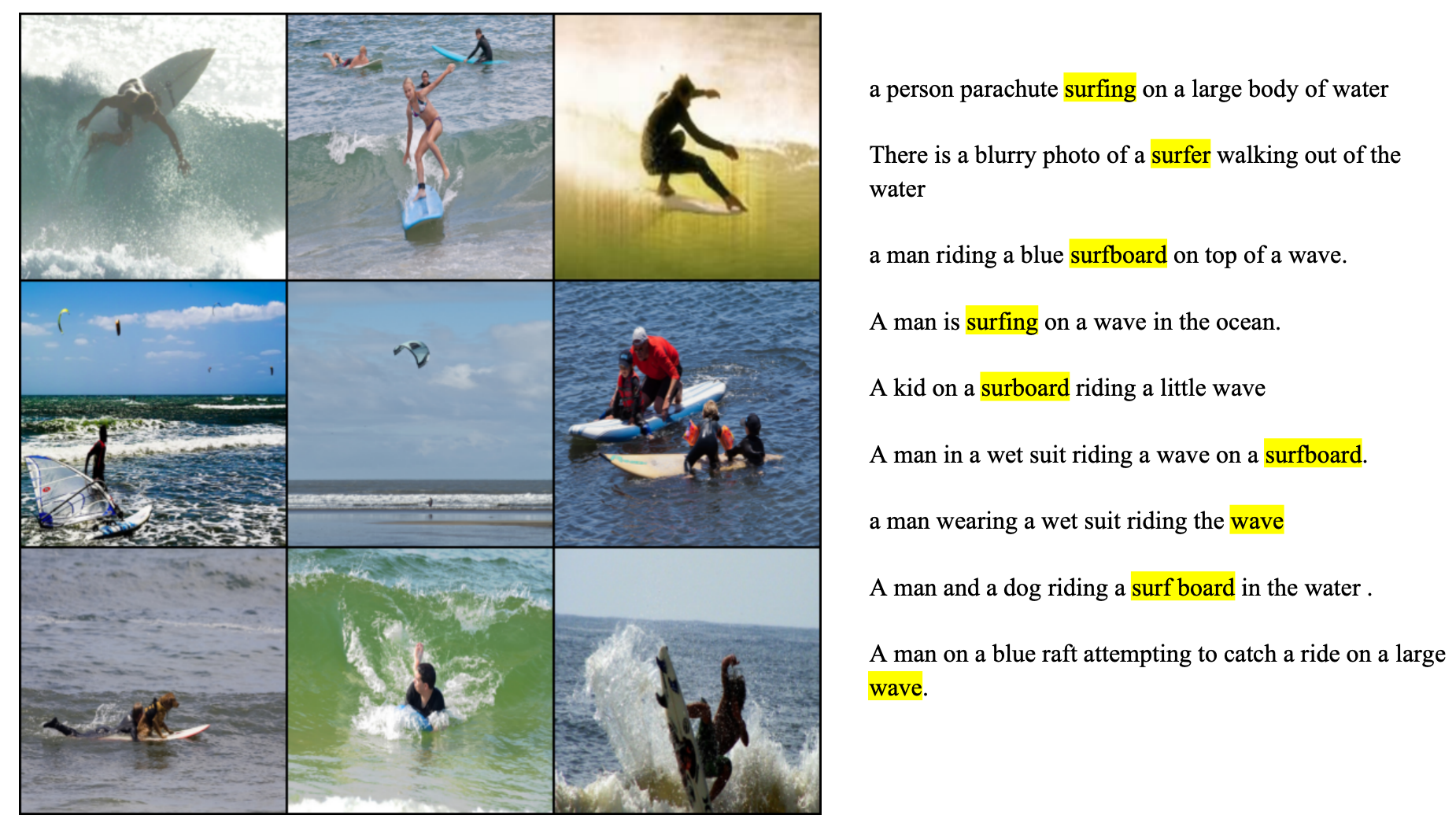}}
\caption{Post-transformation alignment of image and text embeddings on the COCO dataset. The cluster samples shown are all related to surf, demonstrating complete alignment of visual and textual data.}
\label{surf_visualization}
\end{center}
\vskip -0.2in
\end{figure}

\begin{figure}[ht]
\vskip 0.2in
\begin{center}
\centerline{\includegraphics[width=15cm,height=9cm]{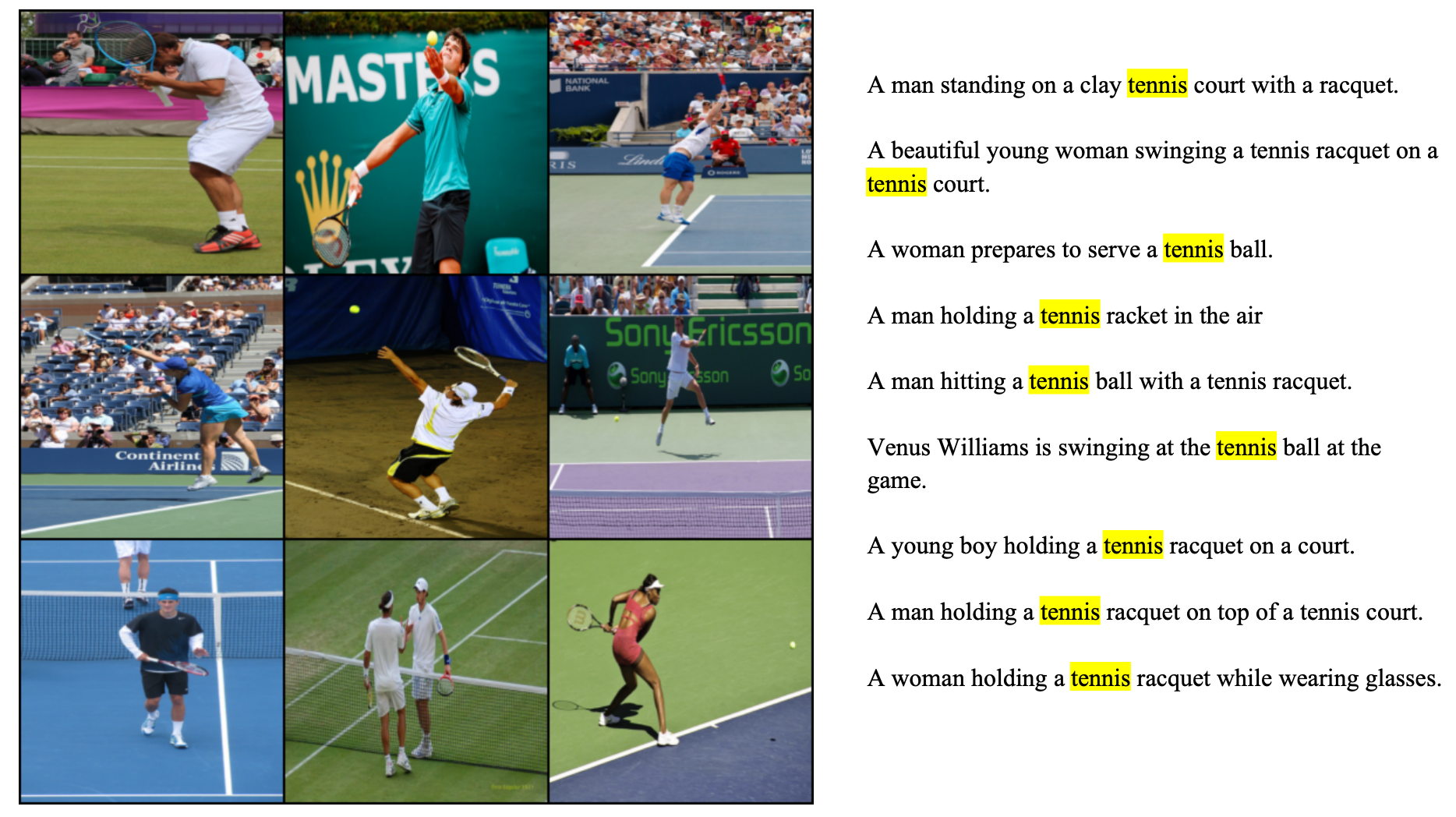}}
\caption{Post-transformation alignment of image and text embeddings on the COCO dataset. The cluster samples shown are all related to tennis, demonstrating complete alignment of visual and textual data.}
\label{tennis_visualization}
\end{center}
\vskip -0.2in
\end{figure}

\clearpage

% \clearpage
\subsection{Text Linear Probe Results}
\label{app: text lp results}
To validate the effectiveness of RP-KrossFuse for improving the uni-modal performance, we also conducted linear probe experiments in text domain. We used the SentEval toolkit~\cite{conneau2018senteval} to evaluate the RP-KrossFuse embeddings compared to CLIP, Sroberta and other baselines on the following NLP classification benchmarks: MR~\cite{pang2005seeing}, CR~\cite{hu2004mining}, SUBJ~\cite{pang2004sentimental}, MPQA~\cite{wiebe2005annotating}, SST2~\cite{socher2013recursive}, TREC~\cite{voorhees2000building}, and MRPC~\cite{dolan2005automatically}. 
The linear probe results are provided in the Table~\ref{text-linear-probe-results}. The results show that RP-KrossFuse significantly enhances CLIP's text understanding capabilities, achieving an average improvement of 6\% over CLIP itself. In comparison to the baseline, our method demonstrates superior performance across all evaluated tasks.
% Remarkably, our approach even surpasses SRoBERTa\footnote{Here we utilize SRoBERTa's last hidden layer embeddings including [CLS] token, which stands for sentence-level classification.} by an average of 2\% on several tasks. 

\begin{table}[h]
\caption{Linear probe evaluation of frozen features of variants of CLIP, Sroberta, RP-KrossFuse and three baselines using the SentEval toolkit on text benchmarks. Test accuracy (\%) are based on a 5-fold cross-validation.}
\label{text-linear-probe-results}
% \vskip 0.15in

\begin{adjustbox}{width=\textwidth}

\begin{tabular}{clcccccccccc}
\toprule
 Embedding & Arch& Fused  & MR & CR  & SUBJ & MPQA & SST2 & TREC & MRPC  & Avg \\ 
\midrule
 CLIP~\cite{radford2021learning} & ViT-B/32 &\usym{2717}  & 75.8 & 83.1 & 92.5 & 86.4 & 82.0 & 83.0 & 70.1 & 81.8 \\
   & ViT-L/14 &\usym{2717} & 78.1 & 85.3 & 93.8 & 87.0 & 83.9 & 86.4 & 67.7 & 83.2  \\

\midrule
KPoMRP & ViT-B/32  & \usym{2714} & 73.8 & 78.5 & 86.6 &82.1 &80.3 & 74.8 & 70.8 & 78.1\\
& ViT-L/14  &\usym{2714} & 72.5 & 80.3 & 86.0 & 83.1 &  74.0 & 78.2 & 68.1  & 77.5 \\

\midrule
GATE~\cite{shazeer2017outrageously} & ViT-B/32  & \usym{2714} &84.8 & 87.2& 94.4& 88.5&91.8 & 89.3& 65.5& 85.9\\
& ViT-L/14  & \usym{2714} & 84.3& 87.8&94.5 & 88.3& 91.4& 89.3&67.6 &86.2 \\
\midrule
ATTN~\cite{zhao2025enhancingsentimentanalysismultimodal} & ViT-B/32  & \usym{2714} & 85.7&86.3 &93.4 &88.8 & 91.9& 88.5 & 66.2& 85.8\\
& ViT-L/14  & \usym{2714} &85.7 &85.9 & 94.3& 87.8& 92.4&86.5 & 65.9& 85.5\\
% \midrule
% NConcat(baseline2) & ViT-B/32  & \usym{2714} &   \\
% & ViT-L/14  &\usym{2714} & 85.8 &88.7& 94.7 & 88.0 & 91.4 & 94.4 & 73.2 & 88.0 \\
\midrule
\rowcolor{gray!20}
RP-KrossFuse & ViT-B/32  & \usym{2714} & 85.8 & 88.7 & 94.4 & 89.1 & 89.7 & 95.0 & 73.6  & \textbf{88.0}  \\
\rowcolor{gray!20}
& ViT-L/14  &\usym{2714} & 86.0 & 88.1 & 94.8  & 89.3 & 89.8 & 95.2 & 73.6 & \textbf{88.1}  \\

\cdashline{1-11} \\
SRoBERTa~\cite{reimers2019sentence} & TF-L24  &\usym{2717}  & 85.1 & 86.8 & 93.7 & 87.7 & 89.1 & 93.2 & 68.1 & 86.2  \\
\bottomrule
\end{tabular}
\end{adjustbox}
% \vskip -0.1in
\end{table}

\subsection{Zero Shot Image-to-text and Text-to-image Retrievals}
To evaluate the cross modal alignment ability of ours RP-KrossFuse method, we conduct zero-shot image-to-text and text-to-image retrieval experiments on MSCOCO~\cite{lin2014microsoft} and Flickr30k~\cite{young2014image}. In our experiments, we utilize several variants of CLIP, including the base, large, and large with 336 pixel models, as well as the large and huge versions of OpenCLIP, to ensure a comprehensive comparison.
The results in \cref{tab:zero-shot-retrieval} shows that the differences between CLIP and RP-KrossFuse are mostly below 1\%, suggesting that RP-KrossFuse maintains strong zero shot cross-modal alignment, with retrieval performance comparable to CLIP.

\begin{table}[]
\caption{
Zero-shot image-text retrieval performance on Flickr30K and MSCOCO.
We report Recall@K (\%) for Image→Text and Text→Image retrieval.
Fuse denotes our RP-KrossFuse fusion method. 
Superscripts denote the backbone variant: B = ViT-B/32, L = ViT-L/14, L+ = ViT-L/14@336px, H = ViT-H/14.
}
\label{tab:zero-shot-retrieval}
\vskip 0.15in

\begin{adjustbox}{width=\textwidth}
\begin{tabular}{l|ccc|ccc|ccc|ccc}
\toprule
& \multicolumn{6}{c|}{Flickr30K } & \multicolumn{6}{c}{MSCOCO } \\
\cmidrule(lr){2-7} \cmidrule(lr){8-13}
Method 
& \multicolumn{3}{c|}{Image→Text} & \multicolumn{3}{c|}{Text→Image} 
& \multicolumn{3}{c|}{Image→Text} & \multicolumn{3}{c}{Text→Image} \\
& R@1 & R@5 & R@10 & R@1 & R@5 & R@10 & R@1 & R@5 & R@10 & R@1 & R@5 & R@10 \\
\midrule
CLIP$^\mathrm{B}$ & 76.9 & 94.3 & 97.8 & 57.9 & 82.9 & 89.0 & 48.4 & 73.8 & 81.6 & 29.8 & 54.0 & 65.0 \\
\rowcolor{gray!20}
Fuse$^\mathrm{B}$ & 75.7 & 94.6 & 97.7 & 58.0 & 82.5 & 89.0 & 48.9 & 72.7 & 81.3 & 29.1 & 53.3 & 64.4 \\
\midrule
CLIP$^\mathrm{L}$ & 85.9 & 97.3 & 99.2 & 64.5 & 87.1 & 91.9 & 56.9 & 79.6 & 86.7 & 35.7 & 60.4 & 70.5 \\
\rowcolor{gray!20}
Fuse$^\mathrm{L}$ & 85.0 & 97.0 & 99.0 & 64.6 & 86.6 & 91.9 & 56.5 & 79.0 & 86.2 & 35.7 & 60.3 & 70.2 \\
\midrule
CLIP$^\mathrm{L+}$ & 87.7 & 98.5 & 99.4 & 66.8 & 88.9 & 93.3 & 57.3 & 80.2 & 87.5 & 35.9 & 60.7 & 70.7 \\
\rowcolor{gray!20}
Fuse$^\mathrm{L+}$ & 87.2 & 98.2 & 99.4 & 67.0 & 88.5 & 93.0 & 57.9 & 80.3 & 87.0 & 35.8 & 60.3 & 70.4 \\
\midrule
OpenCLIP$^\mathrm{L}$ & 89.0 & 98.5 & 99.3 & 74.9 & 92.4 & 95.6 & 61.8 & 83.6 & 89.9 & 45.5 & 70.4 & 79.0 \\
\rowcolor{gray!20}
Fuse$^\mathrm{OpenL}$ & 89.0 & 98.4 & 99.4 & 74.6 & 92.5 & 95.6 & 62.3 & 83.8 & 90.0 & 45.2 & 70.2 & 78.8 \\
\midrule
OpenCLIP$^\mathrm{H}$ & 90.7 & 99.2 & 99.7 & 77.6 & 94.2 & 96.6 & 66.1 & {86.5} & 91.9 & {48.5} & {72.8} & 81.0 \\
\rowcolor{gray!20}
Fuse$^\mathrm{OpenH}$ & 91.0 & 99.3 & 99.8 & 77.5 & 94.0 & 96.7 & 66.3 & 86.5 &  92.0 & 48.4 & 72.7 & 81.1 \\
\bottomrule
\end{tabular}

\end{adjustbox}
\vskip -0.1in
\end{table}

\subsection{Improvement of Image Representation vs Alignment of Zero Shot Classification}

%Improving the uni-modal performance of CLIP as well as preserving the cross-modal alignment of CLIP are the main focus in this research. So 
We performed linear probe and zero-shot classification tasks on multiple datasets covering different visual recognition settings: ImageNet~\cite{deng2009imagenet}, CIFAR-10, CIFAR-100~\cite{krizhevsky2009learning}, Caltech101~\cite{fei2004learning}, Food101~\cite{bossard2014food}, Oxford Flowers~\cite{nilsback2008automated}, Oxford-IIIT Pet~\cite{parkhi2012cats}, and DTD~\cite{cimpoi2014describing}. The detailed results are shown in Table~\ref{tab:results_lp_zs}. As shown in Table~\ref{tab:results_lp_zs}(a), RP-KrossFuse improves the averaged linear probe accuracy over several image benchmarks from 83.3\% to 91.2\%. A more detailed breakdown of per-dataset image improvement is visualized in Figure~\ref{fig:zs_lp figure}(left) where we observe more than 10\% increases on 4 out of 8 datasets. Notably, on the ImageNet dataset, RP-KrossFuse irpved over CLIP from 73.2\% to 84.1\%, suggesting a considerable improvement in the image representation alignment with the actual label. On the other hand, the averaged zero-shot accuracy of RP-KrossFuse drops by 0.5\% compared to CLIP(see Table~\ref{tab:results_lp_zs}(a)). This is indeed expected because we involved uni-modal expert's embedding which is not aligned with the other modality whereas CLIP is trained to align image and text explicitly. Also, we note that most lower scores are underperforming by at most 1\% (on 7 out of 8 datasets), which are relatively outweighed by the gains in uni-modal representations shown in Figure~\ref{fig:zs_lp figure} (left). For example, for ImageNet where RP-KrossFuse suffers the gap of -1.57\%, the boost in uni-modal representation of 10. 90\% is the second highest accuracy over the datasets.

\begin{table}[ht]
\small
\tabstyle{5pt}
\caption{Comparison of CLIP and RP-KrossFuse in the linear probe and zero-shot classification setting. LP: linear probe. ZS: zero shot.}
\label{tab:results_lp_zs}
\centering

\begin{minipage}{0.32\textwidth}
\centering
\textbf{(a) Average.}\\[2pt]
\begin{tabular}{l cc}
\toprule
& LP & ZS  \\
\midrule
CLIP & 83.3  &  69.4 \\
RP-KrossFuse & 91.2 & 68.9 \\
\midrule
$\Delta$ & +7.9 & -0.5 \\
\bottomrule
\end{tabular}
\end{minipage}\hfill
\begin{minipage}{0.32\textwidth}
\centering
\textbf{(b) ImageNet.}
\begin{tabular}{lcc}
\toprule
& LP & ZS  \\
\midrule
CLIP & 73.2 & 57.9  \\
RP-KrossFuse & 84.1 & 56.3 \\
\midrule
$\Delta$ & +10.9 & -1.6\\
\bottomrule
\end{tabular}
\end{minipage}\hfill
\begin{minipage}{0.32\textwidth}
\centering
\textbf{(c) CIFAR-10.}
\begin{tabular}{l cc}
\toprule
& LP & ZS  \\
\midrule
CLIP & 95.0 & 88.8  \\
RP-KrossFuse & 98.6 & 88.4 \\
\midrule
$\Delta$ & +3.6 & -0.4\\
\bottomrule
\end{tabular}
\end{minipage}
\par\medskip
\vspace{0.5cm}
\begin{minipage}{0.32\textwidth}
\centering
\textbf{(d) CIFAR-100.}
\begin{tabular}{l cc}
\toprule
& LP & ZS  \\
\midrule
CLIP & 80.0 & 61.7  \\
RP-KrossFuse & 90.5 & 61.0 \\
\midrule
$\Delta$ & +10.5 & -0.7 \\
\bottomrule
\end{tabular}
\end{minipage}\hfill
\begin{minipage}{0.32\textwidth}
\centering
\textbf{(e) Caltech101.}
\begin{tabular}{l cc}
\toprule
& LP & ZS  \\
\midrule
CLIP & 92.6 & 85.3  \\
RP-KrossFuse & 95.6 & 85.2 \\
\midrule
$\Delta$ & +3.0 & -0.1 \\
\bottomrule
\end{tabular}
\end{minipage}\hfill
\begin{minipage}{0.32\textwidth}
\centering
\textbf{(f) Food101.}
\begin{tabular}{l cc}
\toprule
& LP & ZS  \\
\midrule
CLIP & 87.3 & 79.2  \\
RP-KrossFuse & 90.9 & 78.5 \\
\midrule
$\Delta$ & +3.6 & -0.7 \\
\bottomrule
\end{tabular}
\end{minipage}
\par\medskip
\vspace{0.5cm}

\begin{minipage}{0.32\textwidth}
\centering
\textbf{(g) Oxford Flowers.}
\begin{tabular}{l cc}
\toprule
& LP & ZS  \\
\midrule
CLIP & 84.9 &  63.5 \\
RP-KrossFuse & 99.7 & 62.8 \\
\midrule
$\Delta$ & +14.8 & -0.7 \\
\bottomrule
\end{tabular}
\end{minipage}\hfill
\begin{minipage}{0.32\textwidth}
\centering
\textbf{(h) Oxford-IIIT Pet.}
\begin{tabular}{l cc}
\toprule
& LP & ZS  \\
\midrule
CLIP & 87.9 & 75.5  \\
RP-KrossFuse & 95.3 & 75.7 \\
\midrule
$\Delta$ & +7.4 & +0.2 \\
\bottomrule
\end{tabular}
\end{minipage}\hfill
\begin{minipage}{0.32\textwidth}
\centering
\textbf{(i) DTD.}
\begin{tabular}{l cc}
\toprule
& LP & ZS  \\
\midrule
CLIP & 65.3 &  43.0 \\
RP-KrossFuse & 75.2 & 43.3 \\
\midrule
$\Delta$ & +9.9 & +0.3 \\
\bottomrule
\end{tabular}
\end{minipage}
\end{table}

\begin{figure}[h]
% \vskip -0.2in
\begin{center}
\centerline{\includegraphics[width=14cm]{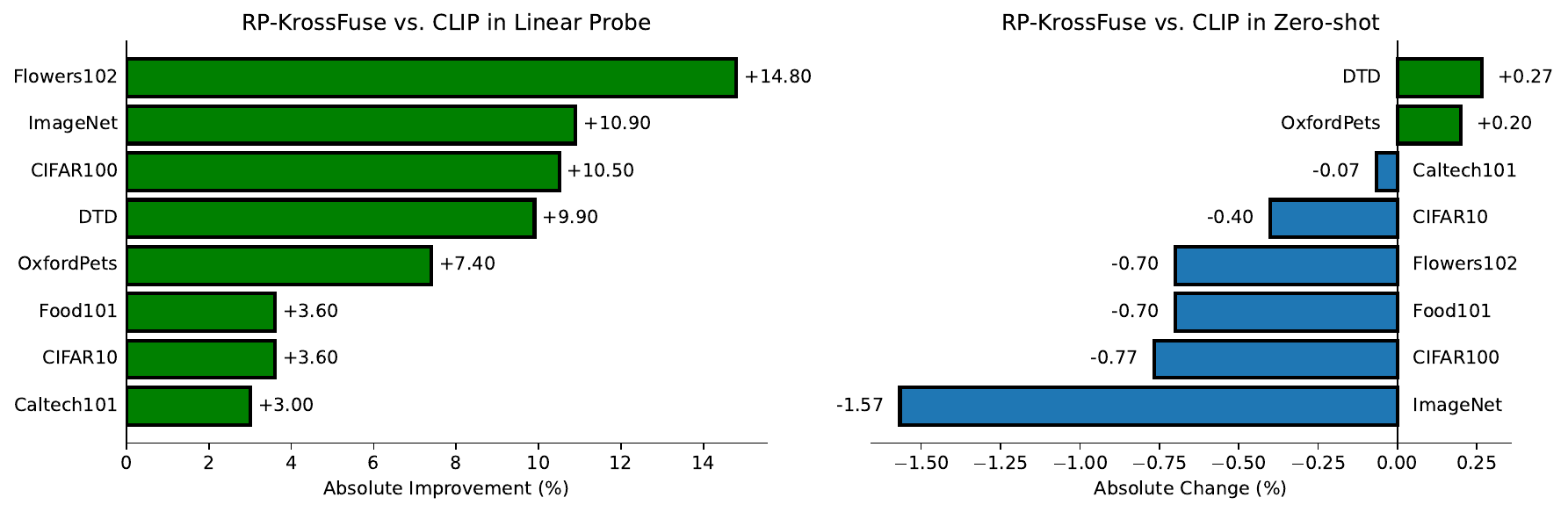}}
\caption{Comprehensive comparisons of RP-KrossFuse and CLIP in the linear probe and zero shot classification settings. (Left) RP-KrossFUse is able to gain consistent improvements over CLIP in linear probe on all datasets. (Right) RP-KrossFuse’s declines in zero shot accuracy are mostly under 1\%, which
are far outweighed by the gains in linear probe.}
\label{fig:zs_lp figure}
\end{center}
% \vskip -0.2in
\end{figure}

\subsection{Analysis of Ablation Studies}
To test the effect of the different RP-KrossFuse components, we evaluated the classification accuracy on ImageNet and the average accuracy across seven NLP benchmarks in SentEval. 

\begin{figure}
% \vskip -0.1in
\begin{center}
\centerline{\includegraphics[width=15cm]{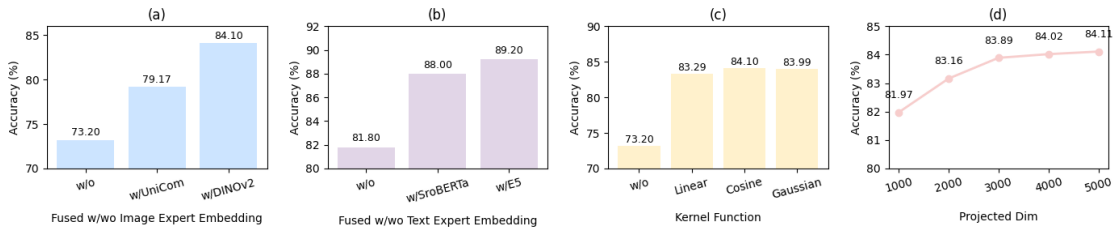}}
\caption{Ablation Studies. (a) (b) Effect of fusing different image and text expert embeddings. (c) Effect of kernel function. (d) Effect of random projected dimension.}
\label{fig:appendix ablation study}
\end{center}
\vskip -0.1in
\end{figure}
 
\textbf{Effect of Image Expert Embeddings.}
To assess the impact of incorporating additional image expert embeddings on the performance of the image modality, we conducted a linear probe on ImageNet by fusing the CLIP embedding with UniCom~\cite{an2023unicom} as an alternative image expert. As illustrated in Figure~\ref{fig:appendix ablation study}(a), integrating UniCom with CLIP increases the accuracy from 73.2\% to 79.17\%, demonstrating nearly 6\% improvement over using CLIP alone. Furthermore, fusing CLIP with DINOv2 yields even higher accuracy than with UniCom, indicating that the effectiveness of RP-KrossFuse is closely related to the quality of the expert embeddings employed. Specifically, since DINOv2 provides stronger representations than UniCom, the overall performance gain is more pronounced when DINOv2 is used as the expert embedding.

\textbf{Effect of Text Expert Embeddings.}
To evaluate the generalizability of our approach with different text experts, we replace RoBERTa with E5 in the RP-KrossFuse framework and conduct linear probe experiments using the SentEval toolkit on seven NLP benchmarks: MR~\cite{pang2005seeing}, CR~\cite{hu2004mining}, SUBJ~\cite{pang2004sentimental}, MPQA~\cite{wiebe2005annotating}, SST2~\cite{socher2013recursive}, TREC~\cite{voorhees2000building}, and MRPC~\cite{dolan2005automatically}. As shown in Table~\ref{E5-results}, incorporating either RoBERTa or E5 as an additional text expert consistently improves the performance over the original CLIP text encoder, with E5 yielding an average accuracy gain of 7.4\%. These results demonstrate that RP-KrossFuse can effectively leverage various strong text experts to enhance text representation.

\begin{table}[h]
\caption{Linear probe evaluation of CLIP, E5, SRoBERTa and RP-KrossFuse using the SentEval toolkit on text benchmarks. Test accuracy (\%) are based on a 5-fold cross-validation. CLIP+E5 and CLIP+SRoBERTa are RP-KrossFuse methods with different text embeddings.}
\label{E5-results}
% \vskip 0.15in
\begin{adjustbox}{width=\textwidth}
\begin{tabular}{clcccccccc}
\toprule
 Embedding & MR & CR  & SUBJ & MPQA & SST2 & TREC & MRPC  & Avg \\ 
\midrule
 CLIP  & 75.8 & 83.1 & 92.5 & 86.4 & 82.0 & 83.0 & 70.1 & 81.8 \\
\midrule
E5 & 87.1 & 91.1 & 94.7 & 90.2 & 92.0 & 95.7 & 73.1 & 89.1  \\
\rowcolor{gray!20}
CLIP+E5 & 86.8 & 91.3 & 94.3 & 90.5 & 91.9 & 95.6 & 74.1 & 89.2  \\
\midrule
SRoBERTa & 85.1 & 86.8 & 93.7 & 87.7 & 89.1 & 93.2 & 68.1 & 86.2  \\
\rowcolor{gray!20}
CLIP+SRoBERTa & 85.8 & 88.7 & 94.4 & 89.1 & 89.7 & 95.0 & 73.6  & 88.0 \\
\bottomrule
\end{tabular}
\end{adjustbox}
% \vskip -0.1in
\end{table}

\textbf{Effect of Cross-modal Embeddings.}
To assess the generalization capability of RP-KrossFuse in enhancing unimodal performance while maintaining cross-modal alignment, we further conducted linear probe and zero-shot experiments on the ImageNet dataset using several multimodal embedding baselines. 
As shown in Table~\ref{tab:lp_vs_zeroshot}, RP-KrossFuse consistently improves linear probe accuracy across different backbones, indicating stronger unimodal representation, while maintaining comparable zero-shot performance, demonstrating preserved cross-modal alignment. 
These results highlight the generality and effectiveness of RP-KrossFuse when applied to various cross-modal embedding architectures.

\begin{table}[ht]
\centering
\caption{Comparison of linear probe and zero-shot accuracy (\%) between baseline cross-modal embeddings and their RP-KrossFuse counterparts on ImageNet.}
\label{tab:lp_vs_zeroshot}
\footnotesize
\setlength{\tabcolsep}{6pt}
\begin{tabular}{lcc|cc}
\toprule
\multirow{2}{*}{Embedding Model} & \multicolumn{2}{c|}{Linear Probe} & \multicolumn{2}{c}{Zero-Shot} \\
\cmidrule(r){2-3} \cmidrule(l){4-5}
 & Baseline & RP-KrossFuse & Baseline & RP-KrossFuse \\
\midrule
SigLIP~\cite{zhai2023sigmoid} & 81.7 & \textbf{83.7} & 73.1 & 72.8 \\
MobileCLIP~\cite{vasu2024mobileclip} & 82.3 & \textbf{84.5} & 71.1 & 70.9 \\
OpenVision~\cite{li2025openvision} & 79.3 & \textbf{84.2} & 65.9 & 65.1 \\
\bottomrule
\end{tabular}
\end{table}

\textbf{Effect of Kernel Function.}
 To validate the role of kernel function in RP-KrossFuse framework, we conducted linear probe experiments among different kernel functions. As shown in Figure~\ref{fig:appendix ablation study}(c), those kernel-based fusions can led to almost 10\% classification accuracy improvements on ImageNet dataset, with the Cosine and RBF kernels yielding slightly higher gains than the linear kernel. In the main text, we use cosine kernel for classification and RBF kernel for clustering.

\textbf{Effect of Random Projected Dimension.}
We investigate how the choice of random projection dimension affects the performance of RP-KrossFuse. As shown in Figure~\ref{fig:appendix ablation study}(d), increasing the projection dimension leads to a steady improvement in classification accuracy, which gradually saturates as the dimension becomes larger. Notably, when the dimension reaches around 3000, the performance of RP-KrossFuse closely approaches that of the KrossFuse, indicating that a sufficiently high projection dimension can effectively preserve the information required for optimal fusion.

\textbf{Effect of Hyperparameter $C$.}
The constant $C$ in Equation (3) balances the influence of uni-modal embeddings on the fused shared-modality kernel similarity and maintains balance between similarity scores for shared and non-shared modalities. Specifically, when considering images and text as shared and non-shared modalities, a smaller $C$ increases the weight of the image uni-modal embedding, but reduces the balance between (image, image), (image, text), and (text, text) similarity scores. To empirically validate this effect, we evaluated the cosine similarity distributions of image-text pairs on the MSCOCO validation set under different values of $C$. As shown in Figure~\ref{fig:appendix effect of c}, smaller values of $C$ lead to a slightly larger separation between the similarity curves of CLIP and RP-KrossFuse, whereas larger $C$ results in more overlapping curves, indicating improved cross-modal alignment.

\begin{figure}
% \vskip -0.1in
\begin{center}
\centerline{\includegraphics[width=10cm]{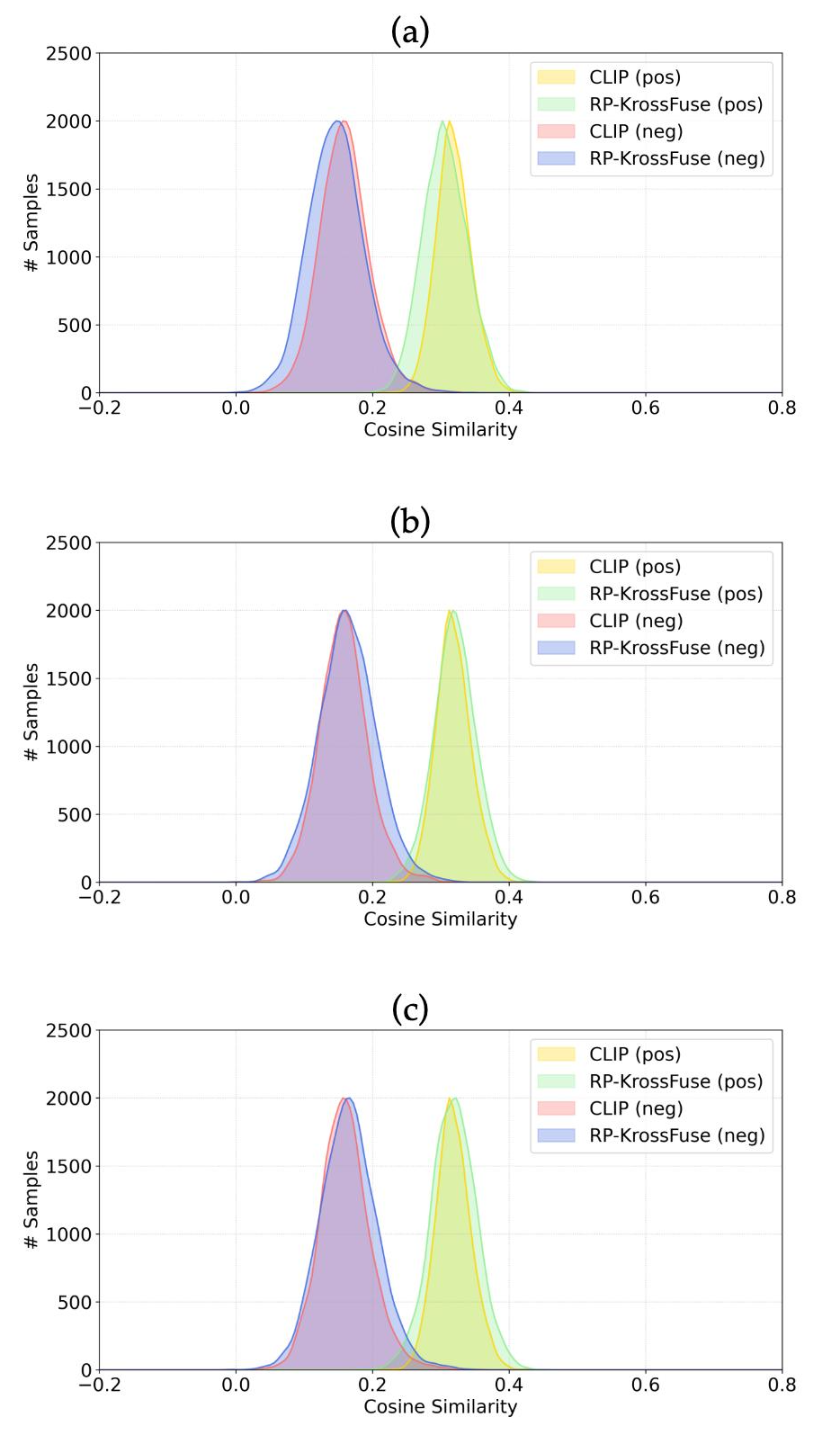}}
\caption{The cosine similarity distributions of positive/negative image text pairs on MSCOCO dataset across various values of hyperparameter $C$. (a) $C=0.1$. (b) $C=0.5$. (c) $C=1$.  }
\label{fig:appendix effect of c}
\end{center}
\vskip -0.1in
\end{figure}

\subsection{Computational Efficiency of RP-KrossFuse}
\label{app:efficiency}

We provide additional analysis of the computational efficiency of our proposed RP-KrossFuse scheme. 
Theoretically, RP-KrossFuse reduces the computational complexity from $\mathcal{O}(d_1 d_2l)$ (naive random projection) to $\mathcal{O}((d_1 + d_2)l)$ by leveraging structured random projections, which enables more scalable multimodal fusion. 
Regarding inference efficiency, it is important to note that, assuming sufficient memory, our method allows for the two embedding models to operate in parallel, thereby avoiding increased inference latency. To support this, we conducted benchmarking experiments on two NVIDIA RTX 3090 GPUs with a batch size of 128. The original KrossFuse pipeline required 491.3 ms per batch and 18.406 GB of peak memory, while RP-KrossFuse (with 
$l=5000$) achieved comparable performance with just 413.1 ms per batch and a significantly reduced memory footprint of 3.39 GB. Notably, the fusion step’s overhead was reduced from 22.0 ms to only 0.14 ms. For reference, the baseline CLIP and DINOv2 models individually required 73.6 ms and 392.1 ms per batch, respectively.

\begin{table}[h]
\centering
\caption{Benchmarking of inference and fusion efficiency.}
\label{tab:fusion_efficiency}
\footnotesize
\setlength{\tabcolsep}{6pt}
\begin{tabular}{lcc}
\toprule
Method & Time/Batch (ms) & Peak Memory (GB) \\
\midrule
CLIP & 73.6 & 2.077 \\
DINOv2 & 392.1 & 3.199 \\
KrossFuse Pipeline & 491.3 & 18.406 \\
KrossFuse (Overhead) & 22.0 & 17.203 \\
Parallel RP-KrossFuse Pipeline & 413.1 & 3.387 \\
RP-KrossFuse (Overhead) & 0.14 & 1.729 \\
\bottomrule
\end{tabular}
\end{table}

\subsection{Concatenation-Only Baseline}
\label{app:concat_baseline}

We additionally analyze a concatenation-only baseline to better understand the advantages of the proposed RP-KrossFuse fusion strategy. 
From a theoretical perspective, concatenating two embeddings corresponds to defining a kernel similarity function that is the \textit{sum} of the individual kernels. 
This formulation implies that two samples will have high similarity in the fused space only if both original embeddings assign them high similarity, effectively modeling the \textit{intersection} of the semantic concepts captured by each modality. 
In contrast, the Kronecker-based fusion employed in RP-KrossFuse leads to a \textit{product} of kernel similarities, which enables distinguishing two samples as long as either embedding distinguishes them, thereby modeling the \textit{union} of their semantic representations.

In supervised classification settings, the concatenation baseline is expected to perform comparably to the stronger of the individual embeddings. 
This is because the concatenated feature vector preserves the full representational capacity of both embeddings, while the limited VC dimension of the linear probe mitigates overfitting. 
However, in unsupervised or weakly supervised scenarios such as clustering, the product-based Kronecker fusion can better capture complementary cross-modal structures, leading to improved representation quality and broader concept coverage.

Empirically, we compared the concatenation baseline with RP-KrossFuse across multiple tasks. 
For supervised ImageNet linear probing, both methods achieve similar performance (concatenation: 83.6\%, RP-KrossFuse: 84.1\%), which aligns with theoretical expectations. 
In multimodal classification tasks, RP-KrossFuse consistently outperforms the concatenation baseline on the MVSA dataset~\cite{niu2016sentiment}, achieving 74.3\% vs.\ 71.3\% on MVSA-Single and 66.6\% vs.\ 63.7\% on MVSA-Multiple. 
The largest gains are observed in unsupervised clustering tasks, where RP-KrossFuse shows clear improvements across datasets, as summarized in Table~\ref{tab:concat_clustering}.

\begin{table}[h]
\centering
\caption{Comparison between the concatenation-only baseline and RP-KrossFuse on unsupervised clustering tasks, evaluated using NMI, AMI, and ARI metrics (\%).}
\label{tab:concat_clustering}
\footnotesize
\setlength{\tabcolsep}{5pt}
\begin{tabular}{llccccc}
\toprule
Method & Metric & Flowers102 & DTD & ImageNet-Dogs & GTSRB & Typo-Attacked IN \\
\midrule
Concatenation & NMI & 98.4 & 60.5 & 76.3 & 43.8 & 44.4 \\
RP-KrossFuse & NMI & \textbf{99.1} & \textbf{62.9} & \textbf{88.3} & \textbf{50.0} & \textbf{87.4} \\
Concatenation & AMI & 97.9 & 57.9 & 76.1 & 40.2 & 43.9 \\
RP-KrossFuse & AMI & \textbf{98.8} & \textbf{60.4} & \textbf{88.2} & \textbf{46.7} & \textbf{87.3} \\
Concatenation & ARI & 93.9 & 35.3 & 64.9 & 14.1 & 28.4 \\
RP-KrossFuse & ARI & \textbf{97.0} & \textbf{36.4} & \textbf{86.3} & \textbf{19.5} & \textbf{79.6} \\
\bottomrule
\end{tabular}
\end{table}

\subsection{Application of RP-KrossFuse in Text-to-Image Diffusion Models}
\label{app:diffusion}

To further demonstrate the practical utility of the proposed RP-KrossFuse embedding fusion, we apply it to conditional image generation with text-to-image diffusion models. 
Specifically, we adapt the Vendi Score Guidance (VSG) framework proposed by Askari \textit{et al.}~\cite{askari2024improving}, which guides the reverse diffusion process using the Vendi diversity score of the generated samples. 
In the original implementation, CLIP embeddings were used to compute the Vendi diversity score. 
We replace this embedding with the Kronecker-fused representation of DINOv2 and CLIP, thereby enriching the diversity guidance with complementary visual and multimodal semantics.

Experiments were conducted on the ImageNet dataset using a class-conditional Diffusion Transformer (DiT-XL/2) as the backbone. 
Two guidance configurations were compared: (1) the original CLIP-based contextualized Vendi Score Guidance (c-VSG), and (2) our KrossFuse-based version, which employs the fused CLIP and DINOv2 embeddings. 
As reported in Table~\ref{tab:diffusion_guidance}, incorporating RP-KrossFuse improves both diversity and fidelity of the generated samples. 
Specifically, the diversity metrics—Recall~\cite{kynkaanniemi2019improved} and Coverage~\cite{naeem2020reliable}—increase notably, while the quality metrics—Precision~\cite{kynkaanniemi2019improved} and Density~\cite{naeem2020reliable}—also improve over the CLIP-only baseline.

\begin{table}[h]
\centering
\caption{Comparison of Vendi Score diversity guidance methods for text-to-image diffusion on ImageNet using DiT-XL/2. Metrics follow~\cite{kynkaanniemi2019improved,naeem2020reliable}.}
\label{tab:diffusion_guidance}
\footnotesize
\setlength{\tabcolsep}{8pt}
\begin{tabular}{lcccc}
\toprule
Diversity Guidance Method & Precision & Recall & Density & Coverage \\
\midrule
c-VSG Guidance (CLIP) & 0.913 & 0.413 & 1.206 & 0.552 \\
KrossFuse (CLIP + DINOv2) & \textbf{0.932} & \textbf{0.484} & \textbf{1.252} & \textbf{0.613} \\
\bottomrule
\end{tabular}
\end{table}

%%%%%%%%%%%%%%%%%%%%%%%%%%%%%%%%%%%%%%%%%%%%%%%%%%%%%%%%%%%%
 % \fi

\clearpage
\newpage

\section*{NeurIPS Paper Checklist}

\begin{enumerate}

\item {\bf Claims}
    \item[] Question: Do the main claims made in the abstract and introduction accurately reflect the paper's contributions and scope?
    \item[] Answer: \answerYes{} % Replace by \answerYes{}, \answerNo{}, or \answerNA{}.
    \item[] Justification: The abstract and introduction clearly summarize our proposed method and its contributions. These claims are directly supported by our experimental findings across multiple benchmarks.

    \item[] Guidelines:
    \begin{itemize}
        \item The answer NA means that the abstract and introduction do not include the claims made in the paper.
        \item The abstract and/or introduction should clearly state the claims made, including the contributions made in the paper and important assumptions and limitations. A No or NA answer to this question will not be perceived well by the reviewers. 
        \item The claims made should match theoretical and experimental results, and reflect how much the results can be expected to generalize to other settings. 
        \item It is fine to include aspirational goals as motivation as long as it is clear that these goals are not attained by the paper. 
    \end{itemize}

\item {\bf Limitations}
    \item[] Question: Does the paper discuss the limitations of the work performed by the authors?
    \item[] Answer: \answerYes{} % Replace by \answerYes{}, \answerNo{}, or \answerNA{}.
    \item[] Justification: Our paper includes a "Conclusion and Limitations" subsection in the main text.
    \item[] Guidelines:
    \begin{itemize}
        \item The answer NA means that the paper has no limitation while the answer No means that the paper has limitations, but those are not discussed in the paper. 
        \item The authors are encouraged to create a separate "Limitations" section in their paper.
        \item The paper should point out any strong assumptions and how robust the results are to violations of these assumptions (e.g., independence assumptions, noiseless settings, model well-specification, asymptotic approximations only holding locally). The authors should reflect on how these assumptions might be violated in practice and what the implications would be.
        \item The authors should reflect on the scope of the claims made, e.g., if the approach was only tested on a few datasets or with a few runs. In general, empirical results often depend on implicit assumptions, which should be articulated.
        \item The authors should reflect on the factors that influence the performance of the approach. For example, a facial recognition algorithm may perform poorly when image resolution is low or images are taken in low lighting. Or a speech-to-text system might not be used reliably to provide closed captions for online lectures because it fails to handle technical jargon.
        \item The authors should discuss the computational efficiency of the proposed algorithms and how they scale with dataset size.
        \item If applicable, the authors should discuss possible limitations of their approach to address problems of privacy and fairness.
        \item While the authors might fear that complete honesty about limitations might be used by reviewers as grounds for rejection, a worse outcome might be that reviewers discover limitations that aren't acknowledged in the paper. The authors should use their best judgment and recognize that individual actions in favor of transparency play an important role in developing norms that preserve the integrity of the community. Reviewers will be specifically instructed to not penalize honesty concerning limitations.
    \end{itemize}

\item {\bf Theory assumptions and proofs}
    \item[] Question: For each theoretical result, does the paper provide the full set of assumptions and a complete (and correct) proof?
    \item[] Answer: \answerYes{} % Replace by \answerYes{}, \answerNo{}, or \answerNA{}.
    \item[] Justification: Our paper provide complete proofs in the section A. of appendix.
    \item[] Guidelines:
    \begin{itemize}
        \item The answer NA means that the paper does not include theoretical results. 
        \item All the theorems, formulas, and proofs in the paper should be numbered and cross-referenced.
        \item All assumptions should be clearly stated or referenced in the statement of any theorems.
        \item The proofs can either appear in the main paper or the supplemental material, but if they appear in the supplemental material, the authors are encouraged to provide a short proof sketch to provide intuition. 
        \item Inversely, any informal proof provided in the core of the paper should be complemented by formal proofs provided in appendix or supplemental material.
        \item Theorems and Lemmas that the proof relies upon should be properly referenced. 
    \end{itemize}

\item {\bf Experimental result reproducibility}
    \item[] Question: Does the paper fully disclose all the information needed to reproduce the main experimental results of the paper to the extent that it affects the main claims and/or conclusions of the paper (regardless of whether the code and data are provided or not)?
    \item[] Answer: \answerYes{} % Replace by \answerYes{}, \answerNo{}, or \answerNA{}.
    \item[] Justification: Our paper ensures full reproducibility. The experiments details are provided in the appendix.
    \item[] Guidelines: 
    \begin{itemize}
        \item The answer NA means that the paper does not include experiments.
        \item If the paper includes experiments, a No answer to this question will not be perceived well by the reviewers: Making the paper reproducible is important, regardless of whether the code and data are provided or not.
        \item If the contribution is a dataset and/or model, the authors should describe the steps taken to make their results reproducible or verifiable. 
        \item Depending on the contribution, reproducibility can be accomplished in various ways. For example, if the contribution is a novel architecture, describing the architecture fully might suffice, or if the contribution is a specific model and empirical evaluation, it may be necessary to either make it possible for others to replicate the model with the same dataset, or provide access to the model. In general. releasing code and data is often one good way to accomplish this, but reproducibility can also be provided via detailed instructions for how to replicate the results, access to a hosted model (e.g., in the case of a large language model), releasing of a model checkpoint, or other means that are appropriate to the research performed.
        \item While NeurIPS does not require releasing code, the conference does require all submissions to provide some reasonable avenue for reproducibility, which may depend on the nature of the contribution. For example
        \begin{enumerate}
            \item If the contribution is primarily a new algorithm, the paper should make it clear how to reproduce that algorithm.
            \item If the contribution is primarily a new model architecture, the paper should describe the architecture clearly and fully.
            \item If the contribution is a new model (e.g., a large language model), then there should either be a way to access this model for reproducing the results or a way to reproduce the model (e.g., with an open-source dataset or instructions for how to construct the dataset).
            \item We recognize that reproducibility may be tricky in some cases, in which case authors are welcome to describe the particular way they provide for reproducibility. In the case of closed-source models, it may be that access to the model is limited in some way (e.g., to registered users), but it should be possible for other researchers to have some path to reproducing or verifying the results.
        \end{enumerate}
    \end{itemize}

\item {\bf Open access to data and code}
    \item[] Question: Does the paper provide open access to the data and code, with sufficient instructions to faithfully reproduce the main experimental results, as described in supplemental material?
    \item[] Answer: \answerYes{} % Replace by \answerYes{}, \answerNo{}, or \answerNA{}.
    \item[] Justification: We submitted the code as the part of the supplementary materials.
    \item[] Guidelines:
    \begin{itemize}
        \item The answer NA means that paper does not include experiments requiring code.
        \item Please see the NeurIPS code and data submission guidelines (\url{https://nips.cc/public/guides/CodeSubmissionPolicy}) for more details.
        \item While we encourage the release of code and data, we understand that this might not be possible, so “No” is an acceptable answer. Papers cannot be rejected simply for not including code, unless this is central to the contribution (e.g., for a new open-source benchmark).
        \item The instructions should contain the exact command and environment needed to run to reproduce the results. See the NeurIPS code and data submission guidelines (\url{https://nips.cc/public/guides/CodeSubmissionPolicy}) for more details.
        \item The authors should provide instructions on data access and preparation, including how to access the raw data, preprocessed data, intermediate data, and generated data, etc.
        \item The authors should provide scripts to reproduce all experimental results for the new proposed method and baselines. If only a subset of experiments are reproducible, they should state which ones are omitted from the script and why.
        \item At submission time, to preserve anonymity, the authors should release anonymized versions (if applicable).
        \item Providing as much information as possible in supplemental material (appended to the paper) is recommended, but including URLs to data and code is permitted.
    \end{itemize}

\item {\bf Experimental setting/details}
    \item[] Question: Does the paper specify all the training and test details (e.g., data splits, hyperparameters, how they were chosen, type of optimizer, etc.) necessary to understand the results?
    \item[] Answer: \answerYes{} % Replace by \answerYes{}, \answerNo{}, or \answerNA{}.
    \item[] Justification:  Details are given in Sec. B. of the appendix.
    \item[] Guidelines:
    \begin{itemize}
        \item The answer NA means that the paper does not include experiments.
        \item The experimental setting should be presented in the core of the paper to a level of detail that is necessary to appreciate the results and make sense of them.
        \item The full details can be provided either with the code, in appendix, or as supplemental material.
    \end{itemize}

\item {\bf Experiment statistical significance}
    \item[] Question: Does the paper report error bars suitably and correctly defined or other appropriate information about the statistical significance of the experiments?
    \item[] Answer: \answerNo{} % Replace by \answerYes{}, \answerNo{}, or \answerNA{}.
    \item[] Justification: We report the average performance over multiple runs, but do not include error bars or statistical significance tests. The observed variance across runs was small, so we focused on reporting mean results for clarity.
    \item[] Guidelines:
    \begin{itemize}
        \item The answer NA means that the paper does not include experiments.
        \item The authors should answer "Yes" if the results are accompanied by error bars, confidence intervals, or statistical significance tests, at least for the experiments that support the main claims of the paper.
        \item The factors of variability that the error bars are capturing should be clearly stated (for example, train/test split, initialization, random drawing of some parameter, or overall run with given experimental conditions).
        \item The method for calculating the error bars should be explained (closed form formula, call to a library function, bootstrap, etc.)
        \item The assumptions made should be given (e.g., Normally distributed errors).
        \item It should be clear whether the error bar is the standard deviation or the standard error of the mean.
        \item It is OK to report 1-sigma error bars, but one should state it. The authors should preferably report a 2-sigma error bar than state that they have a 96\% CI, if the hypothesis of Normality of errors is not verified.
        \item For asymmetric distributions, the authors should be careful not to show in tables or figures symmetric error bars that would yield results that are out of range (e.g. negative error rates).
        \item If error bars are reported in tables or plots, The authors should explain in the text how they were calculated and reference the corresponding figures or tables in the text.
    \end{itemize}

\item {\bf Experiments compute resources}
    \item[] Question: For each experiment, does the paper provide sufficient information on the computer resources (type of compute workers, memory, time of execution) needed to reproduce the experiments?
    \item[] Answer: \answerYes{} % Replace by \answerYes{}, \answerNo{}, or \answerNA{}.
    \item[] Justification: All experiments are run on two RTX-4090 GPUs.
    \item[] Guidelines:
    \begin{itemize}
        \item The answer NA means that the paper does not include experiments.
        \item The paper should indicate the type of compute workers CPU or GPU, internal cluster, or cloud provider, including relevant memory and storage.
        \item The paper should provide the amount of compute required for each of the individual experimental runs as well as estimate the total compute. 
        \item The paper should disclose whether the full research project required more compute than the experiments reported in the paper (e.g., preliminary or failed experiments that didn't make it into the paper). 
    \end{itemize}
    
\item {\bf Code of ethics}
    \item[] Question: Does the research conducted in the paper conform, in every respect, with the NeurIPS Code of Ethics \url{https://neurips.cc/public/EthicsGuidelines}?
    \item[] Answer: \answerYes{} % Replace by \answerYes{}, \answerNo{}, or \answerNA{}.
    \item[] Justification: The paper conforms with the NeurIPS Code of Ethics and does not pose any potential harm.
    \item[] Guidelines:
    \begin{itemize}
        \item The answer NA means that the authors have not reviewed the NeurIPS Code of Ethics.
        \item If the authors answer No, they should explain the special circumstances that require a deviation from the Code of Ethics.
        \item The authors should make sure to preserve anonymity (e.g., if there is a special consideration due to laws or regulations in their jurisdiction).
    \end{itemize}

\item {\bf Broader impacts}
    \item[] Question: Does the paper discuss both potential positive societal impacts and negative societal impacts of the work performed?
    \item[] Answer: \answerNA{} % Replace by \answerYes{}, \answerNo{}, or \answerNA{}.
    \item[] Justification: We do not foresee any negative societal impacts beyond those generally associated with large-scale representation learning.
    \item[] Guidelines: 
    \begin{itemize}
        \item The answer NA means that there is no societal impact of the work performed.
        \item If the authors answer NA or No, they should explain why their work has no societal impact or why the paper does not address societal impact.
        \item Examples of negative societal impacts include potential malicious or unintended uses (e.g., disinformation, generating fake profiles, surveillance), fairness considerations (e.g., deployment of technologies that could make decisions that unfairly impact specific groups), privacy considerations, and security considerations.
        \item The conference expects that many papers will be foundational research and not tied to particular applications, let alone deployments. However, if there is a direct path to any negative applications, the authors should point it out. For example, it is legitimate to point out that an improvement in the quality of generative models could be used to generate deepfakes for disinformation. On the other hand, it is not needed to point out that a generic algorithm for optimizing neural networks could enable people to train models that generate Deepfakes faster.
        \item The authors should consider possible harms that could arise when the technology is being used as intended and functioning correctly, harms that could arise when the technology is being used as intended but gives incorrect results, and harms following from (intentional or unintentional) misuse of the technology.
        \item If there are negative societal impacts, the authors could also discuss possible mitigation strategies (e.g., gated release of models, providing defenses in addition to attacks, mechanisms for monitoring misuse, mechanisms to monitor how a system learns from feedback over time, improving the efficiency and accessibility of ML).
    \end{itemize}
    
\item {\bf Safeguards}
    \item[] Question: Does the paper describe safeguards that have been put in place for responsible release of data or models that have a high risk for misuse (e.g., pretrained language models, image generators, or scraped datasets)?
    \item[] Answer: \answerNA{} % Replace by \answerYes{}, \answerNo{}, or \answerNA{}.
    \item[] Justification: The paper poses no such risks.
    \item[] Guidelines:
    \begin{itemize}
        \item The answer NA means that the paper poses no such risks.
        \item Released models that have a high risk for misuse or dual-use should be released with necessary safeguards to allow for controlled use of the model, for example by requiring that users adhere to usage guidelines or restrictions to access the model or implementing safety filters. 
        \item Datasets that have been scraped from the Internet could pose safety risks. The authors should describe how they avoided releasing unsafe images.
        \item We recognize that providing effective safeguards is challenging, and many papers do not require this, but we encourage authors to take this into account and make a best faith effort.
    \end{itemize}

\item {\bf Licenses for existing assets}
    \item[] Question: Are the creators or original owners of assets (e.g., code, data, models), used in the paper, properly credited and are the license and terms of use explicitly mentioned and properly respected?
    \item[] Answer: \answerYes{} % Replace by \answerYes{}, \answerNo{}, or \answerNA{}.
    \item[] Justification: We cite all models and datasets we used in the paper.
    \item[] Guidelines:
    \begin{itemize}
        \item The answer NA means that the paper does not use existing assets.
        \item The authors should cite the original paper that produced the code package or dataset.
        \item The authors should state which version of the asset is used and, if possible, include a URL.
        \item The name of the license (e.g., CC-BY 4.0) should be included for each asset.
        \item For scraped data from a particular source (e.g., website), the copyright and terms of service of that source should be provided.
        \item If assets are released, the license, copyright information, and terms of use in the package should be provided. For popular datasets, \url{paperswithcode.com/datasets} has curated licenses for some datasets. Their licensing guide can help determine the license of a dataset.
        \item For existing datasets that are re-packaged, both the original license and the license of the derived asset (if it has changed) should be provided.
        \item If this information is not available online, the authors are encouraged to reach out to the asset's creators.
    \end{itemize}

\item {\bf New assets}
    \item[] Question: Are new assets introduced in the paper well documented and is the documentation provided alongside the assets?
    \item[] Answer: \answerNA{} % Replace by \answerYes{}, \answerNo{}, or \answerNA{}.
    \item[] Justification: The paper does not release new assets.
    \item[] Guidelines:
    \begin{itemize}
        \item The answer NA means that the paper does not release new assets.
        \item Researchers should communicate the details of the dataset/code/model as part of their submissions via structured templates. This includes details about training, license, limitations, etc. 
        \item The paper should discuss whether and how consent was obtained from people whose asset is used.
        \item At submission time, remember to anonymize your assets (if applicable). You can either create an anonymized URL or include an anonymized zip file.
    \end{itemize}

\item {\bf Crowdsourcing and research with human subjects}
    \item[] Question: For crowdsourcing experiments and research with human subjects, does the paper include the full text of instructions given to participants and screenshots, if applicable, as well as details about compensation (if any)? 
    \item[] Answer: \answerNA{} % Replace by \answerYes{}, \answerNo{}, or \answerNA{}.
    \item[] Justification: The paper does not involve crowdsourcing nor research with human subjects.
    \item[] Guidelines:
    \begin{itemize}
        \item The answer NA means that the paper does not involve crowdsourcing nor research with human subjects.
        \item Including this information in the supplemental material is fine, but if the main contribution of the paper involves human subjects, then as much detail as possible should be included in the main paper. 
        \item According to the NeurIPS Code of Ethics, workers involved in data collection, curation, or other labor should be paid at least the minimum wage in the country of the data collector. 
    \end{itemize}

\item {\bf Institutional review board (IRB) approvals or equivalent for research with human subjects}
    \item[] Question: Does the paper describe potential risks incurred by study participants, whether such risks were disclosed to the subjects, and whether Institutional Review Board (IRB) approvals (or an equivalent approval/review based on the requirements of your country or institution) were obtained?
    \item[] Answer: \answerNA{} % Replace by \answerYes{}, \answerNo{}, or \answerNA{}.
    \item[] Justification: The paper does not involve crowdsourcing nor research with human subjects.
    \item[] Guidelines:
    \begin{itemize}
        \item The answer NA means that the paper does not involve crowdsourcing nor research with human subjects.
        \item Depending on the country in which research is conducted, IRB approval (or equivalent) may be required for any human subjects research. If you obtained IRB approval, you should clearly state this in the paper. 
        \item We recognize that the procedures for this may vary significantly between institutions and locations, and we expect authors to adhere to the NeurIPS Code of Ethics and the guidelines for their institution. 
        \item For initial submissions, do not include any information that would break anonymity (if applicable), such as the institution conducting the review.
    \end{itemize}

\item {\bf Declaration of LLM usage}
    \item[] Question: Does the paper describe the usage of LLMs if it is an important, original, or non-standard component of the core methods in this research? Note that if the LLM is used only for writing, editing, or formatting purposes and does not impact the core methodology, scientific rigorousness, or originality of the research, declaration is not required.
    %this research? 
    \item[] Answer: \answerNA{} % Replace by \answerYes{}, \answerNo{}, or \answerNA{}.
    \item[] Justification: The core method development in this research does not involve LLMs as any important, original, or non-standard components.
    \item[] Guidelines:
    \begin{itemize}
        \item The answer NA means that the core method development in this research does not involve LLMs as any important, original, or non-standard components.
        \item Please refer to our LLM policy (\url{https://neurips.cc/Conferences/2025/LLM}) for what should or should not be described.
    \end{itemize}

\end{enumerate}

\end{document}